\definecolor{darkgreen}{rgb}{0,0.5,0}
\definecolor{darkred}{rgb}{0.7,0,0}
\definecolor{teal}{rgb}{0.3,0.8,0.8}
\newcommand{\kibitz}[2]{\ifnum\Comments=1\textcolor{#1}{\textsf{\footnotesize #2}}\fi}
\newenvironment{packed_item}{
  \begin{itemize}
    \setlength{\itemsep}{1pt}
    \setlength{\parskip}{-1pt}
    \setlength{\parsep}{0pt}
}{\end{itemize}}
\newcounter{qcounter}
 {\end{list}}
\newcommand{\dfc}[1]{}
\newcommand{\mycomment}[1]{\textcolor{blue}{\slash\slash~#1}}
\newcommand{\prob}{\Delta}
\newcommand{\mdp}{\Mcal}
\newcommand{\emf}{q}
\newcommand{\decoder}{g^\star}
\newcommand{\horizon}{H}
\newcommand{\rewardf}{R}
\newcommand{\startdist}{\mu}
\newcommand{\observations}{\mathcal{X}}
\newcommand{\obs}{x}
\newcommand{\states}{\mathcal{S}}
\newcommand{\state}{s}
\newcommand{\actions}{\mathcal{A}}
\newcommand{\action}{a}
\newcommand{\policy}{\pi}
\newcommand{\policies}{\Pi}
\newcommand{\nspolicies}{\policies_{\textrm{NS}}}
\newcommand{\homingp}{\pi^\star}
\newcommand{\transition}{T}
\newcommand{\unf}{{\tt Unf}}
\newcommand{\step}{\tau}
\newcommand{\supp}{\textrm{supp}}
\newcommand{\errcsc}{\Delta_{csc}}
\newcommand{\errreg}{\Delta_{reg}}
\newcommand{\etamin}{\eta_{min}}
\newcommand{\nkid}{\textrm{N}_\textrm{KD}}
\newcommand{\nbkid}{\textrm{N}_\textrm{BD}}
\newcommand{\nfkid}{\textrm{N}_\textrm{FD}}
\newcommand{\timestep}{\tau}
\newcommand{\homingalg}{{\tt HOMER}}
\newcommand{\psdpalg}{{\tt PSDP}}
\newcommand{\gpsalg}{{\tt GPS}}
\newcommand{\oraclealg}{{\tt ExpOracle}}
\newcommand{\cpeoracle}{{\tt REG}}
\newcommand{\cboracle}{{\tt CB}}
\newcommand{\csoracle}{{\tt CSC}}
\newcommand{\fabs}{\phi}		%
\newcommand{\forabs}{\hat{\phi}^{(\textup{F})}_{h-1}}
\newcommand{\forabscurr}{\hat{\phi}^{(\textup{F})}_h}
\newcommand{\babs}{\phi'}		%
\newcommand{\backabs}{\hat{\phi}^{(\textup{B})}_h}
\newcommand{\backwardabs}{\phi^\star_{\textup{back}}}
\newcommand{\cover}{\Psi}
\newcommand{\prior}{\rho_h}
\newcommand{\marginal}{\mu_{h-1}}
\newcommand{\npolo}{n_{\textup{psdp}}}
\newcommand{\nevalo}{n_{\textup{eval}}}
\newcommand{\nreg}{n_{\textup{reg}}}
\newcommand{\npol}{n_{\textup{psdp}}}
\newcommand{\neval}{n_{\textup{eval}}}
\newcommand{\creg}{\textup{Time}_{\textup{reg}}}
\newcommand{\cpol}{\textup{Time}_{\textup{pol}}}
\newcommand{\ppo}{{\tt PPO}}
\newcommand{\ppornd}{{\tt PPO+RND}}
\newcommand{\aac}{{\tt A2C}}
\newcommand{\aacrnd}{{\tt A2C+RND}}
\newcommand{\du}{{\tt PCID}}
\newcommand\blfootnote[1]{%
  \begingroup
  \renewcommand\thefootnote{}\footnote{#1}%
  \addtocounter{footnote}{-1}%
  \endgroup
}
\newtheorem{assumption}{Assumption} 
\newcommand{\tikzmark}[1]{\tikz[overlay,remember picture] \node (#1) {};}
\newcommand*{\AddNote}[4]{%
    \begin{tikzpicture}[overlay, remember picture]
        \draw [decoration={brace,amplitude=0.5em},decorate,ultra thick,blue]
            ($(#3)!(#1.north)!($(#3)-(0,1)$)$) --  
            ($(#3)!(#2.south)!($(#3)-(0,1)$)$)
                node [align=center, text width=2.25cm, pos=0.5, anchor=west] {#4};
    \end{tikzpicture}
}%
\begin{document}

\title{Kinematic State Abstraction and Provably Efficient Rich-Observation Reinforcement Learning}

\date{}

\author[]{
Dipendra Misra}
\author[]{
Mikael Henaff}
\author[]{
Akshay Krishnamurthy}
\author[]{
John Langford}

\affil[]{Microsoft Research, New York, NY}
\maketitle
\blfootnote{\{dimisra, mihenaff, akshaykr, jcl\}@microsoft.com}
\vspace{-1cm}

\maketitle

\begin{abstract}
  We present an algorithm, $\homingalg$, for exploration and
  reinforcement learning in rich observation environments that are
  summarizable by an unknown latent state space. The algorithm
  interleaves representation learning to identify a new notion of
  \emph{kinematic state abstraction} with strategic exploration to
  reach new states using the learned abstraction. The algorithm
  provably explores the environment with sample complexity scaling
  polynomially in the number of latent states and the time horizon,
  and, crucially, with no dependence on the size of the observation
  space, which could be infinitely large.
This exploration
  guarantee further enables sample-efficient global policy
  optimization for any reward function.  On the computational side, we
  show that the algorithm can be implemented efficiently whenever
  certain supervised learning problems are tractable.  Empirically, we
  evaluate $\homingalg$ on a challenging exploration problem, where we
  show that the algorithm is exponentially more sample efficient than
  standard reinforcement learning baselines.
\end{abstract}

\section{Introduction}
\label{sec:intro}

Modern reinforcement learning applications call for agents that
operate directly from rich sensory information such as megapixel
camera images. This rich information enables representation of
detailed, high-quality policies and obviates the need for
hand-engineered features.  However, exploration in such settings is
notoriously difficult and, in fact, statistically intractable in
general~\citep{AuerRL,lattimore2012pac,krishnamurthy2016richobs}.
Despite this, many environments are highly structured and do admit
sample efficient algorithms~\citep{jiang2017contextual};  indeed, we
may be able to summarize the environment with a simple state space and
extract these states from raw observations. With such structure, we
can leverage techniques from the well-studied tabular setting to
explore the environment~\citep{hazan2018provably}, efficiently recover
the underlying dynamics~\citep{strehl2008analysis}, and optimize any
reward
function~\citep{kearns2002near,Rmax,DelayedQ,dann2017unifying,azar2017minimax,jin2018q}. But
can we learn to decode a simpler state from raw observations alone?

The main difficulty is that learning a state decoder, or a compact
representation, is intrinsically coupled with exploration. On one
hand, we cannot learn a high-quality decoder without gathering
comprehensive information from the environment, which may require a
sophisticated exploration strategy. On the other hand, we cannot
tractably explore the environment without an accurate decoder. These
interlocking problems constitute a central challenge in reinforcement
learning, and a provably effective solution remains elusive despite decades of
research ~\cite{Mccallum1996, Ravindran2004, Jong2005, Li06towardsa, bellemare2016unifying, nachum2018near} (See~\pref{sec:related} for a discussion of related work).

In this paper, we provide a solution for a significant sub-class of
problems known as Block Markov Decision Processes
(MDPs)~\cite{du2019provably}, in which the agent operates directly on
rich observations that are generated from a small number of unobserved
latent states. Our algorithm, $\homingalg$, learns a new reward-free
state abstraction called \emph{kinematic inseparability}, which it
uses to drive exploration of the environment. Informally, kinematic
inseparability aggregates observations that have the same forward and
backward dynamics. Shared backward dynamics crucially implies that a
single policy simultaneously maximizes the probability of observing a
set of kinematically inseparable observations, which is useful for
exploration. Shared forward dynamics is naturally useful for recovering
the latent state space and model. Perhaps most importantly, we show
that a kinematic inseparability abstraction can be recovered from a
bottleneck in a regressor trained on a contrastive estimation problem
derived from raw observations.

$\homingalg$ performs strategic exploration by training policies to
visit each kinematically inseparable abstract state, resulting in a
\emph{policy cover}.  These policies are constructed via a reduction
to contextual bandits~\citep{bagnell2004policy}, using a
dynamic-programming approach and a synthetic reward function that
incentivizes reaching an abstract state. Crucially, $\homingalg$
interleaves learning the state abstraction and policy cover in an
inductive manner: we use the policies learned from a coarse
abstraction to reach new states, which enables us to refine the state
abstraction and learn new policies
(See~\pref{fig:interleaving-process} for a schematic). Each process is
essential to the other. Once the policy cover is constructed, it can
be used to efficiently gather the information necessary to find a
near-optimal policy for any reward function.

\begin{figure*}
\centering
\includegraphics[scale=0.35]{./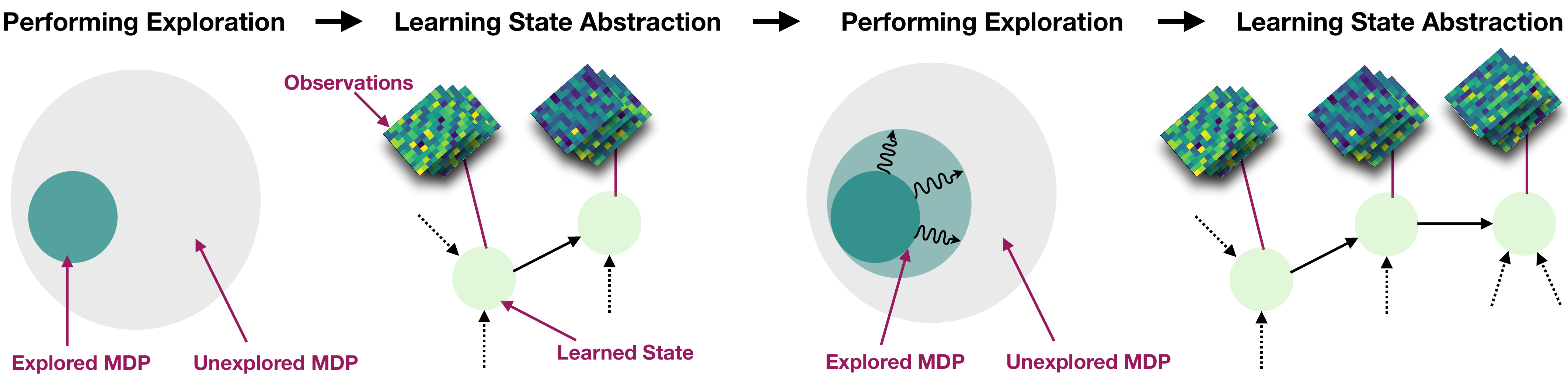}
\caption{$\homingalg$ learns a set of exploration policies and a state
  abstraction function by iterating between exploring using the
  current state abstraction and refining the state abstraction based
  on the new experience.}
\label{fig:interleaving-process}
\end{figure*}

We analyze the statistical and computational properties of
$\homingalg$ in episodic Block MDPs. We prove that $\homingalg$ learns
to visit every latent state and also learns a near-optimal policy for
any given reward function with a number of trajectories that is
polynomial in the number of latent states, actions, horizon, and the
complexity of two function classes used by the algorithm. There is no
explicit dependence on the observation space size.  The main
assumptions are that the latent states are reachable and that the
function classes are sufficiently expressive. There are no
identifiability or determinism assumptions beyond decodability of the
Block MDP, resulting in significantly greater scope than prior
work~\citep{du2019provably,dann2018oracle}.  On the computational
side, $\homingalg$ operates in a reductions model and can be
implemented efficiently whenever cetain supervised learning problems
are tractable.

Empirically, we evaluate $\homingalg$ on a challenging RL
problem with high-dimensional observations, precarious dynamics, and
sparse, misleading rewards. The problem is googal-sparse:
the probability of encountering an optimal reward through random
search is $10^{-100}$. $\homingalg$ recovers the underlying state
abstraction for this problem and consistently finds a near-optimal
policy, outperforming popular RL baselines that use naive exploration strategies~\citep{mnih16, schulman2017proximal} or more sophisticated exploration bonuses~\citep{burda2018exploration}, as well as the recent PAC-RL algorithm of~\citep{du2019provably}. %

\section{Preliminaries}

We consider reinforcement learning (RL) in episodic Block Markov
Decision Processes (Block MDP), first introduced
by~\citet{du2019provably}. A Block MDP $\mdp$ is described by a large
(possibly infinite) observation space $\observations$, a finite
unobservable state space $\states$, a finite set of actions
$\actions$, and a time horizon $H \in \NN$. The process has a starting
state distribution $\startdist \in \prob(\states)$\footnote{Du et
  al.~\cite{du2019provably} assume the starting state is
  deterministic, which we generalize here.}, transition function
$\transition: \states \times \actions \to \prob(\states)$, emission
function $\emf: \states \to \prob(\observations)$, and a reward
function $\rewardf: \observations \times \actions \times \observations
\to \prob([0,1])$. The agent interacts with the environment by
repeatedly generating $H$-step trajectories
$(s_1,x_1,a_1,r_1,\ldots,s_H,x_H,a_H,r_H)$ where $s_1 \sim
\startdist$, $s_{h+1} \sim \transition(\cdot | s_h,a_h)$, $x_h \sim
\emf(s_h)$ and $r_h \sim \rewardf(x_h,a_h, x_{h+1})$ for all $h \in
    [\horizon]$, and all actions are chosen by the agent. We set
    $\rewardf(x_\horizon, a_\horizon, x_{\horizon+1}) =
    \rewardf(x_\horizon, a_\horizon)$ for all $x_\horizon, a_\horizon$
    as there is no $\obs_{\horizon+1}$.
We assume that for any trajectory $\sum_{h=1}^\horizon r_h \le 1$. The
agent \emph{does not} observe the states $\state_1, \ldots, \state_H$.
As notation, we often denote sequences using the ``$:$'' operator,
e.g., $s_{1:H} = (\state_1,\ldots,\state_H)$.

Without loss of generality, we partition $\states$ into subsets
$\states_1,\ldots,\states_H$, where $\states_h$ are the only states
reachable at time step $h$.
We similarly partition $\observations$ based on time step into
$\observations_1, \ldots, \observations_\horizon$. Formally,
$\transition(\cdot \mid \state, \action) \in \Delta(\states_{h+1})$
and $\emf(\state) \in \Delta(\observations_{h})$ when $\state \in
\states_h$.  This partitioning may be internal to the agent as we can
simply concatenate the time step to the states and observations.  Let
$\step: \observations \to [H]$ be the time step function, associating
an observation to the time point where it is reachable.

A \emph{policy} $\policy: \observations \to \prob(\actions)$ chooses actions
on the basis of observations and defines a distribution over
trajectories. We use $\EE_\pi[\cdot], \PP_\pi[\cdot]$ to denote
expectation and probability with respect to this distribution. 
The goal of the agent is to find a policy that achieves high expected
reward. We define the value function and policy value as:
\begin{align*}
\forall h \in [H], \state \in \states_h: ~ V(s; \pi) \defeq \EE_\pi\sbr{\sum_{h'=h}^H r_{h'} \mid s_h = s}, \qquad V(\pi) \defeq \EE_{\pi}\sbr{\sum_{h=1}^H r_h} = \EE_{s_1 \sim \mu}\sbr{V(s_1;\pi)}.
\end{align*}
As the observation space is extremely large, we consider a function
approximation setting, where the agent has access to a policy class
$\policies: \observations \to \prob(\actions)$. We further define the
class of non-stationary policies $\nspolicies \defeq
\policies^\horizon$ to enable the agent to use a different policy for
each time step: a policy $\policy = \policy_{1:\horizon} = (\policy_1, \ldots,
\policy_\horizon) \in \nspolicies$ takes action $\action_h$ according
to $\policy_h$.\footnote{We will often consider a $h$-step
  non-stationary policy $(\policy_1, \ldots, \policy_h) \in
  \policies^h$ when we only plan to execute this policy for $h$
  steps.} The optimal policy in this class is $\policy^\star \defeq
\argmax_{\policy \in \nspolicies}V(\policy)$, and our goal is to find
a policy with value close to the optimal value, $V(\pi^\star)$.

\paragraph{Environment assumptions.}
The key difference between Block MDPs and general Partially-Observed
MDPs is a disjointness assumption, which removes partial observability
effects and enables tractable learning.

\begin{assumption}
\label{assumption:disjoint-emission} 
The emission distributions for any two states $\state, \state' \in \states$ are
disjoint, that is $\textrm{supp}(\emf(s)) \cap \textrm{supp}(\emf(s')) = \emptyset$
whenever $\state \ne \state'$.
\end{assumption}
This disjointness assumption is used by \citet{du2019provably}, who argue that it
is a natural fit for visual grid-world scenarios  such as in~\pref{fig:mdp-example-right}, which are common in empirical RL research.  The name ``Block MDP'' arises since each hidden state $\state$ emits observations from a disjoint
block $\observations_s \subseteq \observations$. 
The assumption allows us to define an \emph{inverse mapping}
$\decoder: \observations \to \states$ such that for each $\state \in
\states$ and $\obs \in \textrm{supp}(\emf(s))$, we have $\decoder(x) =
s$. The agent \emph{does not} have access to
$\decoder$. 

Apart from disjointness, the main environment assumption is that
states are reachable with reasonable probability. To formalize this,
we define a \emph{maximum visitation probability} and
\emph{reachability parameter}:
\begin{align*}
\eta(\state) \defeq \max_{\pi \in \nspolicies} \PP_\pi\sbr{s}, \qquad \etamin = \min_{\state \in \states} \eta(s).
\end{align*}
Here $\PP_\pi[s]$ is the probability of visiting $s$ along the
trajectory taken by $\pi$.  As in~\citet{du2019provably}, our sample
complexity scales polynomially with $\etamin^{-1}$, so this quantity
should be reasonably large. In contrast with prior
work~\citep{du2019provably, dann2018oracle}, we do not require any
further identifiability or determinism assumptions on the environment.

We call the policies that visit a particular state with maximum
probability \emph{homing policies}.

\begin{definition}[Homing Policy]\label{def:homing} 
The homing policy for an observation $\obs \in \observations$ is
$\policy_{\obs} = \arg\max_{\policy \in \nspolicies}
\PP_\policy\sbr{\obs}$. Similarly, the homing policy for a state
$\state \in \states$ is $\policy_{\state} \defeq \argmax_{\policy \in
  \nspolicies} \PP_\pi\sbr{s}$.
\end{definition}

In~\pref{app:appendix-homing}, we prove some interesting
properties for these policies.  One key property is their
\emph{non-compositional} nature. We cannot extend homing policies for
states in $\states_{h-1}$ to find homing policies for states in
$\states_h$. For example, for the Block MDP
in~\pref{fig:mdp-example-left}, the homing policy for $\state_5$ takes
action $\action_1$ in $\state_1$ but the homing policies for
$\state_2, \state_3$, and $\state_4$ do not take action $\action_1$ in
$\state_1$. Non-compositionality implies that we must take a global
policy optimization approach for learning homing policies, which we
will do in the sequel.

\paragraph{Reward-free learning.}
In addition to reward-sensitive learning, where the goal is to
identify a policy with near-optimal value $V(\pi)$, we also consider a
reward-free objective.  In this setting, the goal is to find a small
set of policies that can be used to visit the entire state space. If we had access to the set of homing policies for every state then this set would suffice. However, in practice we can only hope to learn an approximation. We capture this idea with an $\alpha$-\emph{policy cover}.

\begin{definition}[Policy Cover]\label{def:policy-cover}
A finite set of non-stationary policies $\Psi$ is called an $\alpha$-policy cover if
for every state $\state \in \states$ we have $\max_{\policy \in
  \Psi}\PP_\pi\sbr{s} \ge \alpha \eta(\state)$.
\end{definition}
Intuitively, we hope to find a policy cover $\Psi$ of size
$O(|\states|)$. By executing each policy in turn, we can collect a
dataset of observations and rewards from all states at which point it
is relatively straightforward to maximize any
reward~\citep{kakade2002approximately,munos2003error,bagnell2004policy,munos2008finite,antos2008learning,farahmand2010error,chen2019information,agarwal2019optimality}. Thus,
constructing a policy cover can be viewed as an intermediate objective
that facilitates reward sensitive learning.

\begin{figure}%
    \centering
    \subfloat[]{{\includegraphics[height=.24\linewidth]{./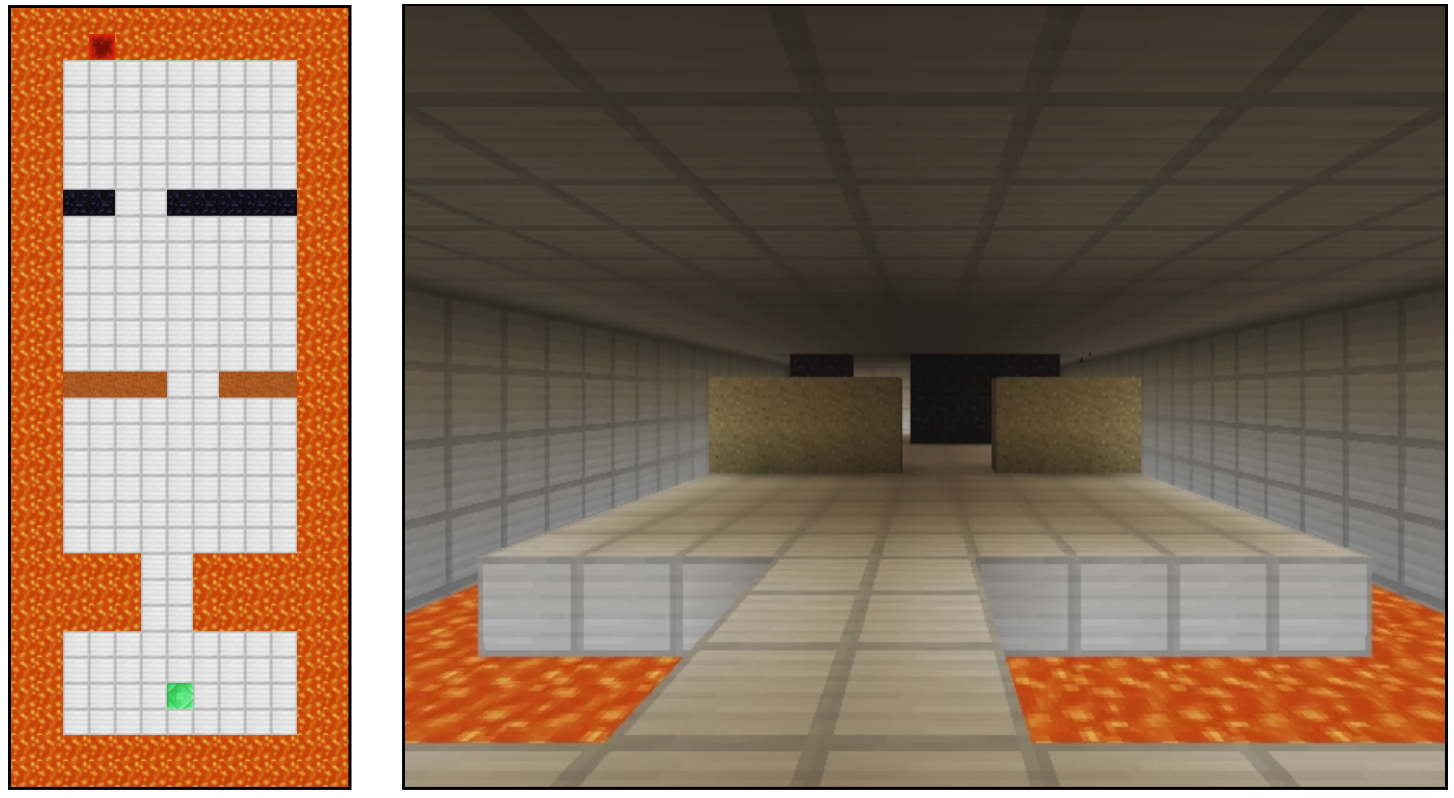} }\label{fig:mdp-example-right}}%
    \qquad
    \subfloat[]{{\includegraphics[height=.24\linewidth]{./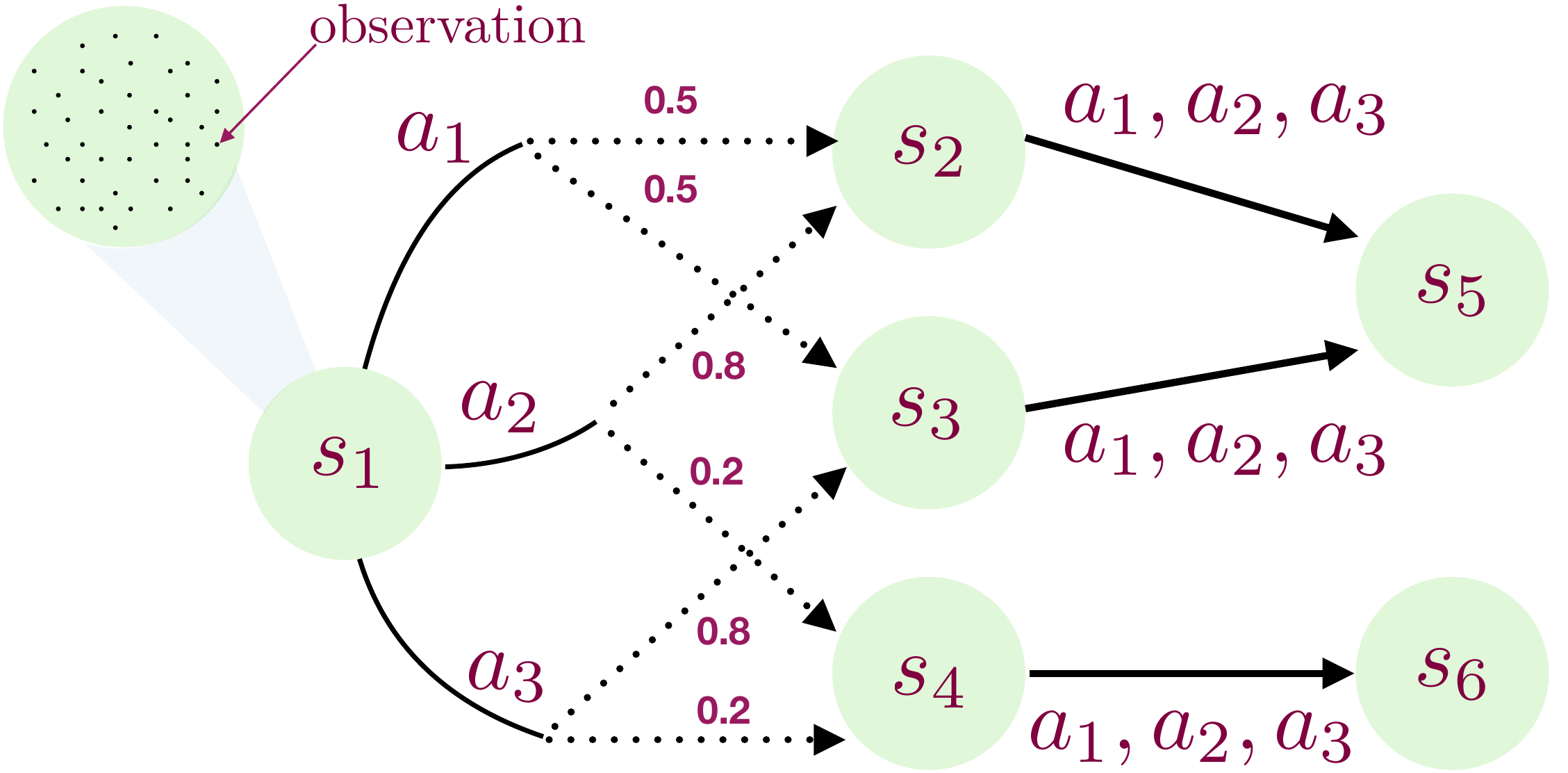}}\label{fig:mdp-example-left}}%
\vspace{-0.25cm}
    \caption{Two example Block MDPs. \textbf{Left:} An example from
      the Minecraft domain~\cite{johnson2016malmo}. The agent's
      observation is given by the history of all observed raw
      images. The grid on the left shows the latent state space
      structure. \textbf{Right:} The agent starts deterministically in
      $s_1$ and can take three different actions $\actions=\{a_1, a_2,
      a_3\}$. Dashed lines denote stochastic transitions while solid
      lines are deterministic. The numbers on each dashed arrow depict
      the transition probabilities.  We do not show observations for
      every state for brevity.}%
\label{fig:mdp-example}
\end{figure}

\paragraph{Function classes.}
In the Block MDP setting we are considering, the agent may never see
the same observation twice, so it must use function approximation to
generalize across observations. Our algorithm uses two function
classes. The first is simply the policy class $\Pi: \observations
\mapsto \prob(\actions)$, which was used above to define the optimal
value and the maximum visitation probabilities. For a simpler
analysis, we assume $\policies$ is finite and measure statistical
complexity via $\ln |\policies|$. However, our results only involve
standard uniform convergence arguments, so extensions to infinite
classes with other statistical complexity notions is
straightforward. We emphasize that $\policies$ is typically not fully
expressive.

We also use a family $\Fcal_N$ of regression functions with a specific
form.  To define $\Fcal_N$, first define $\Phi_N: \observations \to
[N]$ which maps observations into $N$ discrete abstract states.
Second, define $\Wcal_N: [N]\times\actions\times[N] \to [0,1]$ as
another ``tabular'' regressor class which consists of \emph{all}
functions of the specified type. Then, we set $\Fcal_N \defeq \{
(x,a,x') \mapsto w(\phi^{(\textup{F})}(x),a,\phi^{(\textup{B})}(x')):
w \in \Wcal_N, \phi^{(\textup{F})},\phi^{(\textup{B})} \in \Phi_N\}$
and $\Fcal \defeq \cup_{N \in \NN} \Fcal_N$. For the analysis, we
assume that $|\Phi_N|$ is finite and our bounds scale with $\log
|\Phi_N|$, which allows us to search over an exponentially large space
of abstraction functions. As $\Wcal_N$ is all functions over a
discrete domain, it has pointwise entropy growth rate of
$N^2|\actions|$ (see~\pref{app:supporting} for a formal definition),
and these two considerations determine the complexity of the
regression class $\Fcal_N$. As above, we remark that our results use
standard uniform convergence arguments, so it is straightforward to
extend to other notions.

\paragraph{Computational oracles.}
As we are working with large function classes, we consider an oracle
model of computation where we assume that these classes support
natural optimization primitives. This ``reductions'' approach
abstracts away computational issues and addresses the desired
situation where these classes are efficiently searchable. Note that
the oracle model provides no statistical benefit as the oracles can
always be implemented via enumeration; the model simply serves to guide
the design of practical algorithms.

Specifically, for the policy class $\policies$, we assume access to an
\emph{offline contextual bandit} optimization routine:
\begin{align*}
  \cboracle(D, \policies) \defeq \argmax_{\policy \in \policies} \sum_{(x,a,p,r) \in D} \EE_{a' \sim \pi(.|x)} \sbr{\frac{r \one\{a'=a\}}{p}}.
\end{align*}
This is a one-step importance weighted reward maximization problem,
which takes as input a dataset of $(x,a,p,r)$ quads, where $x \in
\observations$, $ a \in \actions$, $p \in [0,1]$ and $r \in \RR$ is
the reward for the action $a$, which was chosen with probability $p$.
This optimization arises in contextual bandit
settings~\citep{strehl2010learning,bottou2013counterfactual,swaminathan2015counterfactual},
and is routinely implemented via a further reduction to cost sensitive
classification~\cite{agarwal2014taming}.\footnote{We call this a
  contextual bandit oracle rather than a cost-sensitive classification
  oracle because the dataset is specified in contextual bandit format
  even though the oracle formally solves a cost-sensitive
  classification problem. The advantage is that in practice, we can
  leverage statistical improvements developed for contextual bandits,
  such as doubly-robust estimators~\cite{dudik2014doubly}. }

For the regression class $\Fcal_N$, we assume that we can solve
\emph{square loss minimization} problems:
\begin{align*}
  \cpeoracle(D, \Fcal_N) \defeq \argmin_{f \in \Fcal_N} \sum_{(x,a,x',y) \in D} (f(x,a,x') - y)^2.
\end{align*}
Here, the dataset consists of $(x,a,x',y)$ quads where $x,x' \in
\observations$, $a \in \actions$ and $y \in \{0,1\}$ is a binary
label. Square loss minimization is a standard optimization problem
arising in supervised learning, but note that our function class
$\Fcal_N$ is somewhat non-standard. In particular, even though square
loss regression is computationally tractable for convex classes, our
class $\Fcal$ is nonconvex as it involves quantization. On the other
hand, latent categorical models are widely used in
practice~\cite{jang2016categorical,hu2017learning}, which suggests
that these optimization problems are empirically tractable.

We emphasize that these oracle assumptions are purely computational
and simply guide the algorithm design.  In our experiments, we
instantiate both $\Pi$ and $\Fcal_N$ with neural networks, so both
oracles solve nonconvex problems. This nonconvexity does not hinder
the empirical effectiveness of the algorithm.

For running time calculations, we assume that a single call to
$\cboracle$ and $\cpeoracle$ with $n$ examples can be solved in
$\cpol(n)$ and $\creg(n)$ time, respectively.

\section{Kinematic Inseparability State Abstraction}
\label{sec:kinematic-inseparable}

The foundational concept for our approach is a new form of state
abstraction, called \emph{kinematic inseparability}. This abstraction
has two key properties. First, it can be learned via a reduction to
supervised learning, which we will discuss in detail
in~\pref{sec:learning-alg}. Second, it enables reward-free exploration
of the environment, studied in~\pref{sec:oracle-kinematic-insep}. In
this section, we present the key definitions, some interesting
properties, and some intuition.

For exploration, a coarser state abstraction, called \emph{backward
  kinematic inseparability}, is sufficient.

\begin{definition}[Backward Kinematic Inseparability]
\label{def:bki}
Two observations $\obs'_1, \obs'_2 \in \observations$ are \emph{backward
kinematically inseparable} (KI) if for all distributions $u \in
\Delta(\observations \times \actions)$ supported on
$\observations \times \actions$ and $\forall \obs \in \observations,
\action \in \actions$ we have
\begin{equation*}
\PP_{u}(\obs, \action \mid \obs'_1) = \PP_{u}(\obs, \action \mid \obs'_2), \qquad \mbox{where } \PP_{u}(\obs, \action \mid \obs') \defeq \frac{T(\obs' \mid \obs, \action)u(\obs, \action)}{\sum_{\tilde{\obs}, \tilde{\action}} T(\obs' \mid \tilde{\obs}, \tilde{\action}) u(\tilde{\obs}, \tilde{\action})}.
\end{equation*}
$\PP_{u}(\obs, \action \mid \obs')$ is the \emph{backward dynamics}
measuring the probability that the previous observation and action was
$(\obs, \action)$ given that the current observation is $\obs'$ and
the prior over $(\obs,\action)$ is $u$.
\end{definition}

The significance of backward KI is evident from the following lemma.
\begin{lemma} 
If $\obs_1, \obs_2$ are backward kinematic inseparable then for all
$\policy_1, \policy_2 \in \policies$ we have
$\frac{\PP_{\policy_1}(\obs_1)}{\PP_{\policy_2}(\obs_1)} =
\frac{\PP_{\policy_1}(\obs_2)}{\PP_{\policy_2}(\obs_2)}$.
\end{lemma}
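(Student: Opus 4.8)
The plan is to reduce the claimed ratio identity to a single structural fact: backward kinematic inseparability forces the observation-level transition kernels \emph{leading into} $\obs_1$ and $\obs_2$ (the pair denoted $\obs'_1,\obs'_2$ in \pref{def:bki}) to be proportional, with a proportionality constant that does not depend on the policy. Once this is in hand, the lemma is immediate, because the visitation probability $\PP_\policy(\obs')$ is a \emph{linear} functional of the transition kernel into $\obs'$.

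First I would extract the proportionality from \pref{def:bki}. Fix any prior $u \in \prob(\observations \times \actions)$ with full support. For every $(\obs,\action)$, the defining equality $\PP_u(\obs,\action \mid \obs_1) = \PP_u(\obs,\action \mid \obs_2)$ lets me cancel the common factor $u(\obs,\action) > 0$ and cross-multiply the two normalizers to obtain
$$T(\obs_1 \mid \obs,\action) = c\, T(\obs_2 \mid \obs,\action), \qquad c \defeq \frac{\sum_{\tilde\obs,\tilde\action} T(\obs_1 \mid \tilde\obs,\tilde\action)\, u(\tilde\obs,\tilde\action)}{\sum_{\tilde\obs,\tilde\action} T(\obs_2 \mid \tilde\obs,\tilde\action)\, u(\tilde\obs,\tilde\action)}.$$
The crucial point is that the left-hand side does not involve $u$, so $c$ is a single scalar valid simultaneously for all $(\obs,\action)$; summing the proportionality against $u$ then confirms that $c$ takes the same value for every full-support $u$, so it is a genuine constant of the pair $(\obs_1,\obs_2)$.

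Next I would substitute this into the visitation probability. Expanding over the previous observation-action pair,
$$\PP_\policy(\obs_i) = \sum_{\obs \in \observations}\sum_{\action \in \actions} \PP_\policy(\obs)\, \policy(\action \mid \obs)\, T(\obs_i \mid \obs,\action),$$
where terms with $\obs$ at the wrong time step vanish automatically. Replacing $T(\obs_1 \mid \obs,\action)$ by $c\, T(\obs_2 \mid \obs,\action)$ and pulling the constant out of the sum yields $\PP_\policy(\obs_1) = c\, \PP_\policy(\obs_2)$ for \emph{every} policy $\policy$. Instantiating this at $\policy = \policy_1$ and $\policy = \policy_2$ and dividing, the common factor $c$ cancels and leaves $\PP_{\policy_1}(\obs_1)/\PP_{\policy_2}(\obs_1) = \PP_{\policy_1}(\obs_2)/\PP_{\policy_2}(\obs_2)$, which is the claim.

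I expect the first step to be the main obstacle, since it is where all the care is needed: the cancellation of $u(\obs,\action)$ requires full support, and the cross-multiplication implicitly uses positivity of the normalizers, i.e.\ that $\obs_2$ (equivalently $\obs_1$) is reachable from at least one observation-action pair. If an observation is unreachable then both visitation probabilities are zero and the ratio identity is degenerate, so I would note that the meaningful regime has both observations reachable, consistent with the reachability assumption $\etamin > 0$. Finally I would dispatch the boundary case $h = 1$ separately: there $\obs_1,\obs_2$ arise from $\startdist$ and the emission rather than from a transition, so $\PP_\policy(\obs_i) = \startdist(\decoder(\obs_i))\,\emf(\obs_i \mid \decoder(\obs_i))$ is policy-independent and the ratio identity holds trivially.
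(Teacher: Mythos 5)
Your proposal is correct and follows essentially the same route as the paper's proof (Lemmas~\ref{lem:bki-ratio} and~\ref{lem:bki-policy-ratio} in~\pref{app:appendix-kinematic-inseparability}): cancel the full-support prior in~\pref{def:bki} to get a policy-independent proportionality $T(\obs_1 \mid \obs,\action) = c\,T(\obs_2\mid\obs,\action)$ (the paper writes $c = u(\obs_1)/u(\obs_2)$ with $u$ the induced marginal), then push this constant through the occupancy-measure expansion of $\PP_\policy(\cdot)$ so that it cancels in the ratio. Your explicit treatment of the degenerate reachability and $h=1$ cases is a small addition the paper leaves implicit, but the core argument is identical.
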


The proof of this lemma and all mathematical statements in this paper
are deferred to the appendices. At a high level, the lemma shows that
backward KI observations induce the same ordering over policies with
respect to visitation probability. This property is useful for
exploration, since a policy that maximizes the probability of visiting
an abstract state, also maximizes the probability of visiting each
individual observation in that abstract state
\emph{simultaneously}. Thus, if we train a policy to visit backward KI
abstract state $i$ for each $i$ --- which we can do in an inductive
manner with synthetic reward functions, as we will see in the next
subsection --- we guarantee that \emph{all} observations are visited
with high probability, so we have a policy cover. In this way, this
lemma helps establish that a backward KI abstraction enables
sample-efficient reward-free exploration.

While backward KI is sufficient for exploration, it ignores the
forward dynamics, which are useful for learning a model. This
motivates the definition of forward kinematic inseparability.

\begin{definition}[Forward Kinematic Inseparability] 
\label{def:fki}
Two observations $\obs_1, \obs_2 \in \observations$ are \emph{forward
kinematically inseparable} (KI) if for every $\obs' \in \observations$
and $\action \in \actions$ we have
\begin{equation*}
T(\obs' \mid \obs_1, \action) = T(\obs' \mid \obs_2, \action).
\end{equation*}
\end{definition}

Finally, observations are \emph{kinematically inseparable} if they
satisfy both of these definitions.
\begin{definition}[Kinematic Inseparability]
\label{def:ki}
Two observations $\obs'_1, \obs'_2$ are \emph{kinematically
  inseparable} if for every distribution $u \in \Delta(\observations
\times \actions)$ with support over $\observations \times \actions$
and for every $\obs,\obs'' \in \observations$ and $\action,\action' \in \actions$ we
have
\begin{equation*}
\PP_{u}(\obs, \action \mid \obs'_1) = \PP_{u}(\obs, \action \mid \obs'_2) \,\,\,\, \mbox{ and } \,\,\,\, T(\obs'' \mid \obs'_1, \action') = T(\obs'' \mid \obs'_2, \action').
\end{equation*}
\end{definition}

It is straightforward to verify that all three of these notions are
equivalence relations, and hence they partition the observation
space. The \emph{backward kinematic inseparability dimension}, denoted
$\nbkid$, is the size of the coarsest partition generated by the
backward KI equivalence relation, with $\nfkid$ and $\nkid$ defined
similarly for the forward KI and KI relations. We also use mappings
$\phi^\star_B,\phi^\star_F,\phi^\star: \observations \rightarrow \NN$
to denote these abstractions, for example $\phi^\star_B(\obs_1) =
\phi^\star_B(\obs_2)$ if and only if $\obs_1$ and $\obs_2$ are
backward KI.

In ~\pref{app:appendix-kinematic-inseparability}, we collect and prove
several useful properties of these state abstractions. Importantly, we
show that for Block MDPs, observations emitted from the same state are
kinematically inseparable and, hence, $\max\{\nfkid, \nbkid\} \le
\nkid \le |\states|$. Ideally, we would like $\nkid = |\states|$ so
that the abstract states are in correspondence with the real states of
the environment, and we could recover the true model by learning the
dynamics between abstract states. However, we may have $\nkid <
|\states|$, but only in cases where the true state space is
unidentifiable from observations.~\pref{fig:mdp-canonical} depicts
such an example. From the left panel, if we split $s_2$ into two
states $s_{2,a}, s_{2,b}$ with the same forward dynamics and
proportional backward dynamics, we obtain the structure in the right
panel. Note that these two Block MDPs are indistinguishable from
observations, so we say that the simpler one is the \emph{canonical
  form}.

\begin{definition}[Canonical Form] 
A Block MDP is in \emph{canonical form} if $\forall \obs_1,\obs_2 \in
\observations$: $\decoder(\obs_1) = \decoder(\obs_2)$ if and only if
$\obs_1$ and $\obs_2$ are kinematically inseparable.
\end{definition}

\begin{figure}[t]
    \centering
    \subfloat[]{{\includegraphics[height=.15\linewidth]{./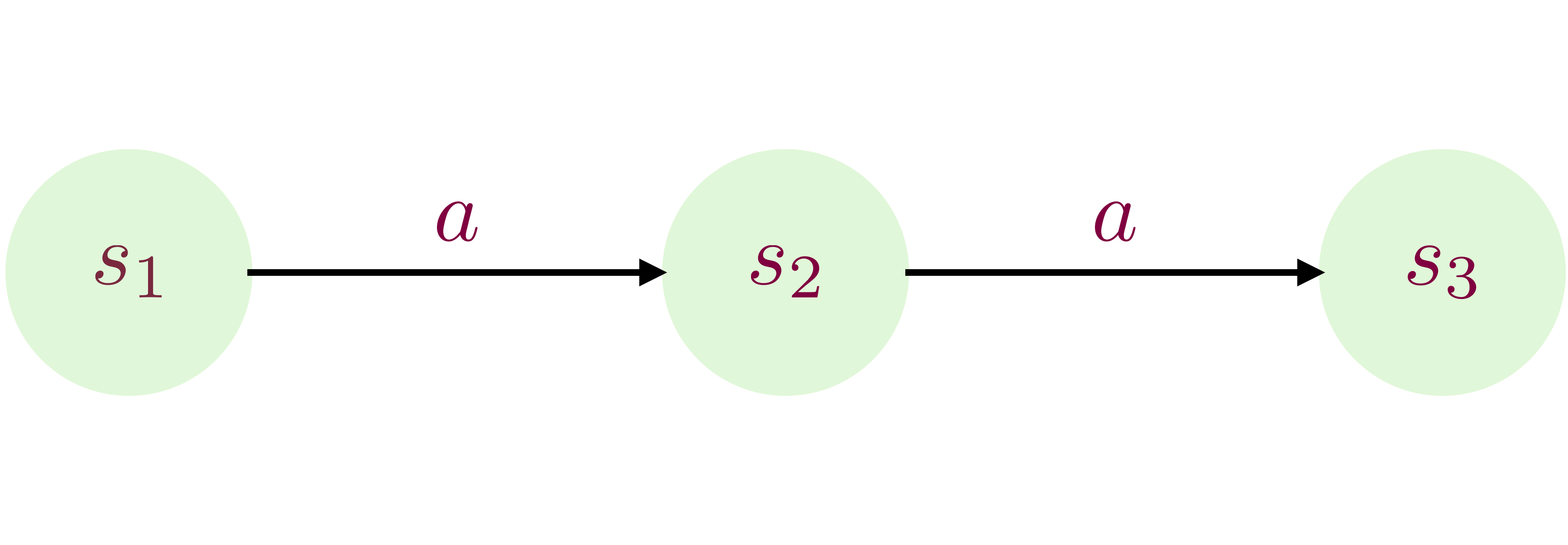} \label{fig:mdp-canonical-form}}}%
    \qquad
    \subfloat[]{{\includegraphics[height=.15\linewidth]{./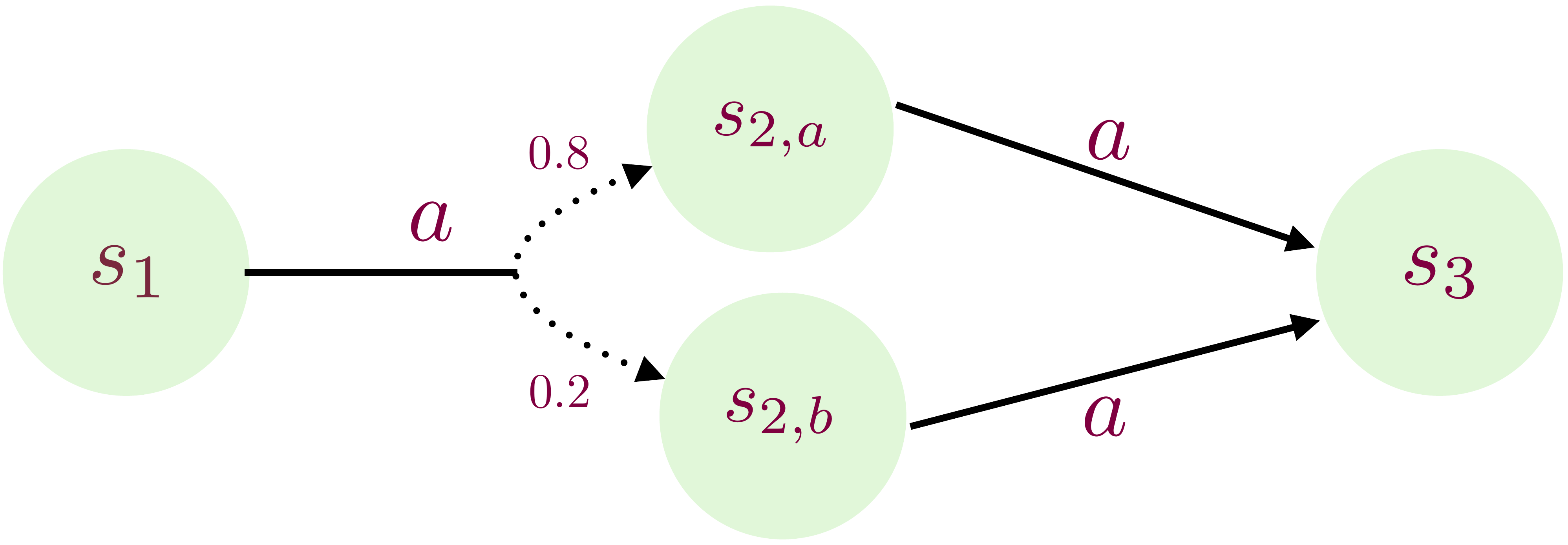} }\label{fig:mdp-non-canonical-form}}%
    \vspace{-0.25cm}
    \caption{\textbf{Left}: A Block MDP with 3 states and 1 action
      (observations are not depicted).  \textbf{Right}: We take the
      MDP on the left and treat $\state_2$ as two states: $\state_{2,
        a}$ and $\state_{2, b}$ where $\state_{2, a}$ contains
      observations that are emitted 80\% of the time from $\state_2$
      and $\state_{2, b}$ contains the rest. There is no way to
      distinguish between these two MDPs, and we call the left MDP the
      canonical form.}
\label{fig:mdp-canonical}
\end{figure}

Note that canonical form is simply a way to identify the notion of
state for a Block MDP. It does not restrict this class of environments
whatsoever.

\subsection{Exploration with an Oracle Kinematic Inseparability  Abstraction}
\label{sec:oracle-kinematic-insep}

We now show that backward KI enables reward-free strategic exploration
and reward-sensitive learning, by developing an algorithm,
$\oraclealg$, that assumes oracles access to a backward KI abstraction
function $\phi^\star: \observations \to [N]$. The pseudocode is
displayed in~\pref{alg:oracle-exp}.

$\oraclealg$ takes as input a policy class $\policies$, a backward KI
abstraction $\phi^\star$, and three hyperparameters $\epsilon, \delta,
\eta \in (0, 1)$. The hyperparameter $\eta$ is an estimate of the
reachability parameter (we want $\eta \leq \etamin$), while
$\epsilon,\delta$ are the standard PAC parameters.
The algorithm operates in two phases: a reward-free phase in which it
learns a policy cover
(\pref{alg:oracle-exp},~\pref{line:oracle-exp-loop-start}-\pref{line:oracle-exp-loop-end})
and a reward-sensitive phase where it learns a near-optimal policy for
the given reward function
(\pref{alg:oracle-exp},~\pref{line:oracle-reward-sensitive}). In the
reward-free phase, the algorithm proceeds inductively, using a policy
cover for time steps $1,\ldots,h-1$ to learn a policy cover for time
step $h$. In the $h^{\textrm{th}}$ iteration, we define $N$
\emph{internal reward functions} $\{R_{i,h}\}_{i=1}^N$ corresponding
to each output of $\phi^\star$.
The reward function $R_{i,h}$ gives a reward of $1$ if the agent
observes $\obs'$ at time step $h$ satisfying $\phi^\star(\obs') = i$
and $0$ otherwise. The internal reward functions incentivize the agent
to reach different backward KI abstract states.

\begin{algorithm}[t]
\caption{$\oraclealg(\policies, \phi^\star, \epsilon, \delta, \eta)$. Reinforcement learning in a Block MDP with oracle access to a Backward KI abstraction $\phi^\star: \observations \rightarrow [N]$}
\label{alg:oracle-exp}
\begin{algorithmic}[1]\onehalfspacing
\State Set $\npol = \tilde{\Ocal}\left( \frac{N^4\horizon^2 |\actions|}{\eta^2} \ln\left(\frac{|\policies|}{\delta}\right) \right)$,\,\,  $ \neval = \tilde{\Ocal}\left( \frac{N^2\horizon^2 |\actions|}{\epsilon^2} \ln\left(\frac{|\policies|}{\delta}\right) \right)$ and $\Psi_{1:H} = \emptyset$
\For{$h=2,\ldots, \horizon$} \label{line:oracle-exp-loop-start}
\For{$i = 1$ to $N$} \label{line:oracle-exp-policy-start}
\State Define $\rewardf_{i, h}(\obs, \action, \obs') \defeq \one\{\step(\obs') = h \land \phi^\star(\obs') = i \}$~\hfill \mycomment{Define internal reward functions} \label{line:oracle-reward}
\State $\policy_{i, h} \gets \psdpalg(\Psi_{1:h-1}, \rewardf_{i, h}, h-1, \policies, \npolo)$~\hfill \mycomment{Learn an exploration policy using $\psdpalg$}  \label{line:oracle-psdp}
\State $\Psi_h \gets \Psi_h \cup \{\policy_{i, h}\}$  
\EndFor \label{line:oracle-exp-policy-end}
\EndFor  \label{line:oracle-exp-loop-end}
\State $\hat{\policy}  \gets \psdpalg(\Psi_{1:\horizon}, \rewardf, \horizon, \policies, \nevalo)$~\hfill \mycomment{Reward Sensitive Learning}\label{line:oracle-reward-sensitive}
\State \textbf{return} $\hat{\policy}$, $\Psi_{1:\horizon}$\label{line:oracle-return}
\end{algorithmic}
\end{algorithm}

\begin{algorithm}[t]
\caption{$\psdpalg$($\Psi_{1:h}, \rewardf', h, \policies, n$). Optimizing reward function $\rewardf'$ given policy covers $\Psi_{1:h}$}
\label{alg:rl_via_homing_policies}
\begin{algorithmic}[1]\onehalfspacing
\For{$t=h, h-1,\cdots, 1$}\label{line:psdp-horizon-for-loop-start}
\State $D=\emptyset$
\For{$n$ times}\label{line:psdp-dataset-start}
\State $(\obs, \action, p, r) \sim \unf(\Psi_t) \circ \unf(\actions) \circ \hat{\policy}_{t+1} \circ \cdots \circ \hat{\policy}_{h}$\label{line:psdp-sampling-proc} ~\hfill\mycomment{$\psdpalg$ sampling procedure (see text)}
\State $D \leftarrow \{(\obs, \action, p, r)\} \cup D$\label{line:psdp-single-sample}
\EndFor\label{line:psdp-dataset-end}
\State $\hat{\policy}_t \gets \cboracle(D, \policies)$~\hfill \mycomment{Solve a contextual bandit problem given dataset $D$}\label{line:psdp-csc-call}
\EndFor\label{line:psdp-horizon-for-loop-end}
\State \textbf{return} $(\hat{\policy}_1, \hat{\policy}_2, \cdots, \hat{\policy}_h)$
\end{algorithmic}
\end{algorithm}

We find a policy that optimizes this internal reward function using
the subroutine $\psdpalg$, displayed
in~\pref{alg:rl_via_homing_policies}. This subroutine is based on
Policy Search by Dynamic Programming~\citep{bagnell2004policy}, which,
using an exploratory data-collection policy, optimizes a reward
function by solving a sequence of contextual bandit
problems~\citep{langford2008epoch} in a dynamic programming fashion.
In our case, we use the policy covers for time steps $1,\ldots,h-1$ to
construct the exploratory policy
(\pref{alg:rl_via_homing_policies},~\pref{line:psdp-sampling-proc}).

Formally, at time step $t$ of $\psdpalg$, we solve the following
optimization problem
\begin{equation*}
\max_{\policy \in \policies}  \EE_{\obs_t \sim D_t, a_t \sim \policy, a_{t+1:h} \sim \hat{\policy}_{t+1:h}} \left[\sum_{h'=t}^h R'(x_{h'}, a_{h'}, x_{h'+1})\right],
\end{equation*}
using the previously computed solutions $(\hat{\policy}_{t+1}, \cdots,
\hat{\policy}_h)$ for future time steps. The context distribution
$D_t$ is obtained by uniformly sampling a policy in $\Psi_t$ and
rolling-in with it until time step $t$. To solve this problem, we
first collect a dataset $D$ of tuples $(x,a,p,r)$ of size $n$ by (1)
sampling $x$ by rolling-in with a uniformly selected policy in
$\Psi_t$ until time step $t$, (2) taking action $a$ uniformly at
random, (3) setting $p \defeq 1/|\actions|$, and (4) executing
$\hat{\pi}_{t+1:h}$, and (5) setting $r \defeq \sum_{h'=t}^h
r_{t'}$. Then we invoke the contextual bandit oracle $\cboracle$ with
dataset $D$ to obtain $\hat{\pi}_t$. Repeating this process we obtain
the non-stationary policy $\hat{\pi}_{1:h}$ returned by $\psdpalg$.

The learned policy cover $\Psi_h$ for time step $h$ is simply the
policies identified by optimizing each of the $N$ internal reward
functions $\{R_{i,h}\}_{i=1}^N$. Once we find the policy covers
$\Psi_1,\ldots,\Psi_H$, we perform reward-sensitive learning via a
single invocation of $\psdpalg$ using the \emph{external} reward
function $R$
(\pref{alg:oracle-exp},~\pref{line:oracle-reward-sensitive}). Of
course, in a purely reward free setting, we can omit this last step
and simply return the policy covers $\Psi_{1:H}$.

\section{Learning Kinematic Inseparability for Strategic Exploration} 
\label{sec:learning-alg}

We now present our main algorithm, $\homingalg$, displayed
in~\pref{alg:learn_homing_policy}, which learns a kinematic
inseparability abstraction while performing reward-free strategic
exploration. Given hypothesis classes $\policies$ and $\Fcal$, a
positive integer $N$, and three hyperparameters $\eta, \epsilon,
\delta \in (0, 1)$, $\homingalg$ learns a policy cover of size $N$ and
a state abstraction function for each time step. We assume that $N \ge
\nkid$ and $\eta \le \etamin$ for our theoretical analysis.

\begin{algorithm}[t]
\caption{$\homingalg(\policies, \Fcal, N, \eta, \epsilon, \delta)$. Reinforcement and abstraction learning in a Block MDP.} %
\label{alg:learn_homing_policy}
\begin{algorithmic}[1]\onehalfspacing
\State Set $\nreg = \otil\left(\frac{N^6 |\actions|^3}{\eta^3}\left(N^2|\actions| + \ln\left(\frac{|\Phi_N|\horizon}{\delta}\right) \right)\right)$, $\npol =\otil\left( \frac{N^4\horizon^2 |\actions|}{\eta^2} \ln\left(\frac{|\policies|}{\delta}\right) \right)$, and $\Psi_{1:H} = \emptyset$ \label{line:homer-initialization}
\For{$h=2,\ldots, H$} \label{line:homer-time-step-start}
\State $D = \emptyset$
\For{$\nreg$ times}  \label{line:homer-dataset-step-start}\tikzmark{top-abstraction}
\State $(\obs_1, \action_1, \obs'_1), (\obs_2, \action_2, \obs'_2) \sim \unf(\Psi_{h-1}) \circ \unf(\actions)$  \label{line:homer-sampling-procedure}\hspace*{3.8cm}\tikzmark{right-abstraction}
\State $y \sim \textrm{Ber}(\nicefrac{1}{2})$\label{line:homer-sample-add-start}
\If{$y=1$} 
\State $D \gets D \cup \{([\obs_1,\action_1,\obs'_1], 1)\}$, \hspace*{0.35cm}\mycomment{Add a real transition}
\Else 
\State $D \gets D \cup \{([\obs_1, \action_1, \obs'_2], 0)\}$.\label{line:homer-sample-add-end} \hspace*{0.35cm}\mycomment{Add an imposter transition}
\EndIf \label{line:homer-single-end}
\EndFor \label{line:homer-dataset-step-end}
\State $(w_h, \forabs, \backabs) \gets \cpeoracle(\Fcal_N, D)$ \hspace*{0.9cm}\mycomment{Supervised learning on $D$}\label{line:homer-learn-encoding-function}\tikzmark{bottom-abstraction}
\For{$i = 1$ to $N$} \label{line:homer-learn-policy-start} \tikzmark{top-explore}
\State Define $\rewardf_{i, h}(\obs, \action, \obs') \defeq \one\{\step(\obs') = h \land \backabs(\obs') = i ]$  \hspace*{3.3cm}\tikzmark{right-explore} \label{line:homer-reward}
\State $\policy_{i, h} \gets \psdpalg(\Psi_{1:h-1}, \rewardf_{i, h}, h - 1, \policies, \npol)$ \label{line:homer-policy-c-ver-psdp}
\State $\Psi_h \gets \Psi_h \cup \{\policy_{i, h}\}$  
\EndFor \label{line:homer-learn-policy-end} \tikzmark{bottom-explore}
\EndFor  \label{line:homer-time-step-end}
\State Set $\neval = \otil\left(\frac{N^2\horizon^2|\actions|}{\epsilon^2}
\ln\left(\frac{|\policies|}{\delta}\right)\right)$\label{line:homer-neval}
\State $\hat{\policy}  \gets \psdpalg(\Psi_{1:\horizon}, \rewardf, \horizon, \policies, \neval)$ \hspace*{1.5cm}\mycomment{Reward Sensitive Learning}\label{line:homer-reward-sensitive}
\State \textbf{return} $\hat{\policy}$, $\Psi_{1:\horizon}$, $\hat{\phi}^{(\textup{F})}_{1:\horizon-1}$, $\hat{\phi}^{(\textup{B})}_{2:\horizon}$ \label{line:homer-return}
\end{algorithmic}
\AddNote{top-abstraction}{bottom-abstraction}{right-abstraction}{Learn  State Abstraction}%
\AddNote{top-explore}{bottom-explore}{right-explore}{Learn Policy Cover}%
\vspace{-0.4cm}
\end{algorithm}

The overall structure of $\homingalg$ is similar to $\oraclealg$ with
a reward-free phase preceding the reward-sensitive one. As with
$\oraclealg$, $\homingalg$ proceeds inductively, learning a policy
cover for observations reachable at time step $h$ given the learned
policy covers $\Psi_{1:h-1}$ for previous steps
(\pref{alg:learn_homing_policy},~\pref{line:homer-time-step-start}-\pref{line:homer-time-step-end}).

The key difference with $\oraclealg$ is that, for each iteration $h$,
we first learn an abstraction function $\backabs$ over
$\observations_h$. This is done using a form of contrastive estimation
and our function class $\Fcal_N$. Specifically in the
$h^{\textrm{th}}$ iteration, $\homingalg$ collects a dataset $D$ of
size $\nreg$ containing real and imposter transitions. We define a
sampling procedure: $(\obs, \action, \obs') \sim \unf(\Psi_{h-1})
\circ \unf(\actions)$ where $\obs$ is observed after rolling-in with a
uniformly sampled policy in $\Psi_{h-1}$ until time step $h-1$, action
$\action$ is taken uniformly at random, and $\obs'$ is sampled from
$T(\cdot \mid \obs,\action)$
(\pref{alg:learn_homing_policy},~\pref{line:homer-sampling-procedure}). We
sample two independent transitions $(\obs_1, \action_1, \obs'_1),
(\obs_2, \action_2, \obs'_2)$ using this procedure, and we also sample
a Bernoulli random variable $y\sim \textrm{Ber}(\nicefrac{1}{2})$. If
$y=1$ then we add the observed transition $([\obs_1, \action_1,
  \obs'_1], y)$ to $D$ and otherwise we add the \emph{imposter}
transition $([\obs_1, \action_1, \obs'_2], y)$
(\pref{alg:learn_homing_policy},~\pref{line:homer-sample-add-start}-\pref{line:homer-sample-add-end}).
Note that the imposter transition may or may not correspond feasible
environment outcomes.

We call the subroutine $\cpeoracle$ to solve the supervised learning
problem induced by $D$ with model family $\Fcal_N$
(\pref{alg:learn_homing_policy},~\pref{line:homer-learn-encoding-function}),
and we obtain a predictor $\hat{f}_h = (w_h,\forabs,\backabs)$.
As we will show, $\backabs$ is closely related to a backward KI
abstraction, so we can use it to drive exploration at time step $h$,
just as in $\oraclealg$.  Empirically, we will see that $\forabs$ is
closely related to a forward KI abstraction, which is useful for
auxiliary tasks such as learning the transition dynamics or
visualization.

With $\backabs$ the rest of the algorithm proceeds similarly to
$\oraclealg$. We invoke $\psdpalg$ with the $N$ internal reward
functions induced by $\backabs$ to find the policy cover
(\pref{alg:learn_homing_policy},~\pref{line:homer-policy-c-ver-psdp}).
Once we have found policy covers for all time steps, we perform
reward-sensitive optimization just as before
(\pref{alg:learn_homing_policy},~\pref{line:homer-reward-sensitive}).
As with $\oraclealg$ we can ignore the reward-sensitive phase and
operate in a purely reward-free setting by simply returning the policy
covers $\Psi_{1:H}$.

We combine the two abstractions as
$\overline{\phi}_h \defeq (\forabscurr, \backabs)$ to form the learned KI
abstraction, 
where for any $\obs_1, \obs_2 \in \observations$, $\overline{\phi}_h(\obs_1)
= \overline{\phi}_h(\obs_2)$ if and only if $\forabscurr(\obs_1) =
\forabscurr(\obs_2)$ and $\backabs(\obs_1) = \backabs(\obs_2)$. We
define $\hat{\phi}^{(\textup{B})}_1(\obs) \equiv 1$ and
$\hat{\phi}^{(\textup{F})}_\horizon \equiv 1$ as there is no backward
and forward dynamics information available at these steps,
respectively. Empirically, we use $\overline{\phi}$ for learning the
transition dynamics and visualization (see~\pref{sec:exp}).

\section{Theoretical Analysis} 
\label{sec:complexity-analysis}

Our main theoretical contribution is to show that both $\oraclealg$
and $\homingalg$ compute a policy cover and a near-optimal policy with
high probability in a sample-efficient and computationally-tractable
manner. The result requires an additional expressivity assumption on
our model classes $\policies$ and $\Fcal$, which we now state.

\begin{assumption}[Realizability]\label{assum:realizability}
Let $\Rcal \defeq \{R\} \cup \{ (\obs,\action,\obs') \mapsto \one\cbr{\phi(\obs')=i \land \step(\obs')=h} \mid
\phi \in \Phi_N, i \in [N], h\in [\horizon] \}$ be the set
of external and internal reward functions. We assume that $\policies$
satisfies \emph{policy completeness} for every $R' \in \Rcal$: for every $h \in [\horizon]$ and $\policy'
 \in \nspolicies$, there exists $\policy \in
\policies$ such that
\begin{align*}
\forall \obs \in \observations_h: ~~~ \policy(\obs) = \argmax_{\action \in \actions} \EE\sbr{\sum_{h'=h}^H r_{h'} \mid \obs_h = \obs, \action_h=\action, \action_{h'} \sim\policy'}.
\end{align*}
We also assume that $\Fcal$ is \emph{realizable}: for any $h
\in [H]$, $N \ge \nkid$, and any prior distribution $\rho \in \prob(\states_h)$ with
$\textrm{supp}(\rho) = \states_h$, there exists $f_\rho \in \Fcal_N$ such
that
\begin{align*}
\forall \obs_{h-1}, \action_{h-1}, \obs_h: \qquad f_\rho(\obs_{h-1}, \action_{h-1}, \obs_h) = \frac{T(\decoder(\obs_h) | \decoder(\obs_{h-1}), \action_{h-1})}{T(\decoder(\obs_h) | \decoder(x_{h-1}), \action_{h-1}) + \rho(\decoder(\obs_{h}))}.
\end{align*}
\end{assumption}

Completeness assumptions are common in the analysis of dynamic
programming style algorithms for the function approximation
setting~\cite{antos2008learning} (see Chen and
Jiang~\cite{chen2019information} for a detailed discussion). Our exact
completeness assumption appears in the work of \citet{dann2018oracle},
who use it to derive an efficient algorithm for a restricted version
of our setting with deterministic latent state transitions.
 
The realizability assumption on $\Fcal$ is adapted to our learning
approach: as we use $\Fcal$ to distinguish between real and imposter
transitions, $\Fcal$ should contain the Bayes optimal classifier for
these problems. In $\homingalg$, the sampling procedure
$\unf(\Psi_{h-1}) \circ \unf(\actions)$ that is used to collect data
for the learning problem in the $h^{th}$ iteration induces a marginal
distribution $\rho \in \Delta(\states_h)$ and the Bayes optimal
predictor for this problem is $f_\rho$ (See~\pref{lem:p-form}
in~\pref{app:appendix-sample-complexity}). It is not hard to see that
if $x_1,x_2$ are kinematically inseparable then $f_\rho(x_1,a,x') =
f_\rho(x_2,a,x')$ and the same claim holds for the third argument of
$f_\rho$. Therefore, by the structure of $\Fcal_N$, a sufficient
condition to ensure realizability is that $\Phi_N$ contains a
kinematic inseparability abstraction, which is reasonable as this is
precisely what we are trying to learn.

\paragraph{Theoretical Guarantees.} 
We first present the guarantee for $\oraclealg$.

\begin{theorem}[$\oraclealg$ Result]
\label{thm:oracle-exp-statement}
For any Block MDP, given a backward KI
abstraction $\backwardabs: \observations \rightarrow [N]$ such that
$\backwardabs \in \Phi_N$, and parameters $(\epsilon,\eta,\delta) \in
(0,1)^3$ with $\eta \leq \etamin$, $\oraclealg$ outputs policy covers
$\cover_{1:H}$ and a reward sensitive policy $\hat{\policy}$ such that
the following holds, with probability at least $1-\delta$:
\begin{enumerate}
\item For each $h \in [H]$, $\cover_h$ is a $\nicefrac{1}{2}$-policy cover for $\states_h$;
\item $V(\hat{\policy}) \geq \max_{\policy \in \nspolicies} V(\policy) - \epsilon$.
\end{enumerate}
The sample complexity is $\otil\rbr{ NH^2\npol + H\neval}
= \otil\rbr{N^2 H^3|\actions|\log(|\policies|/\delta)
  \rbr{NH/\etamin^2 + 1/\epsilon^2}}$, and the running time is
$\otil\rbr{NH^3\npol + H^2 \neval + NH^2\cdot \cpol(\npol)
  + H\cdot \cpol(\neval)}$, where $\npol$ and $\neval$ are defined
in~\pref{alg:oracle-exp}.
\end{theorem}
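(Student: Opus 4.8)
The plan is to prove both conclusions together by induction on the time step $h$, reducing everything to a single quantitative guarantee for the $\psdpalg$ subroutine. For a reward function $R'$, write $V(\pi;R')$ for the value of $\pi$ under $R'$ and $\pi^\star_{R'}\defeq\argmax_{\pi\in\nspolicies}V(\pi;R')$. The central lemma I would establish is: if $\cover_{1:t}$ are $\tfrac12$-policy covers for $\states_{1:t}$, then $\psdpalg(\cover_{1:t},R',t,\policies,n)$ returns $\hat\pi$ with $V(\hat\pi;R')\ge V(\pi^\star_{R'};R')-\Delta_{\psdpalg}(n)$, where $\Delta_{\psdpalg}(n)=\otil(\poly(N,\horizon)\sqrt{|\actions|\ln(|\policies|/\delta)/n})$. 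I would prove this via the performance-difference lemma, expanding $V(\pi^\star_{R'};R')-V(\hat\pi;R')$ as a sum over $t$ of the one-step advantage of $\pi^\star_{R'}$ over the oracle solution $\hat\pi_t$ when rolling out with the previously computed future policies, and bounding each term after changing measure from the roll-in induced by $\pi^\star_{R'}$ to the roll-in induced by $\unf(\cover_t)$.

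Two ingredients feed this lemma. First, the change of measure: since $\cover_t$ is a $\tfrac12$-cover of size $N$, the roll-in marginal $d_t$ it induces satisfies $d_t(s)\ge\tfrac{1}{2N}\eta(s)$ for every $s\in\states_t$, while any policy visits $s$ with probability at most $\eta(s)$; hence the density ratio between the $\pi^\star_{R'}$-roll-in and $d_t$ is bounded by $2N$ pointwise, yielding a concentrability coefficient $O(N)$ at each step. Second, the per-step oracle error: invoking policy completeness (\pref{assum:realizability}) so that the $Q$-greedy policy at step $t$ lies in $\policies$, together with a uniform-convergence bound for the importance-weighted objective solved by $\cboracle$ (importance weights bounded by $|\actions|$ because actions are chosen uniformly), each $\cboracle$ call incurs excess error $\otil(\sqrt{|\actions|\ln(|\policies|/\delta)/n})$. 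Multiplying the $O(N)$ concentrability by the per-step error and summing over the $\le \horizon$ steps gives $\Delta_{\psdpalg}(n)$, with a union bound over all oracle calls absorbed into the $\otil$ and the $\delta$ budget.

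I would then run the induction. The base case $h=1$ is immediate: $s_1\sim\startdist$ makes $\PP_\pi[s]=\startdist(s)=\eta(s)$ for every policy and $s\in\states_1$. For the inductive step, assume $\cover_{1:h-1}$ are $\tfrac12$-covers, fix $s\in\states_h$, and let $i$ be the common backward-KI label $\backwardabs$ assigns to observations emitted by $s$ (well-defined since, in a Block MDP, observations of a single state are kinematically inseparable). The internal reward satisfies $V(\pi;R_{i,h})=\PP_\pi[\backwardabs=i\text{ at }h]$, so the lemma gives $\PP_{\pi_{i,h}}[\backwardabs=i]\ge\max_\pi\PP_\pi[\backwardabs=i]-\Delta_{\psdpalg}(\npol)$. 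Here I invoke the backward-KI Lemma: all observations in block $i$ induce the same ratio $\PP_{\pi_1}(x)/\PP_{\pi_2}(x)$, which I strengthen to a factorization $\PP_\pi(x)=g(\pi)c(x)$ on the block; summing over the observations of $s$ and of the whole block shows that $\PP_\pi[s]/\PP_\pi[\backwardabs=i]$ is a constant independent of $\pi$. Consequently $\PP_{\pi_{i,h}}[s]/\eta(s)=\PP_{\pi_{i,h}}[\backwardabs=i]/\max_\pi\PP_\pi[\backwardabs=i]\ge 1-\Delta_{\psdpalg}(\npol)/\etamin$, using $\max_\pi\PP_\pi[\backwardabs=i]\ge\eta(s)\ge\etamin$. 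The choice of $\npol$ in the algorithm makes $\Delta_{\psdpalg}(\npol)\le\tfrac12\etamin$, so $\PP_{\pi_{i,h}}[s]\ge\tfrac12\eta(s)$, proving the first claim. The second claim is one more application of the lemma, now to the external reward $R$ over the full horizon with covers $\cover_{1:\horizon}$ and $\neval$ chosen so that $\Delta_{\psdpalg}(\neval)\le\epsilon$. The stated sample and runtime bounds follow by counting the $N(\horizon-1)$ internal-reward $\psdpalg$ calls plus the single external call, each collecting $O(\horizon n)$ trajectories through $O(\horizon)$ $\cboracle$ invocations.

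The main obstacle is precisely the conversion in the inductive step: turning the additive near-optimality of the aggregated block value $\PP_{\pi_{i,h}}[\backwardabs=i]$ into a multiplicative $\tfrac12$-coverage guarantee for each individual latent state $s$ inside the block. This is where the backward-KI structure is indispensable — without the policy-independent ratio $\PP_\pi[s]/\PP_\pi[\backwardabs=i]$ supplied by the Lemma, a policy that maximizes visitation to the aggregated block need not maximize visitation to any particular constituent state. The remaining delicacy is bookkeeping the error budget so that a single $\tfrac12\etamin$ PSDP tolerance propagates through all $\horizon$ levels of the induction without the cover quality degrading; by contrast the change-of-measure and the contextual-bandit concentration are standard.
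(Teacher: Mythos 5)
Your proposal is correct and follows essentially the same route as the paper's proof: an inductive construction of the covers, a $\psdpalg$ guarantee obtained from the performance-difference lemma with an $O(N/\alpha)$ concentrability coefficient supplied by the previous covers (\pref{thm:psdp-spread}), the backward-KI ratio lemma to convert near-optimality for the aggregated block reward into per-state coverage, and one final $\psdpalg$ call for the external reward; the base case, union bound, and sample/time accounting also match. The one place you diverge is the block-to-state transfer. You assume every observation emitted by a state $s$ receives a single common label $i$ under $\backwardabs$, which lets you write $\PP_\pi[s]=r\cdot\PP_\pi[\backwardabs(\obs')=i]$ with $r$ independent of $\pi$ and conclude with a loss of order $Nh\errcsc/\alpha$. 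The paper deliberately does not assume this: it only uses the implication ``same label $\Rightarrow$ backward KI,'' allows $\observations_s$ to be split across several labels, selects $j(s)=\argmax_i \PP_{\policy_s}(\observations_h^{(i)}\cap\observations_s)$, and applies the pigeonhole bound $\PP_{\policy_s}(\observations_h^{(j(s))})\ge \PP_{\policy_s}(s)/N$, paying an extra factor of $N$ (hence the $2N^2h\errcsc$ term and $\npol\propto N^4$). Your reading is consistent with the ``if and only if'' definition of $\phi^\star_B$ in \pref{sec:kinematic-inseparable}, so it does prove the theorem as stated and is in fact slightly tighter; but the one-directional version is all that realizability guarantees can be found in $\Phi_N$ (\pref{prop:bki-in-class} and the remark following it), so the paper's extra averaging step is what makes the oracle theorem applicable to the abstractions that $\homingalg$ can actually produce.
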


\pref{thm:oracle-exp-statement} shows that we can learn a policy cover
and use it to learn a near-optimal policy, assuming access to a
backward KI abstraction. The sample complexity is
$\textrm{poly}(N,H,|\actions|,\etamin^{-1},\epsilon^{-1},\log
|\policies|/\delta)$, which at a coarse level is our desired
scaling. We have not attempted to optimize the exponents in the sample
complexity or running time.

We also remark that we may be able to achieve this oracle guarantee
with other algorithmic approaches. Two natural candidates are (1) a
model-based approach where we learn the dynamics models over the
backward KI abstract states and plan to visit abstract states, and (2)
a value-based approach like Fitted Q-Iteration with the same synthetic
reward structure as we use
here~\citep{munos2008finite,antos2008learning,farahmand2010error,chen2019information}. We
have not analyzed these approaches in our precise setting, and they
may actually be more sample efficient than our policy-based
approach. Despite this,~\pref{thm:oracle-exp-statement} suffices to
establish our main conceptual takeway: that a backward KI abstraction
can be used for sample efficient exploration and policy optimization.

We next present the result for $\homingalg$. Here we show that
$\homingalg$ achieves a similar guarantee to $\oraclealg$,
\emph{without} prior access to a backward KI abstraction. In other
words, $\homingalg$ provably learns a backward KI abstraction and uses
it for exploration and policy optimization.

\begin{theorem}[Main Result]\label{thm:main-theorem} 
For any Block MDP and hyperparameters $\epsilon, \delta, \eta \in (0,
1), N \in \NN$, satisfying $\eta \le \etamin$ and $N \ge \nkid$,
$\homingalg$ outputs exploration policies $\Psi_{1:H}$
and a reward sensitive policy $\hat{\policy}$ satisfying:
\begin{enumerate}
\item $\Psi_h$ is an $\nicefrac{1}{2}$-policy cover of $\states_h$ for every $h \in [\horizon]$
\item $V(\hat{\policy}) \ge \max_{\policy \in \nspolicies} V(\policy) -  \epsilon$
\end{enumerate}
with probability least $1-\delta$.  The sample complexity of
$\homingalg$ is $\Ocal\left(\npol N \horizon^3 + \nreg \horizon +
\neval \horizon\right)$ where $\npol,\nreg,\neval$ are as specified
in~\pref{alg:learn_homing_policy}. This evaluates to
\begin{align*}
\otil\rbr{ \frac{N^8|\actions|^4 H}{\etamin^3} +
  \frac{N^6|\actions|H}{\etamin^3}\ln(|\Phi_N|/\delta) +
  \rbr{\frac{N^5H^4|\actions|}{\etamin^2} +
    \frac{N^2H^3|\actions|}{\epsilon^2}}\ln(|\policies|/\delta)}.
\end{align*}
The running time is
\begin{equation*}
\order\rbr{ \npol N \horizon^3 + \nreg \horizon^2 + \neval \horizon^2 + \cpol(\npol) N\horizon^2 + \creg(\nreg)\horizon +  \cpol(\neval) \horizon }.
\end{equation*}
\end{theorem}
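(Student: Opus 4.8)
The plan is to argue inductively over the time step $h$, maintaining the invariant that $\Psi_{1:h-1}$ are $\nicefrac{1}{2}$-policy covers for steps $1,\ldots,h-1$, and to reduce the analysis of $\homingalg$ to that of $\oraclealg$ (\pref{thm:oracle-exp-statement}) by showing that the \emph{learned} abstraction $\backabs$ is a good enough surrogate for a true backward KI abstraction. Once the policy covers $\Psi_{1:H}$ are established, claim (2) and its reward-sensitive guarantee are handled exactly as in $\oraclealg$, since the final step is a single invocation of $\psdpalg$ with the external reward $R$. Thus the crux is the policy-cover guarantee, and the only new ingredient relative to $\oraclealg$ is the contrastive-estimation step that produces $\backabs$.

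First I would analyze the regression step at iteration $h$. Assuming the inductive hypothesis, the sampling procedure $\unf(\Psi_{h-1})\circ\unf(\actions)$ reaches every state in $\states_h$ with probability bounded below (because $\Psi_{h-1}$ covers $\states_{h-1}$ and every $s\in\states_h$ is $\etamin$-reachable), so it induces a marginal $\rho\in\prob(\states_h)$ with full support. By \pref{lem:p-form} the Bayes-optimal predictor for the induced real-vs-imposter classification problem is exactly $f_\rho$, and by \pref{assum:realizability} we have $f_\rho\in\Fcal_N$. A standard uniform-convergence argument for square-loss regression over $\Fcal_N$, whose statistical complexity is governed by the tabular entropy $N^2|\actions|$ of $\Wcal_N$ together with $\ln|\Phi_N|$, then shows that with $\nreg$ samples the oracle output $\hat f_h=(w_h,\forabs,\backabs)$ has excess square-loss risk at most some $\errregf$ that the choice of $\nreg$ drives down to the required $\poly(\etamin,1/N,1/|\actions|)$ level.

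The key step is to convert this $L_2$ guarantee into a usable statement about the bottleneck $\backabs$. Here I would show that small excess risk forces $\backabs$ to approximately respect the backward-dynamics partition: two observations mapped to the same value of $\backabs$ must have nearly identical backward dynamics under $\rho$, and hence, by the backward-KI lemma, nearly the same visitation-probability ordering over policies. Formally I would define an aggregation error, bound it by $\errregf$, and use it to show that the internal reward $R_{i,h}$ built from $\backabs$ behaves, up to this error, like one built from a true backward KI abstraction $\backwardabs$. Since $N\ge\nkid$, the abstraction has enough capacity that no latent state is unavoidably split in a way that defeats exploration. This lets me invoke the $\psdpalg$ analysis — driven by policy completeness in \pref{assum:realizability} and uniform convergence over $\policies$ with $\npol$ samples — to conclude that each $\policy_{i,h}$ approximately maximizes $\PP[\backabs=i]$, so that for every latent $s\in\states_h$ some policy in $\Psi_h$ reaches $s$ with probability at least $\tfrac12\eta(s)$. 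This closes the induction and establishes claim (1); claim (2) follows from the final $\neval$-sample $\psdpalg$ call as in \pref{thm:oracle-exp-statement}, and the sample-complexity and running-time bounds follow by summing the per-step costs.

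The main obstacle is precisely this translation from the square-loss regression guarantee to the exploration quality of $\backabs$: unlike the oracle setting, $\backabs$ is only approximately backward KI, it arises as a discrete bottleneck inside a nonconvex regressor, and I must argue that controlling the aggregate prediction error controls the per-abstract-state backward dynamics without an unaffordable blow-up. Compounding this, the induction is genuinely coupled — the quality of $\Psi_{h-1}$ determines the support and conditioning of the marginal $\rho$, which in turn governs both realizability and the achievable regression error at step $h$ — so I must ensure the error does not accumulate across the $H$ time steps, which is what forces the relatively high powers of $N$ and $\etamin^{-1}$ in $\nreg$.
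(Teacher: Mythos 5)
Your proposal follows essentially the same route as the paper's proof: an induction on $h$ in which the contrastive regression problem is analyzed via its Bayes-optimal predictor $f_\rho$ and a square-loss generalization bound, the excess risk is then converted (via a coupling argument) into a statement that observations sharing a $\backabs$-value have approximately proportional backward dynamics, and $\psdpalg$ with the induced internal rewards yields the policy cover, with the reward-sensitive claim and complexity bounds handled exactly as you describe. The obstacle you flag — turning the aggregate $L_2$ error into per-abstract-state control without blow-up — is precisely what the paper's coupling lemma and the comparison lemma (with its careful normalization by the prior mass $b_i$ of each abstract state) resolve, so your plan is the correct one.
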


In
comparison with the guarantee for $\oraclealg$, the main qualitative
difference is that the guarantee for $\homingalg$ also has a
dependence on $\log |\Phi_N|$, which is natural as $\homingalg$
attempts to learn the backward KI function. Crucially, the logarithmic
dependence on $|\Phi_N|$ implies that we can take $\Phi_N$ to be
exponentially large and achieve a guarantee that is qualitatively
comparable to that of $\oraclealg$. This demonstrates that we can
learn a backward KI abstraction function from an exponentially large class
and then use it for exploration and policy optimization.

In terms of computation, the running time is polynomial in our oracle
model, where we assume we can solve contextual bandit problems over
$\Pi$ and regression problems over $\Fcal_N$. In~\pref{sec:exp}, we
will see that these problems can be solved effectively in practice. 

The closest related result is for the $\du$ algorithm
of~\citet{du2019provably}. $\du$ provably finds a policy cover in a
restricted class of Block MDPs in a sample- and
computationally-efficient manner. The precise details of the guarantee
differs from ours in several ways (e.g., additive versus
multiplicative error in policy cover definition, different
computational and expressivity assumptions), so the sample complexity
bounds are incomparable. However,~\pref{thm:main-theorem} represents a
significant conceptual advance as it eliminates the identifiability
assumptions required by $\du$ and therefore greatly increases the
scope for tractable reinforcement learning.

\paragraph{Why does $\homingalg$ learn kinematic inseparability?} 
A detailed proof of both theorems is deferred to~\pref{app:psdp-appendix}-\pref{app:appendix-sample-complexity}, but
for intuition, we provide a sketch of how $\homingalg$ learns a
kinematic inseparability abstraction. For this discussion only, we
focus on asymptotic behavior and ignore sampling issues.

Inductively, assume that $\Psi_{h-1}$ is a policy cover of
$\states_{h-1}$, let $D(x,a,x')$ be the marginal distribution over
real and imposter transitions sampled by $\homingalg$ in the
$h^{\textrm{th}}$ iteration
(\pref{line:homer-dataset-step-start}--\pref{line:homer-dataset-step-end}),
and let $\rho$ be the marginal distribution over
$\observations_{h}$. First observe that as $\Psi_{h-1}$ is a policy
cover, we have $\supp(D) =
\observations_{h-1}\times\actions\times\observations_h$, which is
crucial for our analysis. Let $\hat{f} =
(\hat{w},\forabs,\backabs)$ be the output of the
regression oracle $\cpeoracle$ in this iteration. The first
observation is that the Bayes optimal regressor for this problem is
$f_\rho$ defined in~\pref{assum:realizability}, and, with
realizability, in this asymptotic discussion we have $\hat{f} \equiv
f_\rho$.

Next, we show that for any two observations $x_1',x_2' \in
\observations_h$, if $\backabs(x_1') = \backabs(x_2')$ then
$x_1'$ and $x_2'$ are backward kinematically inseparable. If this
precondition holds, then
\begin{align*}
\forall x \in \observations_{h-1}, a \in \actions: ~~ f_\rho(x, a, x_1') = \hat{p}(\forabs(x), a, \backabs(x_1')) = \hat{p}(\forabs(x), a, \backabs(x_2')) = f_\rho(x, a, x_2').
\end{align*}
Then, by inspection of the form of $f_\rho$, we have
\begin{align*}
f_\rho(x,a,x_1') = f_\rho(x,a,x_2') \Leftrightarrow \frac{T(x_1' \mid x,a)}{\rho(x_1')} = \frac{T(x_2' \mid x,a)}{\rho(x_2')}.
\end{align*}
As this identity holds for all $x \in \observations_{h-1}, a \in
\actions$ and trivially when $x \not\in \observations_{h-1}$, it is
easy to see that $x_1',x_2'$ satisfy~\pref{def:bki}, so they are
backward KI. Formally, taking expectation with prior $u
\in \Delta(\observations, \actions)$, we have
\begin{align*}
\PP_u(\obs, \action \mid \obs'_1) = \frac{T(\obs'_1 \mid \obs, \action)u(\obs, \action)}{\sum_{\tilde{\obs}, \tilde{\action}} T(\obs'_1 \mid \tilde{\obs}, \tilde{\action}) u(\tilde{\obs}, \tilde{\action})} &= \frac{\frac{\rho(\obs'_1)}{\rho(\obs'_2)}T(\obs'_2 \mid \obs, \action)u(\obs, \action)}{\sum_{\tilde{\obs}, \tilde{\action}} \frac{\rho(\obs'_1)}{\rho(\obs'_2)}T(\obs'_2 \mid \tilde{\obs}, \tilde{\action}) u(\tilde{\obs}, \tilde{\action})}\\[.1cm]
 &= \frac{T(\obs'_2 \mid \obs, \action)u(\obs, \action)}{\sum_{\tilde{\obs}, \tilde{\action}} T(\obs'_2 \mid \tilde{\obs}, \tilde{\action}) u(\tilde{\obs}, \tilde{\action})} =\PP_u(\obs, \action \mid \obs'_2).
\end{align*}
This implies that $\backabs$ is a backward KI abstraction over
$\observations_h$.

\paragraph{Efficient Implementation of $\homingalg$.} 
As stated, the most computationally expensive component of
$\homingalg$ is the $\Ocal(N\horizon)$ calls to $\psdpalg$ for
learning the policy covers. This has a total computational cost of
$\Ocal(N\horizon^3\npol + \cpol(\npol) N\horizon^2)$, but in practice,
it can be significantly reduced. Empirically, we use two important
optimizations: First, we parallelize the $N$ calls to $\psdpalg$ for
optimizing the internal reward functions in each iteration of the
algorithm
(\pref{line:homer-learn-policy-start}--\pref{line:homer-learn-policy-end}). Second
and perhaps more significantly, we can attempt to find compositional
policies using a greedy search procedure ($\gpsalg$). Here, rather
than perform full dynamic programming to optimize reward $R_{i,h}$, we
find the policy $\hat{\policy}_h$ for the last time step, and then we
append this policy to the best one from our cover
$\Psi_{h-1}$. Formally, we compute $\hat{\pi}_{1:h-1} = \argmax_{\pi
  \in \Psi_{h-1}} V(\pi\circ\hat{\pi}_h; R_{i,h})$, where $V(\cdot;R)$
is the value function with respect to reward function $R$ and $\circ$
denotes policy composition. Then, since the optimal value with respect
to $R_{i,h}$ is at most $1$, we check if $V(\hat{\pi}_{1:h-1}\circ
\hat{\pi}_h; R_{i,h}) \geq 1-\epsilon$. If it is we need not perform a
full dynamic programming backup, otherwise we revert to
$\psdpalg$. $\gpsalg$ may succeed even though the policies we are
trying to find are non-compositional in general. In the favorable
situation where $\gpsalg$ succeeds, actually no further samples are
required, since we can re-use the real transitions from the regression
step, and we need only solve one contextual bandit problem, instead of
$H$. Empirically, both of these optimizations may yield significant
statistical and computational savings.

\section{Can We Use Existing State Abstraction Oracles?}
\label{sec:state-abstraction-failures}

Our analysis so far verifies the utility of the backward KI state
abstraction: it enables efficient exploration via $\oraclealg$, and it
can be learned using contrastive estimation procedure with a
bottleneck classifier as in $\homingalg$. Do other, previously
studied, state abstractions admit similar properties?

In this section, we discuss prior approaches for learning state
abstractions. In Block-MDPs, we show that these approaches fail to
find a policy cover when interleaved with a $\psdpalg$-style routine
used to find policies that visit the abstract states, following the
structure of $\homingalg$. Note that it may be possible to embed these
approaches in other algorithmic frameworks and successfully explore.

\begin{figure}[t]
\centering
\begin{minipage}{\textwidth}
\centering
\subfloat[]{
     \includegraphics[scale=0.27]{./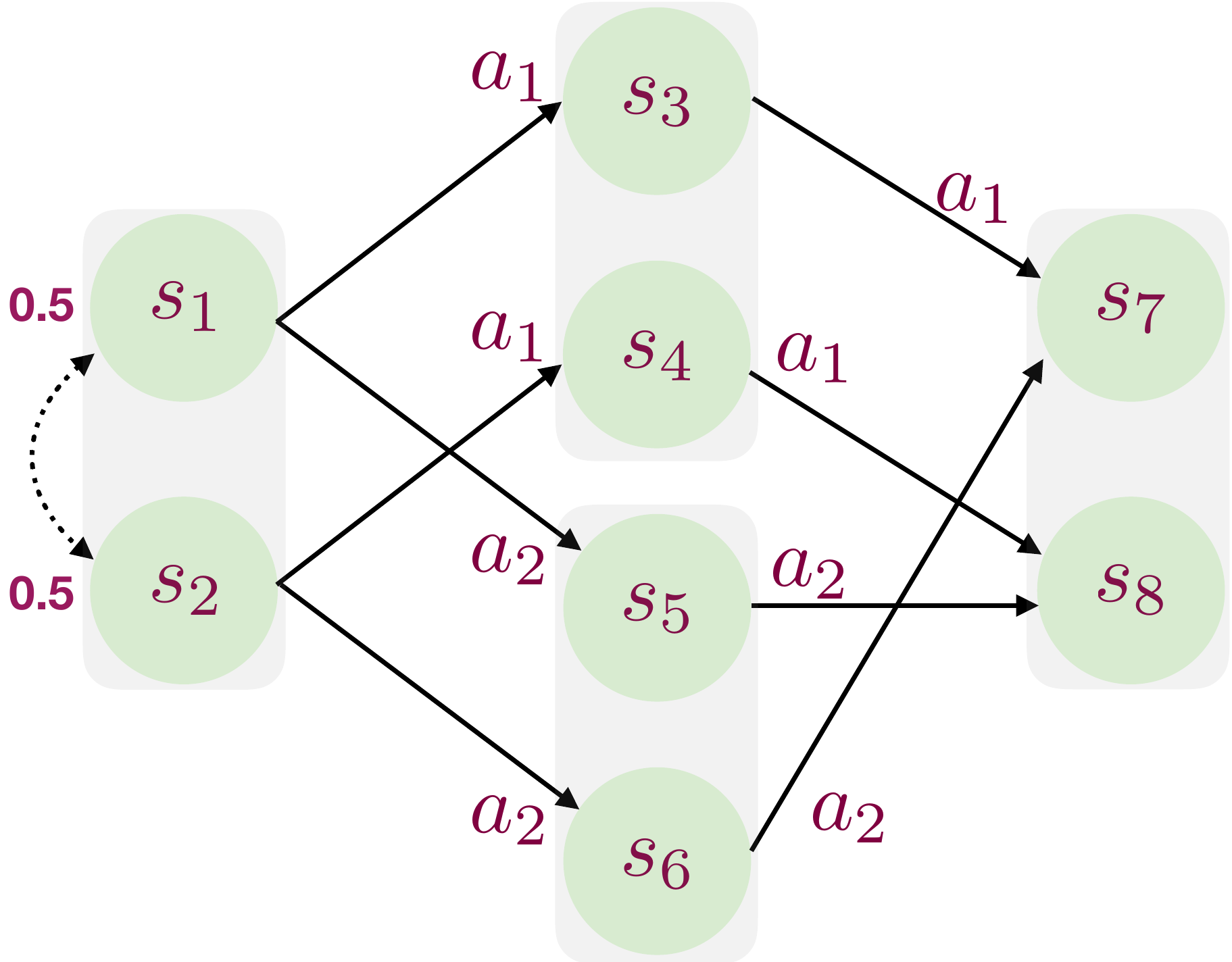}\label{fig:simon-deepak-counterexample}
}
\hspace{\stretch{1}}%
\subfloat[]{
     \includegraphics[scale=0.3]{./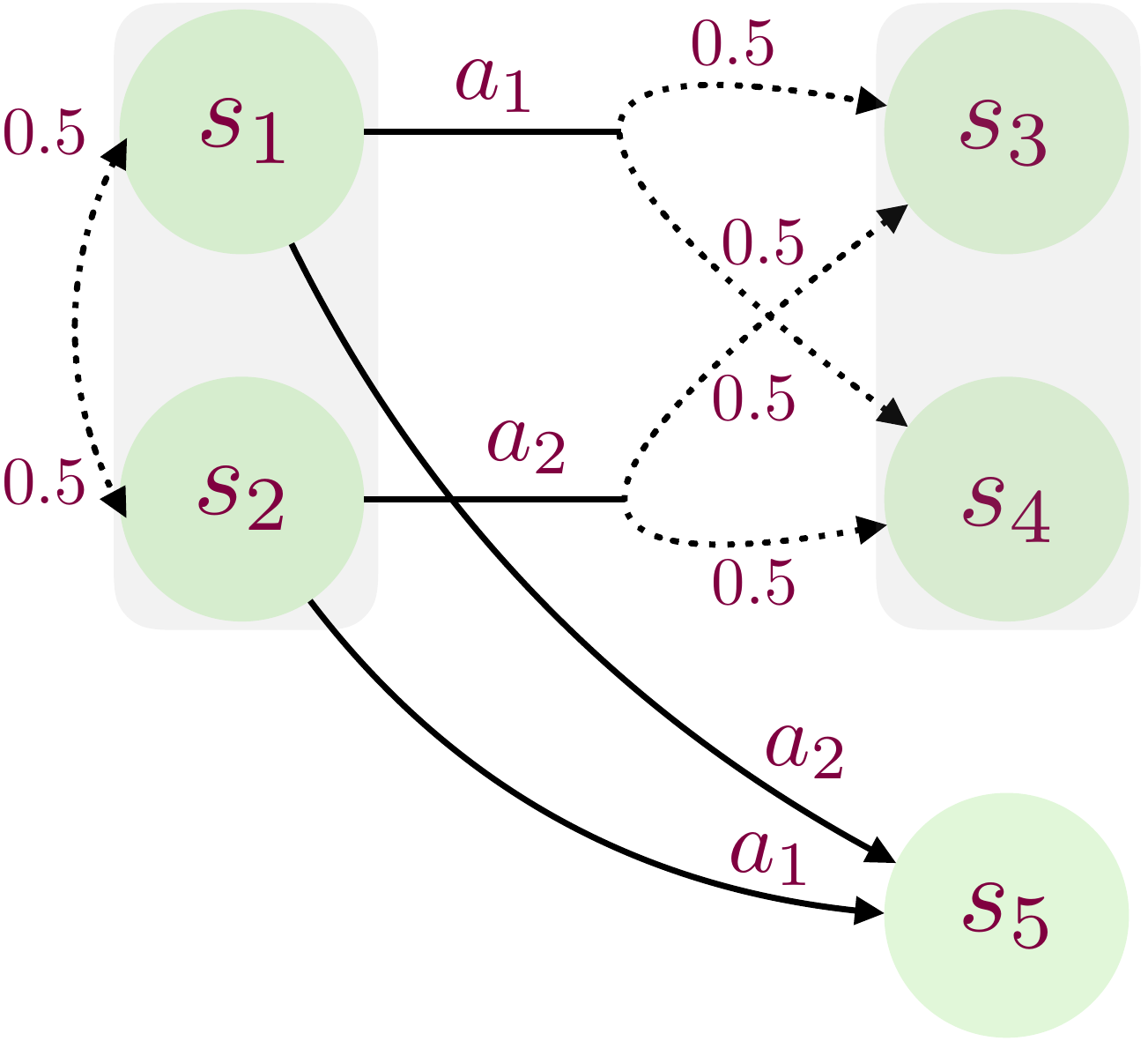}\label{fig:simon-alg-counterexample}
}
\hspace{\stretch{1}}%
\subfloat[]{
  \includegraphics[scale=0.2]{./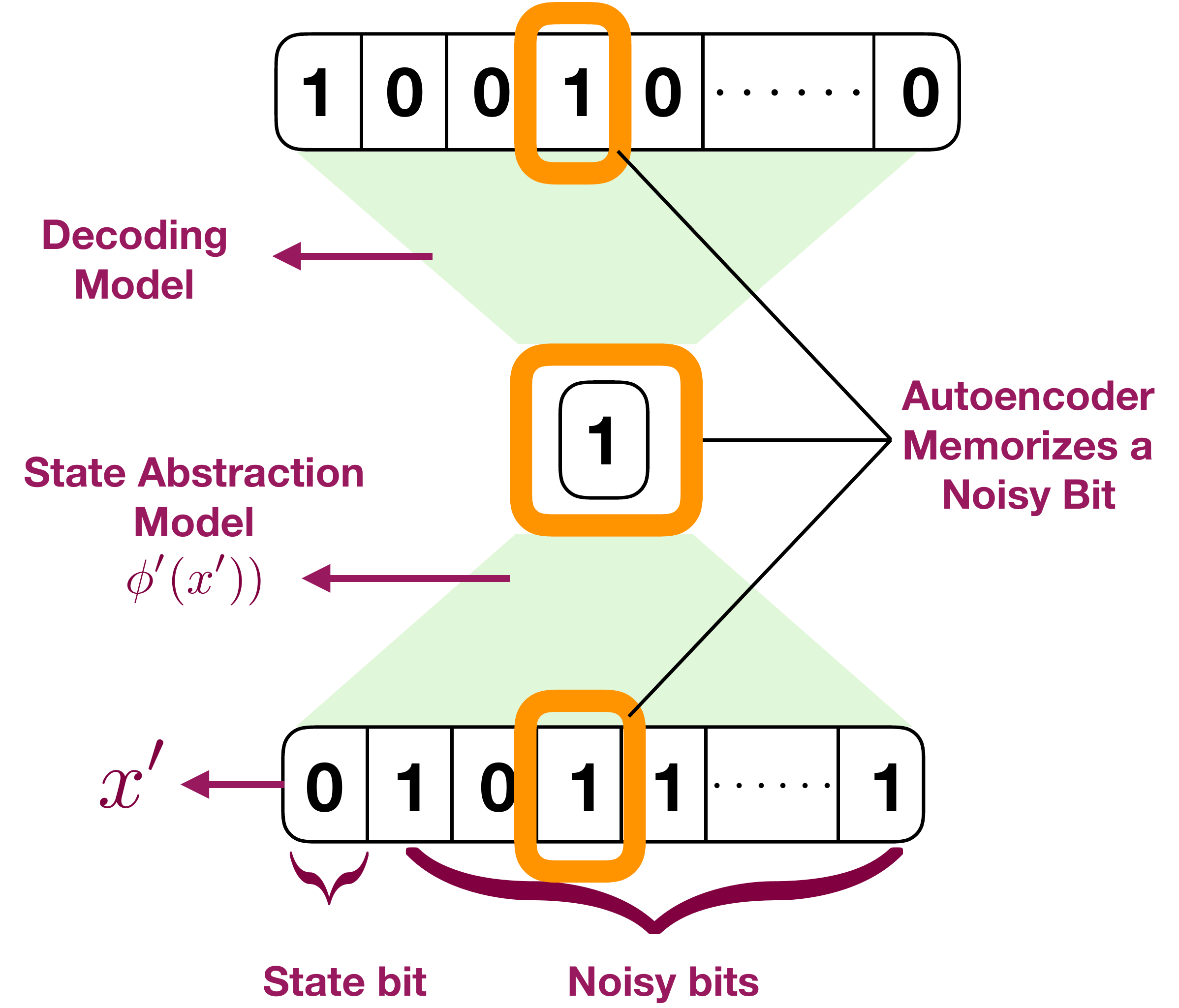}\label{fig:autoencoder-counterexample}
  }
\hspace{\stretch{1}}
\vspace{-0.3cm}
\caption{Counterexamples for prior work on abstraction/representation
  learning. We do not show observations for brevity.  \textbf{Left:} A
  Block MDP where predicting the previous action from
  observations~\cite{pathak2017curiosity} or predicting the previous
  abstract state and action fails~\cite{du2019provably}. 
\textbf{Middle:} A Block-MDP where the model-based algorithm
of~\citet{du2019provably} fails.
\textbf{Right:} Illustration of a failure mode for the autoencoding approach of~\citet{tang2017exploration}, where optimal reconstruction loss is attained by memorizing noise.
See text for more details.}
\label{fig:counterexamples}
\end{minipage}%
\end{figure}

\paragraph{Predicting Previous Action from Observations.}
Curiosity-based approaches learn a representation by predicting the
previous action from the previous and current
observation~\cite{pathak2017curiosity}. When embedded in a
$\psdpalg$-style routine, this approach fails to guarantee coverage of
the state space, as can be seen
in~\pref{fig:simon-deepak-counterexample}. A Bayes optimal predictor
of previous action $a$ given previous and current observations $x,x'$
collapses the observations generated from $\{s_3,s_4\}$,
$\{s_5,s_6\}$, and $\{s_7,s_8\}$ together. To see why, the agent can
only transition to $\{s_3,s_4\}$ by taking action $a_1$, so we can
perfectly predict the previous action even if all of the observations
from these states have the same representation. This also happens with
$\{s_5,s_6\}$ and $\{s_7,s_8\}$. Unfortunately, collapsing
observations from $\{s_7,s_8\}$ together creates an unfavorable
tie-breaking scenario when trying to find a policy to visit this
representation. For example, the policy that takes action $a_1$ in
$s_1$ and $s_3$ and $a_2$ in $s_2$ and $s_6$ deterministically reaches
$s_7$, so it visits this representation maximally, but it never visits
$s_8$. So this approach for representation learning, interleaved with
policy optimization, does not yield a policy cover.

\paragraph{Predicting Previous Action and Abstract State.} 
Instead of predicting the previous action,~\citet{du2019provably}
learn a representation by predicting the previous action \emph{and}
previous abstract state. As they show, this approach provably explores
a restricted class of Block-MDPs, but unfortunately it fails in the
general case. For example in~\pref{fig:simon-deepak-counterexample}, a
Bayes optimal predictor collapses observations from $\{s_1,s_2\},
\{s_3,s_4\}, \{s_5,s_6\}$, and $\{s_7,s_8\}$, leading to the same
failure for policy optimization as the curiosity-based approach. This
state collapse is caused by a stochastic start state; $\{s_1,s_2\}$
cannot be separated by this approach and using the joint
representation for $\{s_1,s_2\}$ as a prediction target causes a
cascading failure. Note that~\citet{du2019provably} assume a
deterministic start state in their analysis.

Instead of a $\psdpalg$-style routine, \citet{du2019provably} use a
model-based approach for building a policy cover, where the learned
policies operate directly on the abstract states. Actually this
approach avoids the tie-breaking issue in policy optimization and does
succeed for the example in~\pref{fig:simon-deepak-counterexample}, but
it fails in~\pref{fig:simon-alg-counterexample}. If policies are
defined over abstract states, we must take the same action in
$s_1$ and $s_2$ (as this approach can never separate a stochastic
start state), so we can reach $\{s_3,s_4\}$ with probability at most
$\nicefrac{1}{2}$, while a policy operating directly on observations
could reach these states with probability $1$. Chaining this
construction together shows that this approach can at best find an $\alpha$-policy cover where $\alpha$ is exponentially small
in the horizon.

\paragraph{Training Autoencoders.} 
The final approach uses an autoencoder to learn a representation,
similar to~\citet{tang2017exploration}. Here we representation $\phi$
and decoder $U$ by minimizing reconstruction loss $\texttt{dist}(x,
U(\phi(x)))$ over a training set of raw observations, where
$\texttt{dist}$ is a domain-specific distance
function.~\pref{fig:autoencoder-counterexample} shows that this
approach may fail to learn a meaningful representation altogether. The
problem contains just two states and the observations are
$d$-dimensional binary vectors, where the first bit encodes the state
and the remaining bits are sampled from
$\textrm{Ber}(\nicefrac{1}{2})$ (it is easy to see that this is a
Block-MDP). For this problem, optimal reconstruction loss is achieved
by a representation that ignores the state bit and memorizes the
noise. For example, if $\phi$ has a single output bit (which suffices
as there are only two states), it is never worse to preserve a noise
bit than the state bit. In fact, if one state is more probable than
the other, then predicting a noise bit along with the most likely
state results in \emph{strictly} better performance than predicting
the state bit.
So a representation using this approach can ignore state
information and is not useful for exploration.

\paragraph{Bisimulation.} 
A number of other abstraction definitions have been proposed and
studied in the state abstraction literature
(c.f.,~\cite{givan2003equivalence,Li06towardsa}). The finest
definition typically considered is \emph{bisimulation} or
\emph{model-irrelevance abstraction}, which aggregates two
observations $x_1,x_2$ if they share the same reward function and the
same transition dynamics over the abstract states, e.g., for each
abstract state $s'$, $T(\phi(x') = s' \mid x_1,a) = T(\phi(x') = s'
\mid x_2,a)$, where $\phi$ is the abstraction. A natural reward-free
notion simply aggregates states if they share the same dynamics over
abstract states, ignoring the reward condition. There are two issues
with using bisimulations and, as a consequence, coarser abstraction
notions. First, the trivial abstraction that aggregates all
observations together is a reward-free bisimulation, which is clearly
unhelpful for exploration. More substantively, learning a
reward-sensitive bisimulation is statistically intractable, requiring
a number of samples that is exponential in horizon~\cite[][Proposition
  B.1]{modi2019sample}.

An even finer definition than bisimulation, which has appeared
informally in the literature, aggregates two observations if they
share the same reward function and the same transition dynamics over
the observations~\cite[][Equation 2]{jiang2018notes}. The reward-free
version is equivalent to forward kinematic inseparability. However, we
are not aware of any prior work that attempts to learn such an
abstraction, as we do here.

\paragraph{Summary.}
These arguments show that previously studied state-abstraction or
representation learning approaches cannot be used for provably
efficient exploration in general Block-MDPs, at least when used with a
$\homingalg$-like algorithm. We emphasize that our analysis does not
preclude the value of these approaches in other settings (e.g.,
outside of Block-MDPs) or when used in other algorithms. Moreover, the
remarks here are of a worst case nature and do not necessarily imply
that the approaches are empirically ineffective.

\section{Related Work}
\label{sec:related}

Sample efficient exploration of Markov Decision Processes with a small
number of observed states has been studied in a number of papers~\citep{Rmax,DelayedQ,AuerRL},
initiated by the breakthrough result
of~\citet{kearns2002near}. While
state-of-the-art theoretical results provide near-optimal sample
complexity bounds for these small-state MDPs, the algorithms do not
exploit latent structure in the environment, and therefore, cannot
scale to the rich-observation environments that are popular in modern
empirical RL.

A recent line of theoretical
work~\cite{krishnamurthy2016richobs,jiang2017contextual} focusing on
rich observation reinforcement learning has shown that it is
information-theoretically possible to explore these environments and
has provided computationally efficient algorithms for some special
settings. In particular,~\citet{dann2018oracle} considers
deterministic latent-state dynamics while~\citet{du2019provably}
allows for limited stochasticity. As we have mentioned, the present
work continues in this line by eliminating assumptions required by
these results, further expanding the scope for tractable rich
observation reinforcement learning. 
In addition, our algorithm does not rely on abstract states for
defining policies or future prediction problems, therefore, avoiding
cascading errors due to inaccurate predictions (See discussion
in~\pref{sec:state-abstraction-failures}). On the other hand, we rely
on different computational assumptions, which we show to be
empirically reasonable.

On the empirical side, a number of approaches have been proposed for
exploration with large observation spaces using
pseudo-counts~\cite{tang2017exploration}, optimism-driven
exploration~\cite{chen2017ucb}, intrinsic
motivation~\cite{bellemare2016unifying}, and prediction
errors~\cite{pathak2017curiosity}. While these algorithms can perform
well on certain RL benchmarks, we lack a deep understanding of their
behavior and failure modes. As the examples
in~\pref{sec:state-abstraction-failures} show, using the
representations learned by these methods for provably efficient
exploration is challenging, and may not be possible in some cases.

Most closely related to our work,~\citet{nachum2018near} use a
supervised learning objective similar to ours for learning state
abstractions. However, they do not address the problem of exploration
and do not provide any sample complexity guarantees. Importantly, we
arrive at our supervised learning objective from the desire to learn
kinematic inseparability which itself is motivated by the $\oraclealg$
algorithm.

\section{Proof of Concept Experiments}
\label{sec:exp}
We evaluate $\homingalg$ on a challenging problem called a
\emph{diabolical combination lock} that contains high-dimensional
observations, precarious dynamics, and anti-shaped, sparse rewards.
The problem is googal-sparse, in that the probability of finding the
optimal reward through random search is $10^{100}$.

\begin{figure*}
\includegraphics[scale=0.25]{./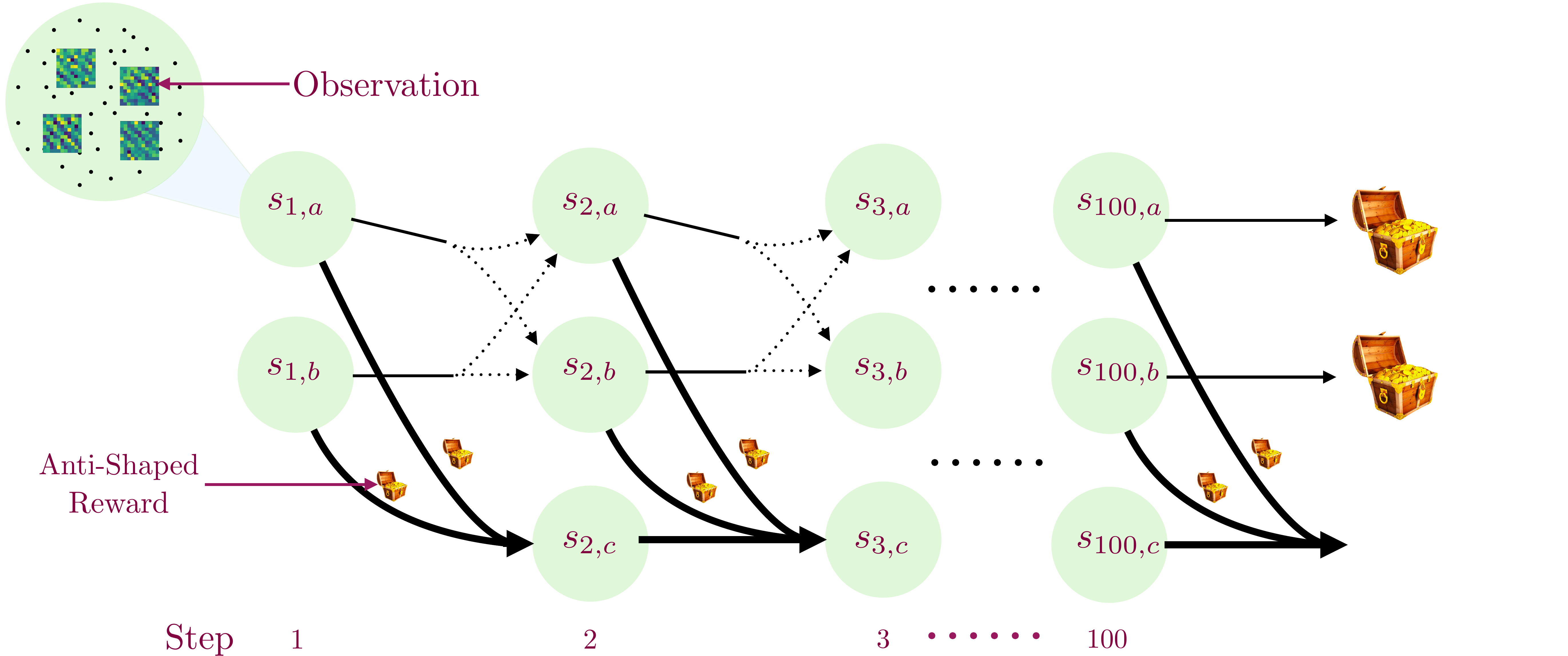}
\caption{Illustrates the diabolical combination lock problem which contains multiple challenges including sparse anti-shaped rewards, rich-observations, long horizons and extremely sparse good rewards.}
\label{fig:diabolical-combolock}
\end{figure*}

\paragraph{The environment.} 
\pref{fig:diabolical-combolock} depicts the structure of the
diabolical combination lock, which formally is a class of RL
environments.
For a fixed horizon $\horizon$ and action space size $K$, the state
space is given by $\states \defeq \{\state_{1, a}, \state_{1, b}\}
\cup \{\state_{h, a}, \state_{h, b}, \state_{h, c}\}_{h=2}^H$ and the
action space by $\actions \defeq \{a_1, ..., a_K\}$.  The agent starts
in either $\state_{1, a}$ or $\state_{1, b}$ with equal
probability. After taking $h$ actions the agent is in $\state_{h+1,
  a}, \state_{h+1, b}$ or $\state_{h+1, c}$.

Informally, the states $\{\state_{h, a}\}_{h=1}^H$ and $\{\state_{h,
  b}\}_{h=1}^H$ are ``good'' states from which optimal return are
still achievable, while the states $\{\state_{h, c}\}_{h=2}^H$ are
``bad'' states from which it it impossible to achieve the optimal
return.  Each good state is associated with a single good action,
denoted $u_h$ for $s_{h,a}$ and $v_h$ for $s_{h,b}$, which transitions
the agent uniformly to one of the two good states at the next time
step. All other actions, as well as all actions from the bad state,
lead to the bad state at the next time.
Formally the dynamics are given by:
\begin{align*}
& T(\cdot \mid \state_{h,a},u_h) = T(\cdot \mid \state_{h,b}, v_h) = \unf\rbr{\cbr{s_{h+1,a},s_{h+1,b}}};\\
& T(s_{h+1,c} \mid \state_{h,a},a') = \one\cbr{a' \ne u_h}, \quad T(s_{h+1,c} \mid \state_{h,b},a') = \one\cbr{a' \ne v_h}, \quad T(s_{h+1,c} \mid \state_{h,c},a') = 1.
\end{align*}
We fix the vectors $u_{1:H},v_{1:H}$ before the learning process by
choosing each coordinate uniformly from $\actions$.

The agent receives a reward of $1$ on taking action $u_\horizon$ in
$\state_{\horizon, a}$ or action $v_\horizon$ in $\state_{\horizon,
  b}$. Upon transitioning from $\state_{h, a}$, $\state_{h, b}$ to
$\state_{h+1, c}$ the agent receives an anti-shaped reward of
$0.1\times\textrm{Ber}(\nicefrac{1}{2})$.
This anti-shaped reward is easy for the agent to collect, but it
prevents the agent from achieving the optimal return in any given
episode.
The agent receives a reward of $0$ for all other transitions. The
reachability parameter is $\nicefrac{1}{2}$ and $V(\policy^\star) = 1$
where for all $h \in [\horizon]$, $\policy^\star$ takes action $u_h$
in $\state_{h, a}$ and action $v_h$ in $\state_{h, b}$.

The agent never directly observes the state and instead receives an
observation $\obs \in \RR^d$ where $d = 2^{\lceil \log_2
  (\horizon+4)\rceil}$, generated stochastically as follows.  First,
the current state information and time step are encoded into one-hot
vectors which are concatenated together and added with an isotropic
Gaussian vector with mean $0$ and variance $0.1$.  This vector is then
padded with an all-zero vector to lift into $d$ dimension and finally
multiplied by a fixed rotation matrix.\footnote{We use a Hadamard
  matrix for the rotation, which is a square matrix with entries in
  $\{-1,+1\}$ and mutually orthogonal rows. For sizes that are powers
  of two, these matrices can be easily constructed using an inductive
  approach known as Sylvester's method.}

Our main experiments consider $\horizon=100$ and $|\actions|=K=10$. In
this case, the probability of receiving the optimal reward when taking
actions uniformly at random is $10^{-100}$.\footnote{For comparison,
  $10^{100}$ is more than the current estimate of the total number of
  elementary particles in the universe.} Moreover, for any fixed
sequence of actions the probability of getting a reward of $1$ is at
most $2^{-\tau}$ where $\tau\defeq\sum_{h=1}^{100} \one\cbr{u_h \ne
  v_h}$. As $u_{1:H}$ and $v_{1:H}$ are chosen randomly, we have
$\EE[\tau] = 90$ in these instances.

\begin{figure}[t]
\centering
\begin{minipage}{\textwidth}
\centering
\hspace*{\stretch{1}}%
\subfloat[]{
     \includegraphics[scale=0.45]{./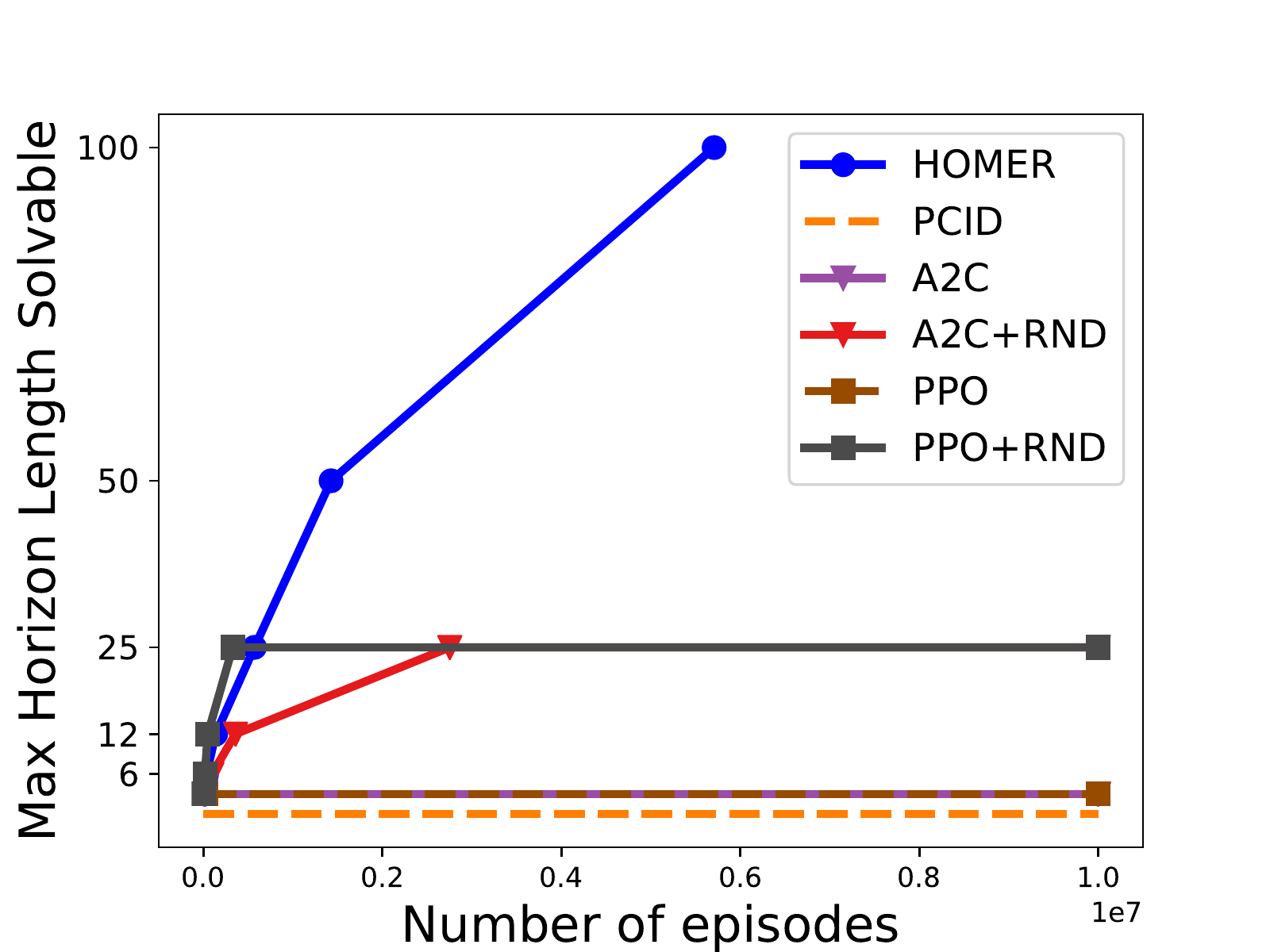}\label{fig:horizon-plot}
}
\hspace{\stretch{2}}%
\subfloat[]{
  \includegraphics[scale=0.45]{./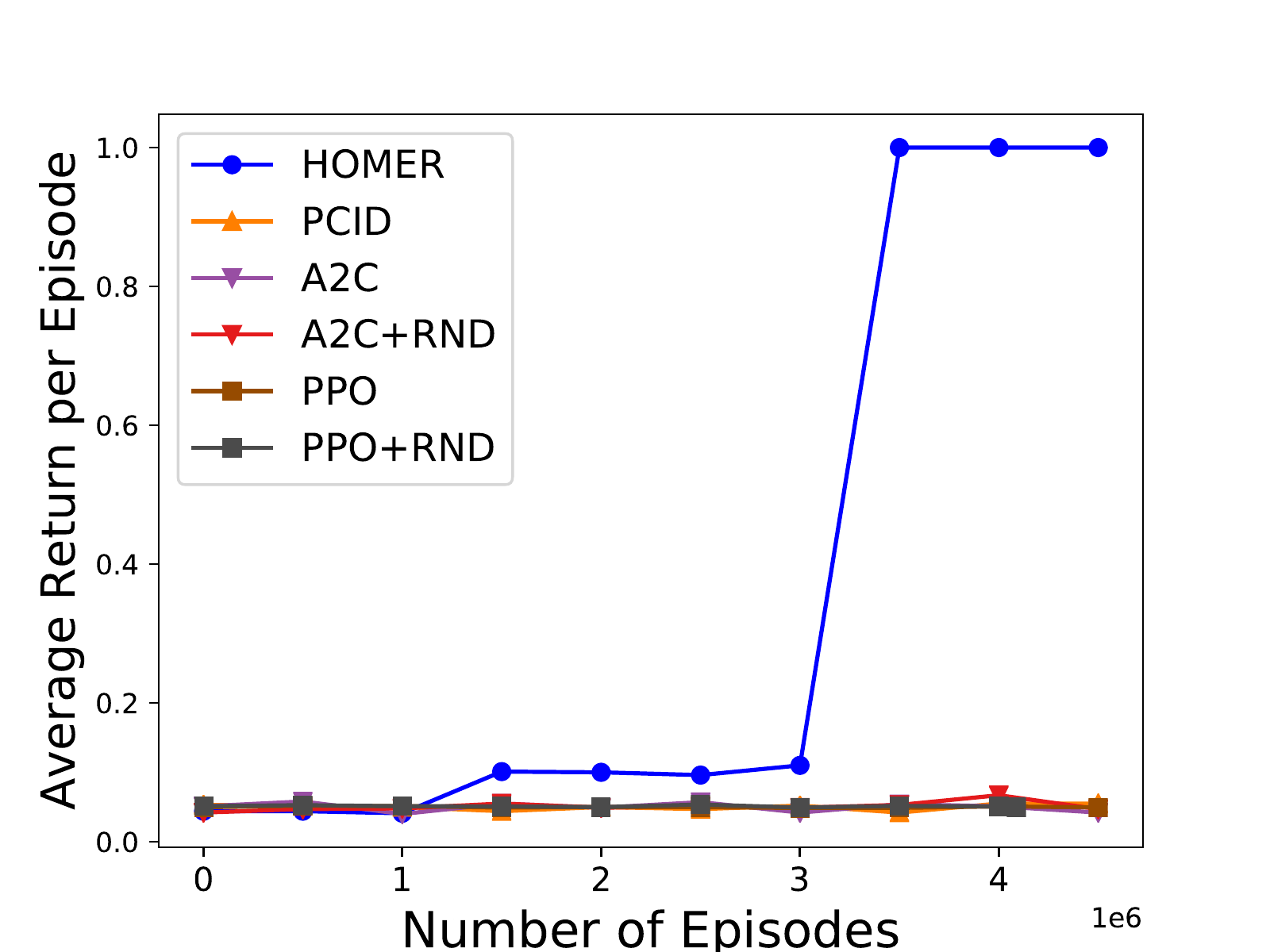}\label{fig:value-function-plot}
  }
\hspace{\stretch{1}}
\vspace{-0.3cm}
\caption{Results on the diabolical combination lock problem with
  action space $(K)$ of size $10$. \textbf{Left:} Horizon of the
  problem against episodes needed to learn a policy with value at
  least half of optimal. \textbf{Right:} Empirical return of policy
  against number of episodes for horizon of 100. The value of the
  optimal policy is $1$.}
\label{fig:result-combo-locks}
\end{minipage}%
\end{figure}

\begin{figure*}
\parbox{0.98\textwidth}{ $\mathbf{\ppo}$ \hfill (fails to explore from time step  5)}\\[.2cm]
\includegraphics[scale=.21]{./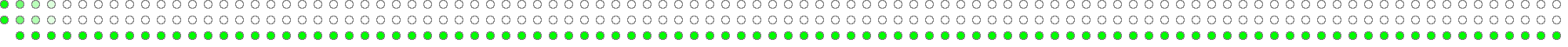}\\[.1cm]
\parbox{0.98\textwidth}{ $\mathbf{{\tt BEST\,\,\,\,RND}}$   \hfill (fails to explore from time step 28)}\\[.2cm]
\includegraphics[scale=.21]{./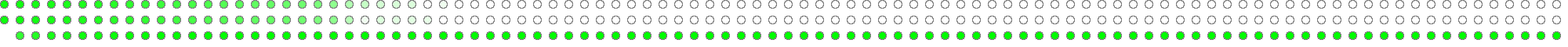}\\[.1cm]
\parbox{0.98\textwidth}{ $\mathbf{\homingalg}$  \hfill (successfully explores at all time steps)}\\[.2cm]
\includegraphics[scale=.21]{./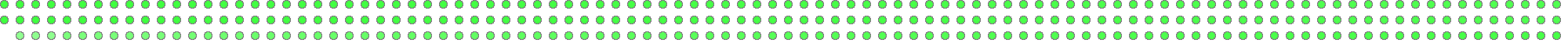}
\caption{Visualization of the visitation probabilities for algorithms on the diabolical combination lock problem. The top row, middle row and the bottom row represent states in $\{\state_{h, a}\}_{h=1}^{100}$, $\{\state_{h, b}\}_{h=1}^{100}$ and $\{\state_{h, c}\}_{h=1}^{100}$ respectively. The $h^{th}$ column represents states reachable at time step $h$. We do not show observations or transition edges for brevity. We sample $100,000$ episodes uniformly through the execution of the algorithm and compute the number of time ${\tt count}[\state]$ the agent visits a state $\state$. The count statistics is shown using the opacity of the fill of each state. Formally, we set opacity of $\state$ as $\propto \ln({\tt count}[\state]+1)$. The more opaque the circles are the more frequently the agent visits them. $\homingalg$  is able to explore well for all time steps unlike baselines. The $\mathbf{{\tt BEST\,\,RND}}$ baseline is the best algorithm (using $\ppo$) with random distillation bonus.}
\label{fig:traces}
\end{figure*}

\paragraph{$\homingalg$ implementation.} 
We use non-stationary deterministic policies, where each policy is
represented as a tuple of $H$ linear models $\policy = (W_1, W_2,
\cdots, W_\horizon)$. Here $W_h \in \mathbb{R}^{|\actions| \times d}$
for each $h \in [H]$. Given an observation $\obs \in \mathbb{R}^d$ at
time step $h$, the policy takes the action $\policy(\obs) \defeq
\argmax_{\action \in \actions} (W_h\obs)_a$.

We represent the state abstraction function $\phi: \observations
\rightarrow [N]$ using a linear model $A \in \mathbb{R}^{N \times d}$,
but rather than a single regression function with bottlenecks on both
observations, we train two separate models. For the backward
abstraction, given a tuple $\obs,\action,\obs'$ we form a vector by
concatenating $B\obs,\one_\action$, and $z$ together, where $B \in
\RR^{M\times d}$, $\one_{\action}$ is the one-hot encoding of the
action, and $z_i \propto \exp((A\obs')_i + g_i)$ applies the Gumbel
softmax trick~\cite{jang2016categorical} to convert $A\obs'$ into a
probability distribution ($g_i$ is an independent Gumbel random
variable). Then we pass the concatenated vector into a two layer
feed-forward neural network with leaky rectified linear
units~\cite{Maas13rectifiernonlinearities} and a softmax output
node,\footnote{We use a softmax output node since we use cross-entropy
  loss rather than square loss in our
  implementation. See~\pref{app:appendix-experiment}.}  to obtain the
prediction. For the forward abstraction, we define the predictor
similarly, but apply the Gumbel softmax trick to $B\obs$.  We learn
the weight matrices $A,B$ that form the abstraction, as well as the
parameters of the feed-forward neural network. We also allow the
capacity of the forward and backward abstractions to be different,
i.e., $M \ne N$.

We also make three other empirically motivated modifications to the algorithm. 
First, we optimize cross-entropy loss instead of squared loss
in~\pref{alg:learn_homing_policy},~\pref{line:homer-learn-encoding-function}. Second,
we implement the contextual bandit oracle
(\pref{alg:rl_via_homing_policies},~\pref{line:psdp-csc-call}) by
training the policy model to predict the reward via square loss
minimization.
Lastly, we use a more sample-efficient data collection procedure:
We collect a set of observed transitions $\{(\obs_i, \action_i,
\obs'_i)\}_{i=1}^n$ using our sampling procedure
(\pref{alg:rl_via_homing_policies},~\pref{line:homer-sampling-procedure}),
and we create imposter transitions by resampling within this set. For
example, when training the model with bottleneck on $\obs'$, we create
imposter transitions $\{(\obs_i, \action_i, \tilde{\obs}_i)\}_{i=1}^n$
where $\tilde{\obs}_i$ are chosen uniformly from
$\{\obs_1',\ldots,\obs_n'\}$.
We also found that the optimization is more stable if we initialize
the state abstraction model by first training without the
discretization step.  This is achieved by removing the Gumbel-Softmax
step.
For full details on optimization and hyperparameter tuning
see~\pref{app:appendix-experiment}.

\paragraph{Baselines.} 
We compare our method against Advantage Actor Critic ($\aac$)
~\cite{mnih16} and Proximal Policy Optimization ($\ppo$)
~\citep{schulman2017proximal}. By default, these algorithms use naive
exploration strategies based on entropy bonuses which are often
insufficient for challenging exploration problems.  Therefore, we also
augment these methods with an exploration bonus based on Random
Network Distillation (RND)~\cite{burda2018exploration}, denoted
$\aacrnd$ and $\ppornd$.  The RND exploration bonus consists of
randomly initializing and fixing a neural network mapping observations
to an embedding space and training a second network to predict the
outputs of the first network using observations collected by the
agent. The prediction error of this network on the current observation
is then used as an auxiliary reward signal. The intuition is that the
prediction error will be high on unseen observations, thus rewarding
the agent for reaching novel states.

We also consider the model-based algorithm ($\du$)
of~\citet{du2019provably} which we have discussed
in~\pref{sec:state-abstraction-failures}. This algorithm enjoys a
sample complexity guarantee for Block MDPs with certain margin-type
assumptions, but the diabolical combination lock does not satisfy
these assumptions, so the guarantee does not apply. We run this
algorithm primarily to understand if these assumptions are also
empirically limiting.

We use publicly available code for running these baselines. For
details see~\pref{app:appendix-experiment}.

\paragraph{Results.} 
\pref{fig:horizon-plot} reports the number of episodes needed to
achieve $\nicefrac{1}{2}$ of the optimal value, for different values
of the horizon $H$.
We run each algorithm for a maximum of $10$ million episodes.  $\aac$
and $\ppo$ fail to solve the problem for $\horizon>3$, and do not
escape the local maximum of collecting the anti-shaped reward, which
is unsurprising as these algorithms do not engage in strategic
exploration. $\du$ fails to solve the problem for any horizon length
as it cannot learn to separate observations from $\state_{h, a}$ and
$\state_{h, b}$ for any $h$. As $\du$ defines the policy class over
abstract states, the agent is forced to take a single action in both
$\state_{h, a}$ and $\state_{h, b}$ which leads to a failure to
explore. This confirms that the margin condition for $\du$ is both an
empirical and theoretical limitation. $\aacrnd$ and $\ppornd$ both
perform reasonably well and are able to solve problems with horizon up
to $25$. However, they fail for the longer horizons of $50$ and
$100$. $\homingalg$ is the only algorithm able to solve the problem
for all horizon lengths tested.

\pref{fig:value-function-plot} focuses on $H=100$ and shows the
average return per episode throughout the training process. None of
the baselines learn the optimal policy. The plot for $\homingalg$
shows three plateaus corresponding to three different phases of the
algorithm: (i) learning a policy cover, (ii) learning the
reward-sensitive policy $\hat{\policy}$ and (iii) sampling actions
from $\hat{\policy}$.

We visualize the visitation probabilities for the best performing
algorithms in~\pref{fig:traces}.
Observe that the visitation probabilities for $\homingalg$ do not
decay significantly with time step, as they do for the baselines.
 
\begin{figure*}
\includegraphics[scale=.22]{./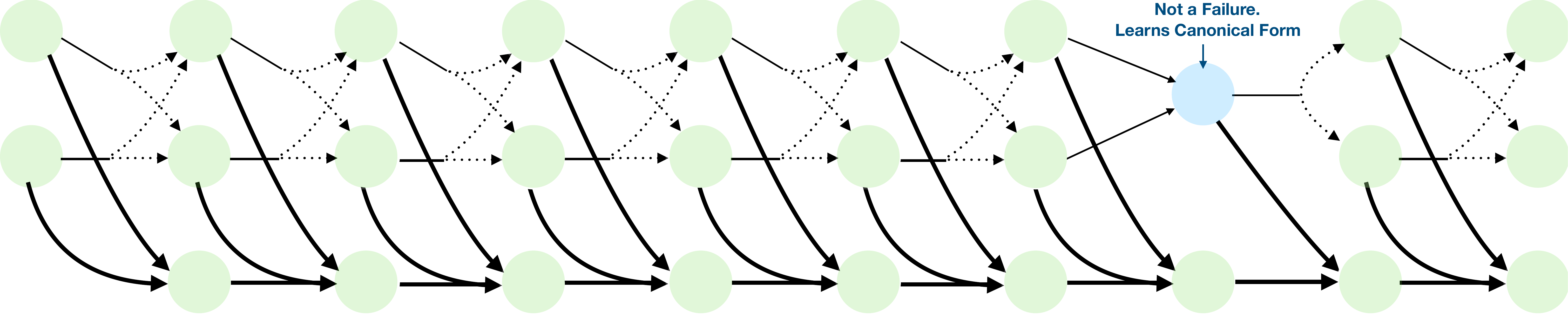}
\vspace{-0.2cm}
\caption{Recovered dynamics for the first $10$ time steps of the
  diabolical combination lock with $H=100$. The abstract states in the
  first two rows correspond to $\{\state_{h, a} \mid h \in [10]\}$ and
  $\{\state_{h, b} \mid h \in [10]\}$ while the last row correspondsto
  $\{\state_{h, c} \mid h \in [10]\}$. At time step $8$, the model
  maps observations from $\{\state_{8, a}, \state_{8, b}\}$ to the
  same abstract state since actions $u_8$ and $v_8$ are the same,
  which makes these states kinematically inseparable.}
\label{fig:learned-dynamics}
\end{figure*}

We visualize the recovered state abstraction and dynamics
in~\pref{fig:learned-dynamics} for the first 10 time steps when
$H=100$. We use the learned forward and backward abstractions at each
time step to recover the state abstraction, and we estimate the
transition dynamics by collecting trajectories from the policy cover
and recording the empirical frequencies of transitioning between abstract
states.\footnote{Formally, let $\{\Psi_h\}_{h=1}^\horizon$ be the
  learned policy cover and $\{\overline{\phi}_h\}_{h=1}^\horizon$ be
  the learned state abstraction which is a composition of forward and
  backward abstractions. For time step $h$, we draw $n$ transitions
  $\obs_i, \action_i, \obs' \sim \unf(\Psi_{h-1}) \circ
  \unf(\actions)$. The model $\hat{T}_h(j \mid i, \action)$ for time
  step $h$ is given by: $\hat{T}(j \mid i, \action) =
  \frac{\sum_{k=1}^n \one\{\overline{\phi}_h(\obs'_k) = j, \action_k =
    \action, \overline{\phi}_{h-1}(\obs_k) = i \}}{\sum_{k=1}^n
    \one\{\action_k = \action, \overline{\phi}_{h-1}(\obs_k) = i \}}$
  for any learned abstract states $j, i$ and action $\action$.}
As~\pref{fig:learned-dynamics} shows, $\homingalg$ is able to recover
the canonical latent transition dynamics.\footnote{Recall that
  canonical form of a Block MDP is created by merging kinematically
  inseparable states into one state.}  Over all 100 time steps, we
found that $\homingalg$ only made a single state abstraction error,
merging observations from two kinematically separable states together
into a single abstract state.
Note that, $\homingalg$ does not use the learned model for planning,
which helps avoids cascading errors that arise from such
state-abstraction errors.  However, utilizing the learned dynamics for
model-based planning may improve sample efficiency and is a natural
direction for future work.

\section{Conclusion}
We present $\homingalg$, a model-free reinforcement learning algorithm
for rich observation environments. Theoretically, we prove that
$\homingalg$ learns both a policy cover and a near optimal policy for
any Block MDP using only polynomially many trajectories and in
polynomial time, under certain computational and expressivity
assumptions. Empirically, we show that $\homingalg$ can solve a
difficult exploration problem in a tractable manner, and even recover
the latent transition dynamics.  Experimenting with $\homingalg$ on
real-world reinforcement learning scenarios is a natural direction for
future work.

\vspace{-0.2cm}
\section*{Acknowledgements}
We would like to thank Alekh Agarwal and Miroslav Dudik for formative
discussion, Nan Jiang for pointers to previous results on state
abstraction, and Wen Sun for proof reading an early version of this
paper.  We also thank Vanessa Milan for guidance on the design of the
figures in the paper.

\vfill
\pagebreak

\bibliographystyle{plainnat}
\bibliography{main}

\appendix
\pagebreak
\onecolumn
\section*{Appendices}

\begin{table*}[!h]
\centering
\begin{tabular}{c|l}
\hline
\textbf{Symbol} & \textbf{Definition} \\
\hline
$[N]$ & Defines the set $\{1, 2, \cdots, N\}$ for any $N \in \mathbb{N}$.\\
$\Delta(\Ucal)$ & The space of probability distribution over the set $\mathcal{U}$.\\
$\unf(\Ucal)$ & A uniform distribution over the set $\Ucal$.\\
$\textrm{supp}(p)$ & Support of a distribution. For any $p \in \Delta(\Ucal)$, we have $\textrm{supp}(p) \defeq \{u \in \Ucal \mid p(u) > 0\}$.\\
\hline
$\mdp$ & Denotes a Block Markov Decision Process (Block MDP). \\ 
$\horizon$ & Number of actions the agent takes for any episode.\\
$\states$ & A finite state space of $\mdp$. The process is layered, so states also encode the time step. \\
$\states_h$ & The set of states reachable at time step $h$. $\states \defeq \cup_{h\in [\horizon]} \states_h$.\\
$\observations$ & The observation space. May be infinitely large, but is assumed to be countable. \\
$\observations_h$ & Set of observations reachable at time step $h$. $\observations \defeq \cup_{h\in [\horizon]} \observations_h$.\\
$\actions$ & The finite discrete action space of $\mdp$\\
$\emf$ & An emission function $\emf: \state \rightarrow \Delta(\observations)$. $\emf(\obs \mid \state)$ denotes the probability of \\ 
& observing $\obs$ in state $\state$. Note that $\textrm{supp}(\emf(\cdot | \state)) \cap \textrm{supp}(\emf(\cdot | \state')) = \emptyset$ when $\state \ne \state'$.\\
$\decoder$ & A decoder function $\decoder: \observations \rightarrow \states$. $\decoder(\obs) = \state$ iff $\emf(\obs|\state) > 0$. \\ %
$\startdist(\state)$ & Probability of starting in state $\state$ at the beginning of any episode.\\
$\transition(\state' \mid \state, \action)$ & The probability of transitioning to $\state' \in \states$ when taking action $\action \in \actions$ in $\state \in \states$.\\
$\policy$ & A policy $\policy: \observations \rightarrow \Delta(\actions)$, which may or may not be stationary. \\
$(\policy_1, \ldots, \policy_h)$ & A $h$-step policy where the $t^{\textrm{th}}$ action ($1\le t \le h$) is taken according to $\policy_t$.\\
$\policies$ & The policy class, $\policies: \observations \to \Delta(\actions)$. \\ %
$\nspolicies$ & The set of non-stationary policies: $\nspolicies \defeq \{(\policy_1,\ldots,\policy_\horizon): \policy_t \in \policies\}$. \\ %
$\PP_\policy(\state)$ & Probability of visiting $\state$ when following $\policy$, from the starting distribution $\startdist$.\\
$\homingp_\state$ & Homing policy of the state $\state$, $\homingp_\state \defeq \arg\max_{\policy \in \policies_{\textrm{NS}}}\PP_{\policy}(\state)$.\\
& Due to~\pref{lem:homing_deterministic}, we take $\homingp_\state$ to be deterministic and non-stationary.\\
$\homingp_\obs$ & Analogous homing policy of the observation $\obs$, $\homingp_\obs \defeq \arg\max_{\policy \in \policies_{\textrm{NS}}} \PP_\policy(\obs)$\\
& It is easy to see from the Block MDP assumption that $\homingp_\obs = \homingp_{\state}$ where $\state = \decoder(\obs)$.\\
$V(\policy; R)$ & Value for (non-stationary) policy $\policy$ on reward function $R$ from starting distribution $\startdist$. \\
& $R$ may have type $R:\Xcal\to \RR$ or $R: (\Xcal \times \Acal) \to \RR$ and may also be stochastic. \\
$V(\state; \policy,  R)$ and $V(\obs; \policy, R)$ & Value functions for $\policy$ on $R$, defined over states or observations.\\
$\eta(\state)$ & Maximum visitation probability for state $\state$, $\eta(\state) \defeq \sup_{\policy \in \policies} \PP_\policy(\state)$. \\
$\eta_{min}$ & Reachability parameter, $\eta_{min} \defeq \min_{\state \in \states}\eta(\state)$.\\
$\Fcal$ & Regressor class containing functions $f$ from $\observations \times \actions \times \observations \rightarrow [0, 1]$. \\
$\Phi_N$ & Decoder function class $\Phi_N: \observations \to [N]$, for a fixed $N$.\\
$\timestep$ & Time function that maps states/observations to the time step where they are reachable, \\
& which is well-defined due to the layered assumption. \\ %
\hline
\end{tabular}
\caption{List of notations and brief definitions. Operators defined on
  states are lifted to observations in the natural way, e.g.,
  $\startdist(\obs) \defeq \sum_{\state}\startdist(\state)\emf(\obs \mid
  \state)$.}
\label{tab:symbol-notation}
\end{table*}

\paragraph{Notation and Overview.}
See~\pref{tab:symbol-notation} for an overview of the notation and definitions used in this appendix. 
The appendix is structured as follows:
\begin{packed_item}
\item \pref{app:appendix-homing}: Properties of homing policies;
\item \pref{app:appendix-kinematic-inseparability}: Properties of kinematic inseparability;
\item \pref{app:psdp-appendix}: Basic results for Policy Search from Dynamic Programming (PSDP);
\item \pref{app:appendix-oracle-alg}: Analysis of the $\oraclealg$ algorithm;
\item \pref{app:appendix-sample-complexity}: Analysis of the $\homingalg$ algorithm;
\item \pref{app:supporting}: Supporting results;
\item \pref{app:appendix-experiment}: Experimental details.
\end{packed_item}

\section{Properties of Homing Policies}
\label{app:appendix-homing}

In this section we prove some basic properties of homing policies. For
this section only, we consider a fully expressive policy set $\Upsilon
\defeq (\observations \to \Delta(\actions))$. We further define the
set of \emph{all} deterministic policies $\Upsilon_{\textrm{det}}
\defeq (\observations \to \actions)$. Clearly we have
$\Upsilon_{\textrm{det}} \subset \Upsilon$. Note that both of these
classes contain non-stationary policies due to the layered structure
of the environments we consider. In particular, we have
$(\pi_1,\ldots,\pi_H) \in \Upsilon$ whenever $\pi_h \in \Upsilon$ for
all $h$, with a similar statement for $\Upsilon_{\textrm{det}}$.

The first result is that for every state, there exists a
\emph{deterministic non-stationary} policy $\policy \in
\Upsilon_{\textrm{det}}$ that is a homing policy for that state. This
motivates our decision to restrict our search to only these policies
in experiments. The result also appears in~\citep{bagnell2004policy},
but we provide a proof for completeness.

\begin{lemma}
\label{lem:homing_deterministic}
For any, possibly stochastic, reward function $R$, we have
\begin{align*}
\max_{\policy \in \Upsilon_{\textrm{det}}} V(\policy; R) = \max_{\policy \in \Upsilon} V(\policy; R),
\end{align*}
where $V(\policy;R)$ is the value for policy $\policy$ under reward
function $R$.  In particular, the result holds for $R(\obs) =
\one\{\decoder(\obs) = \state\}$, for any $\state \in
\states$, which yields:
\begin{align*}
\max_{\policy \in \Upsilon_{\textrm{det}}} \PP_{\policy}(\state) = \max_{\policy \in \Upsilon} \PP_{\policy}(\state).
\end{align*}
\end{lemma}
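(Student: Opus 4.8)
The plan is to prove the two inequalities separately. Since $\Upsilon_{\textrm{det}} \subset \Upsilon$, the bound $\max_{\policy \in \Upsilon_{\textrm{det}}} V(\policy; R) \le \max_{\policy \in \Upsilon} V(\policy; R)$ is immediate, so all the content lies in the reverse direction: every stochastic policy is dominated by some deterministic one. I would establish this by a finite-horizon dynamic programming argument, exploiting the fact that although policies act on observations, the disjoint-emission assumption (\pref{assumption:disjoint-emission}) makes the current observation a Markovian statistic.

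Concretely, I would first record the structural fact that, conditioned on the current observation $\obs_h$ at step $h$ (equivalently the state $s_h = \decoder(\obs_h)$), the distribution of the future $(\obs_{h+1}, a_{h+1}, r_{h+1}, \ldots)$ under any policy is independent of the history $\obs_{1:h-1}, a_{1:h-1}$. This holds because $s_{h+1} \sim \transition(\cdot \mid s_h, a_h)$, $\obs_{h+1} \sim \emf(s_{h+1})$, and $r_h \sim R(\obs_h, a_h, \obs_{h+1})$ all depend on the past only through $(\obs_h, a_h)$. This lets me define observation value functions by backward induction: set $V^\star_{H+1} \equiv 0$ and, for $h = H, \ldots, 1$, define $Q^\star_h(\obs, a) \defeq \EE[r_h + V^\star_{h+1}(\obs_{h+1}) \mid \obs_h = \obs, a_h = a]$, $V^\star_h(\obs) \defeq \max_{a \in \actions} Q^\star_h(\obs, a)$, and the greedy deterministic rule $\policy^\star_h(\obs) \defeq \argmax_{a \in \actions} Q^\star_h(\obs, a)$. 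Because $\actions$ is finite, each maximum is attained, so $\policy^\star \defeq (\policy^\star_1, \ldots, \policy^\star_H) \in \Upsilon_{\textrm{det}}$ is well defined even when $\observations$ is (countably) infinite.

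Next I would show, by backward induction on $h$, that for every $\obs \in \observations_h$ and every $\policy \in \Upsilon$ the value-to-go satisfies $V(\obs; \policy, R) \le V^\star_h(\obs)$. The inductive step averages the Bellman backup over $\policy_h(\cdot \mid \obs)$ and bounds this average by its maximizing action, invoking the induction hypothesis on $V^\star_{h+1}$; the Markov property above guarantees the backup is valid regardless of how $\policy$ behaved in the past. A symmetric induction shows the greedy policy attains equality, $V(\obs; \policy^\star, R) = V^\star_h(\obs)$. Taking expectations over $\obs_1 \sim \startdist$ then gives $V(\policy; R) \le \EE_{\obs_1}[V^\star_1(\obs_1)] = V(\policy^\star; R)$ for all $\policy \in \Upsilon$, which yields the reverse inequality and hence equality. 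Specializing $R(\obs) = \one\{\decoder(\obs) = s\}$ gives the claim about $\PP_\policy(s)$, since by the layered structure $s$ is reachable at a single time step and visited at most once, so $V(\policy; R) = \PP_\policy(s)$.

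The step I expect to require the most care is the backward-induction domination bound, specifically justifying that we may optimize the action at $\obs_h$ independently of the policy's (possibly randomized) behavior on the rest of the trajectory. This is exactly where the disjoint-emission/Markov property is essential: without it, conditioning on $\obs_h$ would not decouple the future from the past, and the affine-in-$\policy_h(\cdot\mid\obs)$ structure that lets us pass to a point mass would break down. A secondary point worth stating explicitly is that the observation space being only countable, rather than finite, poses no obstacle, since every maximization is taken over the finite set $\actions$.
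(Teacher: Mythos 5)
Your proposal is correct and follows essentially the same route as the paper's proof: a backward dynamic-programming induction that constructs a greedy deterministic policy from the Bellman backups and shows it dominates every (possibly stochastic) policy pointwise, then integrates over the start distribution $\startdist$. Your explicit appeal to the disjoint-emission assumption to justify the Markov property of observations is a point the paper leaves implicit, but it does not change the argument.
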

\begin{proof}
  As $\Upsilon_{\textrm{det}} \subset \Upsilon$, that
  the LHS is at most the RHS is obvious. We are left to establish the
  other direction.

  The proof is a simple application of dynamic programming. Assume
  inductively that there exists a policy $\tilde{\pi}_{h:H} \in
  \Upsilon_{\textrm{det}}$ such that, for any distribution $Q \in
  \Delta(\observations_{h})$, we have
  \begin{align*}
    \EE_{\obs_h \sim Q}\sbr{ V(\tilde{\pi}_{h:H}; R, \obs_h)} \geq \max_{\policy_{h:H} \in \Upsilon} \EE_{\obs_h \sim Q}\sbr{ V(\pi_{h:H}; R, \obs_h)},
  \end{align*}
  where the value function here denotes the future reward, according
  to $R$, when executing the policy from the starting observation
  $\obs_h$. The base case is when $h=H$, in which case, it is easy to
  verify that the claim holds for the policy $\tilde{\pi}_H(\obs_H)
  \defeq \argmax_{\action \in \actions} \EE[ R \mid \obs_H]$.

  Define the policy component for time step $h-1$ as
\begin{align*}
  \forall \obs_{h-1} \in \observations_{h-1}: \tilde{\pi}_{h-1}(\obs_{h-1}) \defeq \argmax_{\action \in \actions} \EE[ R + V(\tilde{\pi}_{h:H}; R, \obs_{h}) \mid \obs_{h-1}].
\end{align*}
Now for any potentially stochastic policy $\pi_{h-1:H}$, and any
distribution $Q \in \Delta(\observations_{h-1})$ we have
\begin{align*}
\EE_{\obs_{h-1} \sim Q}\sbr{ V(\tilde{\pi}_{h-1:H}; R, \obs_{h-1})} & \geq \EE_{\obs_{h-1} \sim Q}\sbr{ R + V(\tilde{\pi}_{h:H}; R, \obs_{h}) \mid a \sim \pi_{h-1}}\\
& \geq \EE_{\obs_{h-1} \sim Q}\sbr{ R + V(\pi_{h:H}; R, \obs_{h}) \mid a \sim \pi_{h-1}}\\
& = \EE_{\obs_{h-1} \sim Q}\sbr{ V(\pi_{h:H}; R, \obs_{h-1})}.
\end{align*}
This proves the inductive step. We conclude the proof by noting that
$V(\policy;R) = \EE_{\obs_1 \sim \startdist} V(\policy;R,\obs_1)$, for
which we have established the optimality guarantee for
$\tilde{\pi}_{1:H}$.
\end{proof}

We also observe that homing policies do not grow compositionally. In
other words, we may not be able to construct homing policies for
states $\states_h$, by appending a one-step policy to the homing
policies for $\states_{h-1}$. Note that this holds even when working
with the unrestricted policy class $\Upsilon$. This observation
justifies the global policy search procedure $\psdpalg$ for finding
the homing policies.

For the statement, for a policy subset $\Pi'$, we use the notation
$\Delta(\Pi')$ to denote the set of \emph{mixture policies} that, on
each episode samples a policy $\pi \in \Pi'$ from a distribution and
executes that policy. Note that this is not the same as choosing a new
policy from the distribution on a per time-step basis.

\begin{lemma}\label{prop:comp-homing} 
There exists a Block MDP $\mdp$ a time step $h \in [H]$ and a state
$\state \in \states_h$ such that
\begin{align*}
\eta(\state) > \sup_{\pi_{\textrm{mix}} \in \Delta(\{\homingp_s\}_{s \in \states_{h-1}}), \pi_h \in \Upsilon} \PP_{\pi_{\textrm{min}} \circ_h \pi_h}(s).
\end{align*}
Here $\pi_{\textrm{mix}}$ is a \emph{mixture policy} over the homing
policies $\{\homingp_s\}_{s \in \states_{h-1}}$ for the states at time
step $h-1$, and $\circ_h$ denotes policy composition at time $h$.
\end{lemma}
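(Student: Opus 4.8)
The plan is to prove the statement by exhibiting an explicit three-level Block MDP and verifying the strict inequality, after first reducing both sides to a linear optimization over the reachable state-marginals at level $h-1$. Fix a target $g \in \states_h$ and set $c(\state') \defeq \max_{\action \in \actions} T(g \mid \state', \action)$ for each $\state' \in \states_{h-1}$. Any policy that rolls in to level $h-1$ with marginal $d \in \Delta(\states_{h-1})$ and then, using disjointness of emissions, selects an action as a function of the level-$(h-1)$ state, reaches $g$ with probability $\sum_{\state'} d(\state') T(g \mid \state', \action_{\state'})$; choosing each action greedily gives $\langle d, c\rangle$, and the roll-in marginal is independent of this last choice. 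Hence $\eta(g) = \max_{d \in \Lambda} \langle d, c\rangle$, where $\Lambda \subseteq \Delta(\states_{h-1})$ is the polytope of reachable level-$(h-1)$ marginals. The right-hand side of the lemma equals $\max_{d \in \mathrm{conv}\{d_{\state'} : \state' \in \states_{h-1}\}} \langle d, c\rangle$, where $d_{\state'}$ is the level-$(h-1)$ marginal of $\homingp_{\state'}$ (deterministic by~\pref{lem:homing_deterministic}); since $\homingp_{\state'}$ maximizes the $\state'$-coordinate, each $d_{\state'}$ is a coordinate-maximizer of $\Lambda$. The whole point is that $\mathrm{conv}\{d_{\state'}\}$ can be a strict subset of $\Lambda$, so the linear objective $\langle\cdot,c\rangle$ may be strictly larger at a vertex of $\Lambda$ not realized by any homing policy.

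Second, I would instantiate this with $h=3$, a deterministic start $\state_1$, three level-2 states $\states_2 = \{\state_a,\state_b,\state_c\}$, and a single target $g \in \states_3$. Writing transition vectors from $\state_1$ as distributions over $(\state_a,\state_b,\state_c)$, I take $T(\cdot\mid\state_1,\action_1)=(0.7,0.15,0.15)$, $T(\cdot\mid\state_1,\action_2)=(0.15,0.7,0.15)$, $T(\cdot\mid\state_1,\action_3)=(0.15,0.15,0.7)$, and a spreading action $T(\cdot\mid\state_1,\action_4)=(0.6,0.4,0)$. For the level-2 dynamics I let some action from each of $\state_a$ and $\state_b$ reach $g$ with probability $1$, while no action from $\state_c$ reaches $g$, so that $c=(1,1,0)$. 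Letting the emission be the identity (trivially satisfying~\pref{assumption:disjoint-emission}) makes this a valid Block MDP.

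Third, I would verify the numbers. Because the $\state_a$-coordinate is capped at $0.7$, the unique maximizer of $\PP_\pi(\state_a)$ plays $\action_1$, so $d_{\state_a}=(0.7,0.15,0.15)$, and symmetrically $d_{\state_b}=(0.15,0.7,0.15)$ and $d_{\state_c}=(0.15,0.15,0.7)$. The spreading outcome $(0.6,0.4,0)$ lies outside $\mathrm{conv}\{d_{\state_a},d_{\state_b},d_{\state_c}\}$, since its $\state_c$-coordinate is $0$ while every convex combination of the three homing marginals has $\state_c$-coordinate at least $0.15$. Evaluating the objective, $\langle d_{\state_a},c\rangle=\langle d_{\state_b},c\rangle=0.85$ and $\langle d_{\state_c},c\rangle=0.3$, so the compositional value is $0.85$; meanwhile $\langle(0.6,0.4,0),c\rangle=1$, and no reachable marginal beats this because $\langle d,c\rangle=d(\state_a)+d(\state_b)\le 1$, whence $\eta(g)=1$. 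This yields the strict gap $\eta(g)=1>0.85$.

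The step I expect to be the main obstacle is the design in the second paragraph, specifically understanding why a naive construction fails: if every level-$(h-1)$ state were reachable with probability one, the homing marginals would be the standard basis vectors, their convex hull would be all of $\Lambda=\Delta(\states_{h-1})$, and no gap could exist. Capping each reachability strictly below one is exactly what pushes the homing marginals into the interior of the simplex and leaves an extreme point of $\Lambda$ (the spreading action's outcome) both outside their convex hull and strictly optimal for $c$. Once this geometry is in place, checking that the spreading vector is outside the hull and is the unique optimum of $\langle\cdot,c\rangle$ is routine.
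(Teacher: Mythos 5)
Your proof is correct and takes essentially the same approach as the paper's: both exhibit a small Block MDP in which the start-state action needed to maximize the probability of reaching the target is not taken by any homing policy for the intermediate level, so the optimal roll-in marginal lies outside the convex hull of the homing-policy marginals. The paper's version merely points to~\pref{fig:mdp-example-left} (the homing policy for $\state_5$ takes $\action_1$ in $\state_1$ while the homing policies for $\state_2,\state_3,\state_4$ do not), whereas yours is self-contained and verifies the strict gap explicitly; no further changes are needed.
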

\begin{proof} 
See~\pref{fig:mdp-example-left}. The homing policy for $\state_5$
takes action $\action_1$ in $\state_1$, which yields a visitation
probability for $\state_5$ of $1$. However, the homing policies for
states $\state_2, \state_3$, and $\state_4$ do not take action
$\action_1$ in $\state_1$.
\end{proof}

\section{Properties of Kinematic Inseparability}
\label{app:appendix-kinematic-inseparability}

In this section, we establish several useful properties for kinematically inseparable (KI) state abstractions. Recall~\pref{def:bki},~\pref{def:fki}, and~\pref{def:ki}.
\begin{fact} Forward kinematic inseparability (KI), backward KI and KI defines an equivalence relation on $\observations$.
\end{fact}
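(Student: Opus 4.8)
The plan is to verify the three defining properties of an equivalence relation --- reflexivity, symmetry, and transitivity --- for each of the three relations, exploiting the fact that all three are defined by requiring equality of certain scalar quantities for \emph{every} choice of the universally quantified arguments. Since equality on $[0,1]$ is itself an equivalence relation, each of the three notions should inherit the structure almost immediately, and the only genuine bookkeeping is to confirm that the quantities being equated are well defined.

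First I would treat forward KI. Its defining condition (\pref{def:fki}) is that $T(\obs' \mid \obs_1, \action) = T(\obs' \mid \obs_2, \action)$ for all $\obs' \in \observations$ and $\action \in \actions$. Reflexivity is immediate since $T(\obs' \mid \obs, \action) = T(\obs' \mid \obs, \action)$; symmetry follows because the defining equation is symmetric in its two distinguished arguments; and transitivity follows by chaining: if $T(\obs' \mid \obs_1, \action) = T(\obs' \mid \obs_2, \action)$ and $T(\obs' \mid \obs_2, \action) = T(\obs' \mid \obs_3, \action)$ hold for every $(\obs', \action)$, then $T(\obs' \mid \obs_1, \action) = T(\obs' \mid \obs_3, \action)$ for every $(\obs', \action)$. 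Thus forward KI inherits its equivalence-relation structure directly from equality of transition probabilities.

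Next, backward KI follows by the same template, now applied to the backward dynamics $\PP_u(\obs, \action \mid \obs')$ of \pref{def:bki} in place of $T$. The one extra point to check is that $\PP_u(\obs, \action \mid \obs')$ is well defined, i.e.\ that its denominator $\sum_{\tilde{\obs}, \tilde{\action}} T(\obs' \mid \tilde{\obs}, \tilde{\action}) u(\tilde{\obs}, \tilde{\action})$ is strictly positive; this holds because $u$ is supported on all of $\observations \times \actions$ and $\obs'$ is reachable, so at least one summand is positive. Granting this, the defining condition of backward KI is again a conjunction (over all $u$, $\obs$, $\action$) of scalar equalities, so reflexivity, symmetry, and transitivity carry over verbatim.

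Finally, KI is by \pref{def:ki} precisely the conjunction of the forward KI and backward KI conditions, that is, the set-theoretic intersection of the two relations viewed as subsets of $\observations \times \observations$. Since the intersection of two equivalence relations is again an equivalence relation --- each of reflexivity, symmetry, and transitivity being preserved under intersection --- KI is an equivalence relation. I do not anticipate any real obstacle here; the only step meriting a sentence of care is the well-definedness of the backward dynamics noted above, and everything else reduces to the observation that a universally quantified family of equalities defines an equivalence relation.
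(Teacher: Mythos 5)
Your proposal is correct and follows essentially the same route as the paper, which simply observes that all three properties follow trivially from the definitions because equality itself is reflexive, symmetric, and transitive; you merely spell out the bookkeeping (including the harmless extra check that the backward dynamics are well defined) that the paper leaves implicit.
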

\begin{proof}
That these relations are reflexive, symmetric, and transitive, all follow trivially from the definitions, in particular using the fact that equality itself is symmetric and transitive.
\end{proof}

\begin{lemma}
\label{lem:state-inseparable} 
Let $\obs_1, \obs_2 \in \observations$. If $\decoder(\obs_1) =
\decoder(\obs_2)$ then $\obs_1$ and $\obs_2$ are KI. This implies that
they are also forward KI and backward KI.
\end{lemma}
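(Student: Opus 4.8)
The plan is to reduce both claims to the latent dynamics by exploiting the generative structure of the Block MDP together with the disjointness assumption (\pref{assumption:disjoint-emission}). The single fact that does all the work is a factorization of the observation-level transition kernel. By the generative process, an observation $\obs'$ at the next step is produced by first transitioning $\decoder(\obs) \to s'$ according to $\transition$ and then emitting $\obs'$ from $s'$ via $\emf$; marginalizing over the intermediate latent state gives $T(\obs' \mid \obs, \action) = \sum_{s'} \transition(s' \mid \decoder(\obs), \action)\,\emf(\obs' \mid s')$. Disjointness forces $\emf(\obs' \mid s') = 0$ unless $s' = \decoder(\obs')$, so the sum collapses to the single surviving term $T(\obs' \mid \obs, \action) = \transition(\decoder(\obs') \mid \decoder(\obs), \action)\,\emf(\obs' \mid \decoder(\obs'))$. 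First I would establish this identity, since both parts of the lemma fall out of it.

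For forward KI, observe that the right-hand side of the factorization depends on the conditioning observation $\obs$ only through $\decoder(\obs)$. Hence if $\decoder(\obs_1) = \decoder(\obs_2)$, then $T(\obs' \mid \obs_1, \action) = T(\obs' \mid \obs_2, \action)$ for every $\obs' \in \observations$ and $\action \in \actions$, which is exactly \pref{def:fki}.

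For backward KI, fix any full-support prior $u \in \Delta(\observations \times \actions)$ and substitute the factorization into the backward dynamics of \pref{def:bki}. Writing $s \defeq \decoder(\obs_1) = \decoder(\obs_2)$, both the numerator and the denominator of $\PP_u(\obs, \action \mid \obs_1)$ acquire a common emission factor $\emf(\obs_1 \mid s)$, which cancels, leaving $\PP_u(\obs, \action \mid \obs_1) = \frac{\transition(s \mid \decoder(\obs), \action)\,u(\obs,\action)}{\sum_{\tilde{\obs}, \tilde{\action}} \transition(s \mid \decoder(\tilde{\obs}), \tilde{\action})\,u(\tilde{\obs}, \tilde{\action})}$. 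The full-support assumption on $u$ (together with reachability of the genuine state $s$) guarantees the denominator is strictly positive, so this is well-defined, and the resulting expression depends on the conditioning observation only through its latent state $s$. The identical computation for $\obs_2$ yields the same value, establishing \pref{def:bki}.

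Having verified both conditions, $\obs_1$ and $\obs_2$ satisfy \pref{def:ki}, i.e. they are kinematically inseparable, and the forward and backward claims are precisely the two conditions just proved. The only step requiring genuine care is the factorization itself --- in particular the appeal to disjointness to collapse the sum over latent states to a single term --- after which forward KI is immediate and backward KI reduces to the cancellation of the shared emission factor; no further difficulty remains.
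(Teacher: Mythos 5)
Your proposal is correct and follows essentially the same route as the paper: the paper's proof also rests on the factorization $T(\obs' \mid \obs, \action) = \emf(\obs' \mid \decoder(\obs'))\,\transition(\decoder(\obs') \mid \decoder(\obs), \action)$ (stated there implicitly via the lifted notation $T(\obs \mid \decoder(\obs_1), \action)$), with forward KI following because the kernel depends on the source observation only through its latent state, and backward KI following from cancelling the common emission factor $\emf(\obs_i \mid \decoder(\obs_i))$ in numerator and denominator. Your version merely makes the disjointness-based collapse of the sum over latent states explicit, which the paper leaves implicit.
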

\begin{proof}
Fix any $\obs \in \observations$, $\action \in \actions$ and $u \in
\Delta(\observations \times \actions)$ with $\textrm{supp}(u) =
\observations \times \actions$. We show below that $\obs_1$ and
$\obs_2$ are forward KI and backward KI which together establishes
that desired claim.

\paragraph{Forward KI:} By the Block MDP structure, we have
\begin{align*}
T(\obs \mid \obs_1, \action) = T(\obs \mid \decoder(\obs_1), \action) = T(\obs \mid \decoder(\obs_2), \action) = T(\obs \mid \obs_2, \action)
\end{align*}

\paragraph{Backward KI:} 
Again, using the Block MDP structure:
\begin{align*}
\PP_u(\obs, \action \mid \obs_1) 
& = \frac{T(\obs_1 \mid \obs, \action) u(\obs, \action)}{\EE_{(\tilde{\obs}, \tilde{\action}) \sim u }\sbr{ T(\obs_1 \mid \tilde{\obs}, \tilde{\action}) }} 
= \frac{\emf(\obs_1 \mid \decoder(\obs_1))T(\decoder(\obs_1) \mid \obs, \action) u(\obs, \action)}{\EE_{(\tilde{\obs}, \tilde{\action})\sim u }\sbr{ \emf(\obs_1 \mid \decoder(\obs_1))T(\decoder(\obs_1) \mid \tilde{\obs}, \tilde{\action}) }} \\ 
&= \frac{T(\decoder(\obs_1) \mid \obs, \action) u(\obs, \action)}{\EE_{(\tilde{\obs}, \tilde{\action})\sim u}\sbr{ T(\decoder(\obs_1) \mid \tilde{\obs}, \tilde{\action}) }} 
=\frac{T(\decoder(\obs_2) \mid \obs, \action) u(\obs, \action)}{\EE_{(\tilde{\obs}, \tilde{\action})\sim u}\sbr{ T(\decoder(\obs_2) \mid \tilde{\obs}, \tilde{\action}) }} \\
&= \frac{\emf(\obs_2 \mid \decoder(\obs_2))T(\decoder(\obs_1) \mid \obs, \action) u(\obs, \action)}{\EE_{(\tilde{\obs}, \tilde{\action})\sim u}\sbr{ \emf(\obs_2 \mid \decoder(\obs_2))T(\decoder(\obs_1) \mid \tilde{\obs}, \tilde{\action}) }} 
=  \frac{T(\obs_2 \mid \obs, \action) u(\obs, \action)}{\EE_{ (\tilde{\obs}, \tilde{\action}) \sim u}\sbr{ T(\obs_2 \mid \tilde{\obs}, \tilde{\action}) }}  = \PP_u(\obs, \action \mid \obs_2).\tag*\qedhere
\end{align*}
\end{proof}

The next simple fact shows that observations that appear at different
time points are always separable.
\begin{fact}
\label{fact:time-separable}
If $\obs,\obs'$ are forward or backward KI, then $\timestep(\obs) =
\timestep(\obs')$, where recall that $\tau(\obs)$ denotes the time
step where $\obs$ is reachable.
\end{fact}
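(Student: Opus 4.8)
The plan is to prove each implication directly by exhibiting a distinguishing witness whenever the two time steps differ. The one structural ingredient I need is the layered nature of the Block MDP: transitions out of any $\obs$ with $\timestep(\obs)=h$ are supported entirely on $\observations_{h+1}$, since $T(\obs'\mid\obs,\action)=\emf(\obs'\mid\decoder(\obs'))\,T(\decoder(\obs')\mid\decoder(\obs),\action)$ and $T(\cdot\mid\decoder(\obs),\action)\in\Delta(\states_{h+1})$; symmetrically, any predecessor of $\obs$ lives in $\observations_{h-1}$. Because the layers $\observations_h$ are disjoint, forward and backward dynamics anchored at different time steps have disjoint supports, which is the source of separability.

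For the forward case, I would suppose $\timestep(\obs_1)=h_1<h_2=\timestep(\obs_2)$ without loss of generality. Since $h_1<h_2\le\horizon$, we have $h_1\le\horizon-1$, so for any fixed $\action\in\actions$ the kernel $T(\cdot\mid\obs_1,\action)$ is a genuine probability distribution supported on $\observations_{h_1+1}$; pick a witness $\obs'\in\observations_{h_1+1}$ with $T(\obs'\mid\obs_1,\action)>0$. Transitions out of $\obs_2$ are supported on $\observations_{h_2+1}$, and $h_1+1\ne h_2+1$, so $T(\obs'\mid\obs_2,\action)=0$. This contradicts \pref{def:fki}, forcing $h_1=h_2$.

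For the backward case I would run the symmetric argument, choosing $u$ with full support (which \pref{def:bki} permits). Again take $h_1<h_2$, so $h_2\ge 2$ and $\obs_2$ has at least one predecessor in $\observations_{h_2-1}$; with full-support $u$ the normalizer for $\obs_2$ is strictly positive, so $\PP_u(\cdot\mid\obs_2)$ is a genuine distribution over $\observations_{h_2-1}\times\actions$. Fix a witness $(\obs,\action)$ with $\PP_u(\obs,\action\mid\obs_2)>0$, so $\timestep(\obs)=h_2-1$. Since predecessors of $\obs_1$ lie in $\observations_{h_1-1}$ and $h_1-1\ne h_2-1$, the point $\obs$ is not a predecessor of $\obs_1$, hence $T(\obs_1\mid\obs,\action)=0$ and the numerator of $\PP_u(\obs,\action\mid\obs_1)$ vanishes, giving $\PP_u(\obs,\action\mid\obs_1)=0\ne\PP_u(\obs,\action\mid\obs_2)$, which contradicts \pref{def:bki}.

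The main point requiring care is the boundary behaviour, where forward dynamics degenerate at $h=\horizon$ and backward dynamics degenerate at $h=1$ (a start observation has an empty predecessor set, so its backward kernel is a $0/0$ expression). I sidestep both by always extracting the positive-probability witness from the non-degenerate side: taking $h_1<h_2$ guarantees $h_1\le\horizon-1$ in the forward case and $h_2\ge 2$ in the backward case, so the side producing the witness is never degenerate, and I only ever need the \emph{other} side to be zero, which follows from support disjointness regardless of whether its denominator is even defined.
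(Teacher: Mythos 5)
Your proof is correct and takes essentially the same route as the paper's: the layered structure forces the forward (resp.\ backward) dynamics of observations at different time steps to have disjoint supports, so they cannot coincide. The paper's own proof is a two-line version of exactly this argument; your additional care with the degenerate boundaries ($h=\horizon$ for forward dynamics, $h=1$ for backward dynamics) is a welcome refinement but not a different approach.
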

\begin{proof}
If $h \defeq \tau(\obs) \ne \tau(\obs') := h'$ then $T(\cdot \mid
\obs,a) \in \Delta(\observations_{h+1})$ while $T(\cdot \mid \obs',a)
\in \Delta(\observations_{h'+1})$, so these distributions cannot be
equal. A similar argument holds for Backward KI.
\end{proof}

Using the transitivity property for backward KI, we can consider sets
of observations that are all pairwise backward KI.  The next lemma
provides a convenient characterization for backward KI sets.
\begin{lemma}
\label{lem:bki-ratio} Let $\observations'
 \subseteq \observations$ be a set of backward KI observations. Then
 $\exists u \in \Delta(\observations)$ with $\textrm{supp}(u) =
 \observations$ such that for all $\obs', \obs'' \in \observations'$
 we have:
\begin{align}
\forall \obs \in \observations, \action \in \actions, \qquad \frac{T(\obs' \mid \obs, \action)}{u(\obs')} = \frac{T(\obs'' \mid \obs, \action)}{u(\obs'')}.\label{eq:bki-ratio}
\end{align}
The converse is also true: if~\pref{eq:bki-ratio} holds for some $u
\in \Delta(\observations)$ with full support and all $x',x'' \in
\observations' \subset \observations$ then $\observations'$ are is a
backward KI set.
\end{lemma}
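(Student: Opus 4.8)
The plan is to prove the two directions separately, in each case exploiting the cancellation that renders the normalizing constant in the backward-dynamics definition irrelevant. The key observation is that the distribution $u \in \Delta(\observations)$ in the statement is \emph{not} the prior $\Delta(\observations\times\actions)$ appearing in~\pref{def:bki}; rather it plays the role of the \emph{marginal over the next observation} induced by such a prior, which is exactly the denominator of $\PP_u(\obs,\action\mid\obs')$. For the forward direction I would fix an arbitrary reference prior $v \in \Delta(\observations\times\actions)$ with full support and define the pushforward $\bar u(\obs') \defeq \sum_{\obs,\action} T(\obs'\mid\obs,\action)\,v(\obs,\action)$, a valid distribution over $\observations$. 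Applying~\pref{def:bki} with prior $v$ to any $\obs_1',\obs_2' \in \observations'$, the common factor $v(\obs,\action)$ cancels and the two denominators are precisely $\bar u(\obs_1')$ and $\bar u(\obs_2')$, leaving $T(\obs_1'\mid\obs,\action)/\bar u(\obs_1') = T(\obs_2'\mid\obs,\action)/\bar u(\obs_2')$, which is exactly~\pref{eq:bki-ratio} for $u=\bar u$. By~\pref{fact:time-separable} all observations in $\observations'$ share a time step $h$; when $h\ge 2$, full support of $v$ forces $\bar u(\obs')>0$ on $\observations'$ so the ratios are well defined, while the degenerate case $h=1$ has $T(\obs'\mid\obs,\action)\equiv 0$ and~\pref{eq:bki-ratio} holds trivially.

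The only remaining gap is that $\bar u$ need not be supported on all of $\observations$, since observations with no predecessors (e.g.\ those at time step $1$) receive zero mass. To repair this without disturbing the ratios on $\observations'$, I would mix $\bar u$ with a distribution $w$ supported exactly on the observations where $\bar u$ vanishes, setting $u = \tfrac12\bar u + \tfrac12 w$. Because $w$ places no mass on $\observations'$, we have $u(\obs') = \tfrac12\bar u(\obs')$ for every $\obs'\in\observations'$, so all pairwise ratios are preserved while $u$ now has full support over $\observations$, as required.

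For the converse, suppose~\pref{eq:bki-ratio} holds for some full-support $u$. Fixing $\obs_1',\obs_2'\in\observations'$ and setting $c\defeq u(\obs_1')/u(\obs_2')>0$, the identity rearranges to $T(\obs_1'\mid\obs,\action) = c\,T(\obs_2'\mid\obs,\action)$ for all $\obs,\action$. Substituting this into the backward dynamics of~\pref{def:bki} for an arbitrary full-support prior $\tilde u\in\Delta(\observations\times\actions)$, the constant $c$ appears in both numerator and denominator and cancels, giving $\PP_{\tilde u}(\obs,\action\mid\obs_1') = \PP_{\tilde u}(\obs,\action\mid\obs_2')$. Hence every pair in $\observations'$ is backward KI, so $\observations'$ is a backward KI set.

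The main obstacle is the full-support requirement in the forward direction: the natural witness $\bar u$ is supported only on reachable successor observations, so the argument must augment it to full support while keeping its restriction to $\observations'$ proportional to the original, which is precisely what preserves~\pref{eq:bki-ratio}. Once the correct $u$ is identified, both cancellation computations are routine.
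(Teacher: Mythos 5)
Your proof is correct and follows essentially the same route as the paper's: for the forward direction, push a full-support prior $\tilde u \in \Delta(\observations\times\actions)$ through the dynamics to define $u(\obs') = \sum_{\obs,\action} T(\obs'\mid\obs,\action)\tilde u(\obs,\action)$ and cancel the common $\tilde u(\obs,\action)$ factor in \pref{def:bki}, and for the converse, substitute the proportionality $T(\obs_1'\mid\cdot) = \frac{u(\obs_1')}{u(\obs_2')}T(\obs_2'\mid\cdot)$ into the backward dynamics and cancel. Your extra mixing step to force full support of $u$ is in fact slightly more careful than the paper, which simply asserts that the pushforward is positive everywhere (false for unreachable observations such as those at time step $1$); this edge case is immaterial in all applications of the lemma, but your repair is a legitimate tightening rather than a detour.
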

\begin{proof}
Fix $\tilde{u} \in \Delta(\observations \times \actions)$ with
$\textrm{supp}(\tilde{u}) = \observations \times \actions$. Define
$u(\obs) \defeq \EE_{(\tilde{\obs}, \tilde{\action})\sim \tilde{u} }\sbr{
  T(\obs \mid \tilde{\obs}, \tilde{\action}) }$. Observe that by
construction $u(\obs) > 0$ for all $\obs \in \observations$. Let
$\obs', \obs'' \in \observations'$ then as $\obs'$ and $\obs''$ are
backward KI, we have that for all $\obs \in \observations, \action\in
\actions$:
\begin{align*}
\PP_{\tilde{u}}(\obs, \action \mid \obs'') =\PP_{\tilde{u}}(\obs, \action \mid \obs') &\Rightarrow
\frac{T(\obs'' \mid \obs, \action) \tilde{u}(\obs, \action)}{\EE_{(\tilde{\obs}, \tilde{\action})\sim \tilde{u}}\sbr{T(\obs'' \mid \tilde{\obs}, \tilde{\action}) }}
 = \frac{T(\obs' \mid \obs, \action) \tilde{u}(\obs, \action)}{\EE_{(\tilde{\obs}, \tilde{\action})\sim \tilde{u} }\sbr{ T(\obs' \mid \tilde{\obs}, \tilde{\action}) }} \\
&\Rightarrow \frac{T(\obs'' \mid \obs, \action)}{u(\obs'')} = \frac{T(\obs' \mid \obs, \action)}{u(\obs')}. 
\end{align*}
For the converse, let $\tilde{u} \in
\Delta(\observations\times\actions)$ have full support. Then we have
\begin{align*}
\PP_{\tilde{u}}(\obs, \action \mid \obs'_1) = 
\frac{T(\obs'_1 \mid \obs, \action)\tilde{u}(\obs, \action)}{\sum_{\tilde{\obs}, \tilde{\action}} T(\obs'_1 \mid \tilde{\obs}, \tilde{\action}) \tilde{u}(\tilde{\obs}, \tilde{\action})}
&= \frac{\frac{u(\obs'_1)}{u(\obs'_2)}T(\obs'_2 \mid \obs, \action)\tilde{u}(\obs, \action)}{\sum_{\tilde{\obs}, \tilde{\action}} \frac{u(\obs'_1)}{u(\obs'_2)}T(\obs'_2 \mid \tilde{\obs}, \tilde{\action}) \tilde{u}(\tilde{\obs}, \tilde{\action})}=\PP_{\tilde{u}}(\obs, \action \mid \obs'_2), %
\end{align*}
and so $\obs_1,\obs_2$ are backward KI.
\end{proof}

We next show that an ordering relation between policy visitation
probabilities is preserved through backward KI. This key structural
property allows us to use the backward KI relationship to find a
policy cover.

\begin{lemma}\label{lem:bki-policy-ratio}
Let $\observations' \subseteq \observations$ be a set of backward KI
observations and let $\observations'_1, \observations'_2 \subset
\observations'$. For any $\policy_1, \policy_2 \in \Upsilon$, we have
$\frac{\PP_{\policy_1}(\observations'_1)}{\PP_{\policy_2}(\observations'_1)}
=
\frac{\PP_{\policy_1}(\observations'_2)}{\PP_{\policy_2}(\observations'_2)}$.
\end{lemma}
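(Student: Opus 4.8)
The plan is to reduce the lemma to a single factorization: I will show that for the measure $u$ furnished by~\pref{lem:bki-ratio}, there is a scalar $g_\policy$ depending only on the policy $\policy$ (and the set $\observations'$) such that $\PP_\policy(\obs') = g_\policy\, u(\obs')$ for every $\obs' \in \observations'$. Once this is established, both ratios in the statement collapse to $g_{\policy_1}/g_{\policy_2}$ and the claimed equality is immediate. By~\pref{fact:time-separable}, every observation in the backward KI set $\observations'$ shares a common time step $h$, so $\observations' \subseteq \observations_h$ and I can freely decompose visitation probabilities over the preceding step.

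First I would expand, for any $\obs' \in \observations'$ with $h = \timestep(\obs') \ge 2$,
\begin{align*}
\PP_\policy(\obs') = \sum_{\obs \in \observations_{h-1}} \sum_{\action \in \actions} \PP_\policy(\obs)\,\policy(\action \mid \obs)\, T(\obs' \mid \obs, \action).
\end{align*}
Applying~\pref{lem:bki-ratio} to the backward KI set $\observations'$, there is a full-support $u \in \Delta(\observations)$ for which the ratio $T(\obs' \mid \obs, \action)/u(\obs')$ takes the same value for every $\obs' \in \observations'$; denote this common value by $\beta(\obs,\action)$, so that $T(\obs' \mid \obs,\action) = u(\obs')\,\beta(\obs,\action)$ with $\beta$ independent of $\obs'$. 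Substituting and pulling $u(\obs')$ out of the double sum gives $\PP_\policy(\obs') = u(\obs')\, g_\policy$, where $g_\policy \defeq \sum_{\obs,\action} \PP_\policy(\obs)\,\policy(\action \mid \obs)\,\beta(\obs,\action)$ manifestly does not depend on $\obs'$. The boundary case $h = 1$ is trivial, since then $\PP_\policy(\obs') = \startdist(\obs')$ is independent of $\policy$ and every ratio equals $1$.

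Finally I would sum the factorization over the subsets: for $j \in \{1,2\}$,
\begin{align*}
\PP_\policy(\observations'_j) = \sum_{\obs' \in \observations'_j} \PP_\policy(\obs') = g_\policy \sum_{\obs' \in \observations'_j} u(\obs'),
\end{align*}
so the policy-independent factor $\sum_{\obs' \in \observations'_j} u(\obs')$ cancels in each ratio, yielding $\PP_{\policy_1}(\observations'_j)/\PP_{\policy_2}(\observations'_j) = g_{\policy_1}/g_{\policy_2}$ for both $j=1$ and $j=2$, which proves the claim. The only real content is the factorization step, so the main obstacle is verifying that $g_\policy$ is genuinely independent of $\obs'$; this is precisely what the constancy of $\beta(\obs,\action)$ across $\observations'$, i.e.~\pref{lem:bki-ratio}, guarantees, and everything else is a routine interchange of summation.
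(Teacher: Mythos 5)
Your proposal is correct and follows essentially the same route as the paper: both invoke~\pref{lem:bki-ratio} to factor $T(\obs'\mid\obs,\action)$ as $u(\obs')$ times a quantity independent of $\obs'$, then integrate against the occupancy measure at step $h-1$ to obtain $\PP_\policy(\observations'_j) \propto u(\observations'_j)$ with a policy-dependent but $\obs'$-independent constant. The only cosmetic difference is that the paper normalizes against a fixed reference observation $\tilde{\obs} \in \observations'$ rather than writing the factorization as $\PP_\policy(\obs') = g_\policy\,u(\obs')$ directly.
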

\begin{proof}
Assume that $\observations' \subseteq \observations_h$ for some $h$,
which is without loss of generality, since if they are observable at
different time steps, then they are trivially
separable. From~\pref{lem:bki-ratio}, there exists a $u \in
\Delta(\observations)$ supported everywhere such that for any
$\obs'_1, \obs'_2 \in \observations'$ we have: $\frac{T(\obs'_1 \mid
  \obs, \action)}{u(\obs'_1)} = \frac{T(\obs'_2 \mid \obs,
  \action)}{u(\obs'_2)}$ for any $\obs \in \observations$ and $\action
\in \actions$.  Let $\policy$ be any policy and define its occupancy
measure at time $h-1$, $\xi_{h-1} \in \Delta(\observations_{h-1}
\times \actions)$, as $\xi_{h-1}(\obs,\action) \defeq
\EE_\policy\sbr{\one\{\obs_{h-1} = \obs, \action_{h-1} =
  \action\}}$. Then for any fixed $\tilde{\obs} \in \observations'$
and $j \in \{1,2\}$ we have
\begin{align*}
\PP_\policy(\observations'_j) = \EE_{(\obs, \action) \sim \xi}\sbr{\sum_{\obs' \in \observations'_j}\transition(\obs' \mid \obs, \action)} = \EE_{\obs, \action \sim \xi}\sbr{ \sum_{\obs' \in \observations'_j} \frac{u(\obs')}{u(\tilde{\obs})}\transition(\tilde{\obs} \mid \obs, \action)} = \frac{u(\observations'_j)}{u(\tilde{\obs})} \PP_{\policy}(\tilde{\obs}),
\end{align*}
where the second inequality follows
from~\pref{lem:bki-ratio}. Re-arranging, we have that
$\frac{\PP_\policy(\observations'_1)}{\PP_\policy(\observations'_2)} =
\frac{u(\observations'_1)}{u(\observations'_2)}$, and as the right
hand side does not depend on $\policy$, the result follows.
\end{proof}

Lastly, we show that a set of observations are backward KI, then a
single policy simultaneously maximizes the visitation probability to
these observations. Moreover, we can construct a reward function for
which this common policy is the reward maximizer. Recall that
$\Upsilon$ is the (unrestricted) set of \emph{all} policies.

\begin{lemma}
\label{lem:abstraction-policy}
Let $\observations' \subseteq \observations$ be a set of backward
kinematically inseparable observations. Then there exists a policy
$\policy \in \Upsilon$ that maximizes $\PP_\pi(\obs')$ simultaneously for all $\obs' \in \observations'$. 
Further, this policy is the optimal
policy for the internal reward function $\rewardf'(\obs,
\action) \defeq 1\{\obs \in \observations'\}$.
\end{lemma}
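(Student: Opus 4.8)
The plan is to exploit the fact that, within a backward KI set, the ratio of visitation probabilities of any two observations is \emph{the same for every policy}. Concretely, I would first invoke~\pref{lem:bki-ratio} to obtain a full-support measure $u \in \Delta(\observations)$ such that $T(\obs' \mid \obs, \action)/u(\obs')$ is constant across $\obs' \in \observations'$ for all $\obs \in \observations$ and $\action \in \actions$. Specializing the occupancy-measure computation in the proof of~\pref{lem:bki-policy-ratio} to the singleton sets $\{\obs'\}$, and fixing an arbitrary reference observation $\tilde{\obs} \in \observations'$, then yields for every $\policy \in \Upsilon$ and every $\obs' \in \observations'$
\[
\PP_\policy(\obs') = \frac{u(\obs')}{u(\tilde{\obs})}\,\PP_\policy(\tilde{\obs}),
\]
where the proportionality constant $u(\obs')/u(\tilde{\obs})$ does \emph{not} depend on $\policy$.

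With this identity in hand, I would define $\policy^\star$ to be a policy maximizing $\PP_\policy(\tilde{\obs})$ over $\Upsilon$. Such a maximizer exists because, by the Block MDP structure, $\PP_\policy(\tilde{\obs}) = \emf(\tilde{\obs}\mid \decoder(\tilde{\obs}))\,\PP_\policy(\decoder(\tilde{\obs}))$, and the emission factor is policy-independent, so maximizing $\PP_\policy(\tilde{\obs})$ reduces to maximizing $\PP_\policy(\decoder(\tilde{\obs}))$, whose maximizer is the deterministic non-stationary homing policy guaranteed by~\pref{lem:homing_deterministic}. For an arbitrary $\policy \in \Upsilon$ and any $\obs' \in \observations'$, the displayed identity together with the nonnegativity of the constant gives $\PP_\policy(\obs') = \frac{u(\obs')}{u(\tilde{\obs})}\PP_\policy(\tilde{\obs}) \le \frac{u(\obs')}{u(\tilde{\obs})}\PP_{\policy^\star}(\tilde{\obs}) = \PP_{\policy^\star}(\obs')$, so $\policy^\star$ simultaneously maximizes the visitation probability of \emph{every} observation in $\observations'$, which is the first claim.

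For the reward claim, I would use~\pref{fact:time-separable}: backward KI observations all share a single time step, so $\observations' \subseteq \observations_h$ for some $h$, and hence only a step-$h$ observation can earn reward under $\rewardf'(\obs,\action) = \one\{\obs \in \observations'\}$. Since at most one observation is seen at step $h$ along any trajectory, the value telescopes to the set-visitation probability,
\[
V(\policy; \rewardf') = \EE_\policy\sbr{\one\{\obs_h \in \observations'\}} = \sum_{\obs' \in \observations'}\PP_\policy(\obs') = \frac{u(\observations')}{u(\tilde{\obs})}\,\PP_\policy(\tilde{\obs}),
\]
again a policy-independent multiple of $\PP_\policy(\tilde{\obs})$, and therefore maximized by the same $\policy^\star$.

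The conceptual work is already contained in~\pref{lem:bki-ratio} and~\pref{lem:bki-policy-ratio}, so the only genuine care points are (i) confirming the maximum over the unrestricted class $\Upsilon$ is actually attained, which I handle by reducing to a state homing policy via the policy-independent emission factor and~\pref{lem:homing_deterministic}, and (ii) justifying that $V(\policy;\rewardf')$ equals the set-visitation probability with no double counting. I expect (ii) to be the main subtlety, since it is precisely where the single-time-step property of backward KI (via~\pref{fact:time-separable}) is essential: without it the reward could fire at multiple steps and the clean proportionality to $\PP_\policy(\tilde{\obs})$ would break.
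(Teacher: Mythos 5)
Your proof is correct and follows essentially the same route as the paper's: both rest on the policy-invariance of visitation-probability ratios within a backward KI set, which the paper invokes directly as the ordering-preservation statement of~\pref{lem:bki-policy-ratio} and which you re-derive in explicit proportionality form from~\pref{lem:bki-ratio}. Your added care about attainment of the maximum over $\Upsilon$ and the single-time-step accounting for $V(\policy;\rewardf')$ (via~\pref{fact:time-separable}) are points the paper glosses over, but they do not change the substance of the argument.
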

\begin{proof} 
Let $\obs_1, \obs_2 \in \observations'$ and define $\policy \defeq \argmax_{\policy
  \in \Upsilon} \PP_{\policy}(\obs_1)$. Let $\policy_2 \in \Upsilon$ be any other
policy. Then, by~\pref{lem:bki-policy-ratio}, we have
\begin{align*}
\PP_{\policy}(\obs_1) \geq \PP_{\policy_2}(\obs_1) \Leftrightarrow \PP_{\policy}(\obs_2) \geq \PP_{\policy_2}(\obs_2). 
\end{align*}
As the left hand side is true by definition of $\policy$, we see that
$\policy$ also maximizes the visitation probability for $\obs_2$. As
this is true for any $\obs_2$, we have that $\pi$ simultaneously
maximizes the visitation probability for all $\obs \in
\observations'$.

Clearly, for this policy and the specified reward function $\rewardf'$, we have
\begin{align*}
\EE_{\policy}\sbr{ \sum_h \rewardf'(\obs_h,\action_h)}  =
\PP_{\policy}(\observations') = \sum_{\obs' \in \observations'}
\PP_{\policy}(\obs') \geq \max_{\policy' \in \Upsilon}\sum_{\obs' \in
  \observations'} \PP_{\policy'}(\obs') = \max_{\policy' \in \Upsilon}
\EE_{\policy'}\sbr{\sum_h \rewardf'(\obs_h,\action_h)}.
\end{align*}
Here we are assuming $\observations$ is countable, as we have mentioned.
\end{proof}

\section{Analysis of Policy Search by Dynamic Programming}
\label{app:psdp-appendix}

This section provides a detailed statistical and computation analysis
of the Policy Search by Dynamic Programming (PSDP) algorithm, with
pseudocode in~\pref{alg:rl_via_homing_policies}. The main guarantee is
as follows:

\begin{theorem}
\label{thm:psdp-spread}
Let $(\hat{\policy}_1, \hat{\policy}_2, \cdots, \hat{\policy}_h) =
\psdpalg(\Psi, \rewardf, h, \policies, n)$ be the policy returned
by~\pref{alg:rl_via_homing_policies} using policy covers
$\Psi=\{\Psi_t\}_{t=1}^h$ where $\Psi_t$ is an $\alpha$-policy cover
for $\states_t$ and $|\Psi_t| \le N$ for all $t \in [h]$. Assume that
either $R$ is an internal reward function corresponding to time $h+1$,
or that $R$ is the external reward function and $h = H$, and
that~\pref{assum:realizability} holds. Then for any $\delta \in
(0,1)$, with probability at least $1- h\delta$ we have:
\begin{align*}
V(\hat{\policy}_{1:h}; R) \geq \max_{\policy_1,\ldots,\policy_h \in \policies} V(\policy_{1:h}; R) - \frac{Nh\errcsc}{\alpha} \qquad \mbox{where} \qquad \errcsc \defeq 4 \sqrt{\frac{|\actions|}{n} \ln\left(\frac{2|\policies|}{\delta}\right)}.
 \end{align*}
The algorithm runs in polynomial time with $h$ calls to the contextual
bandit oracle. 
\end{theorem}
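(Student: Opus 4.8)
The plan is to run the standard Policy Search by Dynamic Programming analysis, combining a per-step contextual bandit regret bound with a change-of-measure argument powered by the policy cover. Fix the in-class comparator $\widetilde{\policy}_{1:h} \defeq \argmax_{\policy_{1:h}\in\policies^h} V(\policy_{1:h};R)$ and, for each $t$, let $Q_t(\obs,\action)$ denote the expected reward-to-go under $R$ obtained by playing $\action$ at observation $\obs\in\observations_t$ and then rolling out with the already-computed suffix $\hat{\policy}_{t+1:h}$. Let $D_t$ be the distribution over $\observations_t$ induced by the sampling procedure on \pref{line:psdp-sampling-proc}, i.e., roll in with $\unf(\Psi_t)$ to time $t$. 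Note that the reward-to-go $r=\sum_{h'=t}^h r_{h'}$ lies in $[0,1]$ in both the internal and external cases, since $\sum_h r_h \le 1$.

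First I would establish the per-step guarantee. At step $t$ the dataset consists of tuples $(\obs,\action,1/|\actions|,r)$ with $\obs\sim D_t$ and $\action\sim\unf(\actions)$, so the inverse-propensity term $|\actions|\cdot r\cdot\one\{\policy(\obs)=\action\}$ is an unbiased estimate of $Q_t(\obs,\policy(\obs))$ whose conditional second moment is $O(|\actions|)$ because $r\in[0,1]$ and $\action$ is uniform. A Bernstein-type uniform deviation bound over the finite class $\policies$ then shows that, with probability at least $1-\delta$, the oracle output $\hat{\policy}_t=\cboracle(D,\policies)$ satisfies $\EE_{\obs\sim D_t}[Q_t(\obs,\hat{\policy}_t(\obs))] \ge \max_{\policy\in\policies}\EE_{\obs\sim D_t}[Q_t(\obs,\policy(\obs))] - \errcsc$. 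Invoking policy completeness (\pref{assum:realizability}) for the roll-out $\hat{\policy}_{t+1:h}$, there is a policy in $\policies$ that is greedy for $Q_t$, so $\max_{\policy\in\policies}\EE_{\obs\sim D_t}[Q_t(\obs,\policy(\obs))]=\EE_{\obs\sim D_t}[\max_{\action}Q_t(\obs,\action)]$. Hence the nonnegative per-step gap $g_t(\obs)\defeq\max_\action Q_t(\obs,\action)-Q_t(\obs,\hat{\policy}_t(\obs))$ obeys $\EE_{\obs\sim D_t}[g_t(\obs)]\le\errcsc$.

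Next I would telescope. Defining the hybrid policies $\sigma^{(t)}$ that play $\widetilde{\policy}$ on steps $1,\dots,t-1$ and $\hat{\policy}$ on steps $t,\dots,h$ (so $\sigma^{(1)}=\hat{\policy}_{1:h}$ and $\sigma^{(h+1)}=\widetilde{\policy}_{1:h}$), the performance-difference identity gives $V(\widetilde{\policy}_{1:h};R)-V(\hat{\policy}_{1:h};R)=\sum_{t=1}^h \EE_{\obs_t\sim\widetilde{\policy}_{1:t-1}}[Q_t(\obs_t,\widetilde{\policy}_t(\obs_t))-Q_t(\obs_t,\hat{\policy}_t(\obs_t))]\le \sum_{t=1}^h \EE_{\obs_t\sim\widetilde{\policy}_{1:t-1}}[g_t(\obs_t)]$, where the inequality uses $\max_\action Q_t\ge Q_t(\cdot,\widetilde{\policy}_t(\cdot))$. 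The last ingredient is to move from the roll-in of $\widetilde{\policy}$ to $D_t$. Because each $\Psi_t$ is an $\alpha$-policy cover with $|\Psi_t|\le N$, the uniform mixture over $\Psi_t$ visits every $\state\in\states_t$ with probability at least $(\alpha/N)\eta(\state)\ge(\alpha/N)\PP_{\widetilde{\policy}_{1:t-1}}(\state)$; the disjoint-emission structure (\pref{assumption:disjoint-emission}) lifts this state-level bound to observations, giving the pointwise ratio $\PP_{\widetilde{\policy}_{1:t-1}}(\obs_t)/D_t(\obs_t)\le N/\alpha$. Since $g_t\ge 0$, change of measure yields $\EE_{\obs_t\sim\widetilde{\policy}_{1:t-1}}[g_t]\le(N/\alpha)\EE_{\obs_t\sim D_t}[g_t]\le N\errcsc/\alpha$, and summing over $t$ together with a union bound over the $h$ per-step failure events proves the claim with probability $1-h\delta$. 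The running-time statement is immediate, as the outer loop makes exactly one $\cboracle$ call per time step with only polynomial rollout overhead.

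The step I expect to be the main obstacle is the interface between the per-step bound and the global guarantee: the contextual bandit oracle only controls regret under the exploratory distribution $D_t$, so the argument hinges on (i) decomposing the global suboptimality into \emph{nonnegative} per-step gaps $g_t$ via the telescoping identity, so that the one-sided density-ratio inequality is valid, and (ii) using the cover to bound that ratio by $N/\alpha$ uniformly. Getting the concentration constant right---specifically the variance-based $\sqrt{|\actions|/n}$ scaling inside $\errcsc$ rather than a range-based $|\actions|/\sqrt{n}$---is the other delicate point and requires a Bernstein (rather than Hoeffding) deviation bound on the importance-weighted objective.
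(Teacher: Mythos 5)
Your proposal is correct and follows essentially the same route as the paper's proof: a Bernstein-based importance-weighted regret bound for each contextual bandit subproblem, the performance difference lemma with the greedy-in-$Q_t$ comparator supplied by policy completeness, and a one-sided change of measure from the comparator's roll-in distribution to $D_t$ bounded by $N/\alpha$ via the cover, followed by a union bound over the $h$ steps. The two delicate points you flag (nonnegativity of the per-step gaps to justify the density-ratio step, and the variance-based $\sqrt{|\actions|/n}$ rate) are exactly the ones the paper's argument relies on.
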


Before turning to the proof, we state a standard generalization bound
for the contextual bandit problems created by the algorithm.  These
problems are induced by an underlying distribution $Q$ over tuples
$(\obs,\vec{r})$ where $\obs \in \observations$ and $\vec{r} \in
[0,1]^{|\actions|}$, and a \emph{logging policy}
$\policy_{\textrm{log}}$. Formally, we obtain tuples $(\obs, \action,
p, r) \sim Q_{\textrm{log}}$ where $(\obs,\vec{r})\sim Q$, $\action
\sim \policy_{\textrm{log}}(\obs)$, $p \defeq
\policy_{\textrm{log}}(\action \mid \obs)$ is the probability of
choosing the action for the current observation, and $r \defeq
\vec{r}(\action)$. In our application, we always have
$\policy_{\textrm{log}} \defeq \unf(\actions)$ so that $p = 1/|\actions|$.
Given a dataset of $n$ tuples $D := \{(\obs_i, \action_i,p_i,
r_i)\}_{i=1}^n \iidsim Q_{\textrm{log}}$, we invoke the contextual
bandit oracle, $\cboracle(D,\Pi)$, to find a policy $\hat{\pi}$. The
following proposition provides a performance guarantee for $\hat{\pi}$. 

\begin{proposition}
\label{prop:csc-call} 
Let $D \defeq \{(\obs_i, \action_i,p_i,r_i)\}_{i=1}^n \iidsim
Q_{\textrm{log}}$ be a dataset of $n$ samples from a contextual
bandit distribution $Q_{\textrm{log}}$ induced by the uniform
logging policy interacting with an underlying distribution
$Q$. Let $\hat{\pi} = \cboracle(D, \policies)$. Then for any
$\delta \in (0,1)$, with probability at least $1-\delta$, we have
\begin{equation*}
\EE_{(x,\vec{r})\sim Q}[\vec{r}(\hat{\policy}(\obs)] \ge \max_{\policy \in \policies} \EE_{(x,\vec{r})\sim Q}[\vec{r}(\policy(\obs)]  - \errcsc.
\end{equation*}
\end{proposition}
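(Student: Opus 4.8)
The plan is to prove a uniform deviation bound between the inverse-propensity-scoring (IPS) estimate of each policy's value and its true value, and then conclude with a standard regret decomposition. For $\policy \in \policies$ write its value as $V(\policy) \defeq \EE_{(x,\vec{r})\sim Q}\EE_{a\sim\policy(x)}\sbr{\vec{r}(a)}$ (matching the quantity in the statement, and equal to $\EE_{(x,\vec{r})\sim Q}\sbr{\vec{r}(\policy(\obs))}$ for deterministic $\policy$) and its IPS estimate as $\hat{V}(\policy) \defeq \frac{1}{n}\sum_{i=1}^n \EE_{a'\sim\policy(x_i)}\sbr{\frac{r_i\one\{a'=a_i\}}{p_i}}$, so that by definition the oracle returns $\hat{\policy} = \argmax_{\policy\in\policies}\hat{V}(\policy)$. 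Because the logging policy is uniform, $p_i = 1/|\actions|$ and the $i$-th summand simplifies to $Z_i(\policy) = |\actions|\,r_i\,\policy(a_i\mid x_i) \in [0,|\actions|]$. First I would verify that IPS is unbiased: taking expectations over $a_i\sim\unf(\actions)$ and $a'\sim\policy(x_i)$ collapses the indicator, giving $\EE[Z_i(\policy)] = \EE_{a\sim\policy(x_i)}\sbr{\vec{r}_i(a)}$ and hence $\EE[\hat{V}(\policy)] = V(\policy)$.

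The key step, and the one that governs the dependence on $|\actions|$, is a second-moment bound. Applying Hoeffding's inequality with only the range $[0,|\actions|]$ would produce an error growing linearly in $|\actions|$, whereas the target $\errcsc$ scales like $\sqrt{|\actions|}$. To recover the better rate I would bound the variance: using $\vec{r}_i(a)\le 1$ and $\policy(a\mid x)\le 1$,
\begin{align*}
\EE[Z_i(\policy)^2] = |\actions|^2\,\EE_{(x,\vec{r}),\,a\sim\unf(\actions)}\sbr{\vec{r}(a)^2\policy(a\mid x)^2} = |\actions|\,\EE_{(x,\vec{r})}\sum_{a}\vec{r}(a)^2\policy(a\mid x)^2 \le |\actions|,
\end{align*}
where the last step uses $\sum_a \policy(a\mid x)^2 \le \sum_a \policy(a\mid x) = 1$. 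Thus $\mathrm{Var}(Z_i(\policy)) \le |\actions|$ uniformly over $\policy$.

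With the range bound $B = |\actions|$ and variance bound $\sigma^2 \le |\actions|$ in hand, I would apply Bernstein's inequality to each fixed $\policy$ and union bound over the finite class $\policies$, obtaining that with probability at least $1-\delta$ the deviation $|\hat{V}(\policy) - V(\policy)| \le t$ holds simultaneously for all $\policy \in \policies$, with $t = \Ocal\rbr{\sqrt{|\actions|L/n} + |\actions|L/n}$ and $L \defeq \ln(2|\policies|/\delta)$. On this event, the standard regret decomposition gives
\begin{align*}
V(\hat{\policy}) \ge \hat{V}(\hat{\policy}) - t \ge \hat{V}(\policy^\star) - t \ge V(\policy^\star) - 2t = \max_{\policy\in\policies}V(\policy) - 2t,
\end{align*}
where $\policy^\star \defeq \argmax_{\policy\in\policies}V(\policy)$ and the middle inequality uses the optimality of $\hat{\policy}$ for the oracle objective. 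It then remains to check $2t \le \errcsc$: when $\errcsc \ge 1$ the conclusion is immediate since all values lie in $[0,1]$ (so the regret is at most $1$), and otherwise $n$ is large enough that the Bernstein lower-order term $|\actions|L/n$ is dominated by the variance term $\sqrt{|\actions|L/n}$, leaving the constant $4$ in $\errcsc$ to absorb $2t$. The main obstacle is precisely this variance computation: the favorable $\sqrt{|\actions|}$ rather than linear dependence on $|\actions|$ hinges on the Bernstein argument together with $\sum_a \policy(a\mid x)^2 \le 1$, and some care is needed to confirm the lower-order term is genuinely dominated (or trivialized) across all regimes of $n$.
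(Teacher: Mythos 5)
Your proposal is correct and follows essentially the same route as the paper's proof: an unbiased IPS estimator with range $|\actions|$ and variance at most $|\actions|$, Bernstein's inequality with a union bound over the finite class $\policies$, the standard regret decomposition via the oracle's empirical optimality, and truncating the regret at $1$ so that the Bernstein lower-order term is absorbed into the $\sqrt{|\actions|/n}$ term whenever the bound is nontrivial. The only cosmetic difference is that you carry the stochastic-policy form $\policy(a_i\mid x_i)$ through the variance computation, whereas the paper works with deterministic $\policy(\obs_i)$; both yield the same bound.
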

\begin{proof}
The proof is a standard generalization bound for contextual
bandits~\citep[c.f.,][]{langford2008epoch}. We provide a short proof
for completeness. 

For policy $\policy$, define $R(\policy) \defeq
\EE_{Q}\sbr{\vec{r}(\policy(\obs))}$, $\hat{r}_i(\policy) =
\frac{\one\{\action_i = \policy(\obs_i)\} r_i}{p_i} =
|\Acal|\one\{\action_i = \policy(\obs_i)\} r_i$, and observe that the
contextual bandit oracle finds the policy that maximizes
$\hat{R}(\policy) \defeq
\frac{1}{n}\sum_{i=1}^n\hat{r}_i(\policy)$. The random variables
$\hat{r}_i(\pi)$ satisfy the following useful properties:
\begin{align*}
\textrm{Unbiased:} ~~ & \EE_{Q_{\textrm{log}}}\sbr{\hat{r}(\pi)} = \EE_{Q} \sbr{\sum_{\action} \policy_{\textrm{log}}(\action \mid \obs) \frac{ \one\{\action = \policy(\obs)\} \vec{r}(\action)}{\policy_{\textrm{log}}(\action \mid \obs)}} = \EE_{Q}\sbr{\vec{r}(\policy(\obs))}.\\
\textrm{Low variance:} ~~ & \Var[\hat{r}(\policy)] \leq \EE_{Q_{\textrm{log}}}\sbr{\hat{r}^2(\policy)} \leq \EE_{Q_{\textrm{log}}}\sbr{\frac{\one\{\action = \policy(\obs)\}}{p^2}} = |\actions| \cdot \PP_{Q_{\textrm{log}}}[\action = \policy(\obs)] = |\actions|\\
\textrm{Range:} ~~ & \abr{\hat{r}(\policy)} \leq |\actions|.
\end{align*}

Therefore, using Bernstein's inequality (\pref{prop:bernstein}) and union bound we have that
for any $\delta \in (0,1)$, with probability at least $1-\delta$:
\begin{equation*}
\forall \policy \in \policies, \qquad \left| \hat{R}(\policy) - R(\policy) \right| \le \frac{2|\actions|}{3n} \ln\left(\frac{2|\policies|}{\delta} \right) + \sqrt{\frac{2|\actions|}{n} \ln\left(\frac{2|\policies|}{\delta}\right)} =: \Delta.
\end{equation*}
The contextual bandit oracle finds $\hat{\policy}$ that maximizes the
empirical quantity $\hat{R}(\policy)$, so, by a standard
generalization argument, we have
\begin{align*}
R(\hat{\policy}) \geq \hat{R}(\policy) - \Delta \geq \max_{\policy \in \policies}\hat{R}(\policy)- \Delta \geq \max_{\policy \in \policies} R(\policy) - 2\Delta.
\end{align*}
Of course as the reward vector is bounded in $[0,1]$ we always have
$R(\hat{\policy}) \geq \max_{\policy \in \policies} R(\policy) - 1$,
which means that with probability at least $1-\delta$, we have
\begin{align*}
R(\hat{\policy}) \geq \max_{\policy \in \policies} R(\policy) - \min\cbr{1, 2\Delta}.
\end{align*}
Finally, if $2\Delta \leq 1$ then
$\frac{4|\actions|\ln(2|\Pi|/\delta)}{3n} \leq
\sqrt{\frac{4|\actions|\ln(2|\Pi|/\delta)}{3n}}$. This observation
leads to the definition $\errcsc$.
\end{proof}

\begin{proof}[Proof of~\pref{thm:psdp-spread}]
Let $\policy^\star_1, \policy^\star_2, \cdots \policy^\star_h =
\argmax_{\policy_1, \policy_2, \cdots \policy_h \in
  \policies}V(\policy_{1:h}; R)$ be the optimal non-stationary policy,
for reward function $R$ with time horizon $h$.  $\psdpalg$ solves a
sequence of $h$ contextual bandit problems to learn policies
$\hat{\policy}_t$ for $t=h,\ldots,1$. The $t^{\textrm{th}}$ problem is
induced by a distribution $Q_t$ supported over $\observations_t
\times [0,1]^{|\actions|}$, which is defined inductively as follows:
The observations are induced by choosing a $\pi_t \sim \unf(\Psi_t)$
and executing $\pi_t$ for $t-1$ steps to visit $x_t$. The reward given
$\obs_t$ and an action $\action \in \actions$ is $R(x_{h+1})$ where
the trajectory is completed by first executing $\action$ from $\obs_t$
and then following $\hat{\pi}_{t+1:h}$. As in~\pref{prop:csc-call},
the contextual bandit dataset is induced by this distribution
$Q_t$ the uniform logging policy.

By~\pref{prop:csc-call} with probability at least $1-h\delta$, we have
that for all $t$, $\hat{\pi}_t$ satisfies
\begin{align*}
\EE_{(\obs,\vec{r})\sim Q_t} \sbr{\vec{r}(\hat{\pi}_t(\obs))} \geq \max_{\policy \in \policies}\EE_{(\obs,\vec{r})\sim Q_t} \sbr{\vec{r}(\policy_t(\obs))} - \errcsc,
\end{align*}
where $Q_t$ is as defined above. Using the definition of
$Q_t$, this guarantee may be written as
\begin{align*}
\forall t \in [h]: \EE_{\obs \sim Q_t} \sbr{V(x; \hat{\pi}_{t:h}, R)} \geq \max_{\pi \in \policies} \EE_{\obs \sim Q_t} \sbr{V(x; \pi\circ \hat{\pi}_{t+1:h}, R)} - \errcsc.
\end{align*}
Define $Q^\star_t \in \Delta(\observations_t)$ to be the
distribution of observations visited by executing
$\policy^\star_{1:t-1}$. By the performance difference lemma (\pref{lem:perf-diff})~\cite[c.f.,]{bagnell2004policy,kakade2003sample,ross2014reinforcement}, we have
\begin{align*}
V(\hat{\pi}_{1:h}; R) - V(\pi^\star_{1:h}; R) &= \sum_{t=1}^h \EE_{x_t \sim Q_t^\star}\sbr{ V(x_t; \pi^\star_t\circ\hat{\pi}_{t+1:h}, R) - V(x_t; \hat{\pi}_{t:h}, R)}\\
& \leq \sum_{t=1}^h \EE_{x_t \sim Q_t^\star}\sbr{ V(x_t; \tilde{\pi}^\star_t \circ \hat{\pi}_{t+1:h}, R) - V(x_t; \hat{\pi}_{t:h}, R)},
\end{align*}
where $\tilde{\pi}^\star_t(\obs) \defeq \argmax_{\action \in \actions}
\EE\sbr{ R(\obs,\action) + V(x_{t+1}; \hat{\pi}_{t+1:h}, R) \mid x_t
  =x, \action_t = \action}$, for all $\obs \in \observations_t$. With
this definition, the inequality here is immediate, by definition of
the value function.

\pref{assum:realizability} implies that $\tilde{\pi}_t^\star \in
\policies$ for each $t$, which is immediate for the external reward
function. If we are using the internal reward function with some $h <
H$, then by construction the internal reward function is defined only
at time $h+1$, so we may simply append arbitrary policies
$\hat{\pi}_{h+1:H}$ without affecting the reward or the value
function. Formally, we have
\begin{align*}
V(\hat{\pi}_{1:h}; R) - V(\pi^\star_{1:h}; R) &\leq \sum_{t=1}^h \EE_{x_t \sim Q_t^\star}\sbr{ V(x_t; \tilde{\pi}^\star_t \circ \hat{\pi}_{t+1:H}, R) - V(x_t; \hat{\pi}_{t:H}, R)}\\
& = \sum_{t=1}^h \EE_{x_t \sim Q_t}\sbr{ \frac{Q_t^\star(\obs_t)}{Q_t(\obs_t)}\rbr{V(x_t; \tilde{\pi}^\star_t \circ \hat{\pi}_{t+1:H}, R) - V(x_t; \hat{\pi}_{t:H}, R)}}\\
& \leq \sum_{t=1}^h \sup_{\obs_t} \abr{\frac{Q_t^\star(\obs_t)}{Q_t(\obs_t)}} \cdot \EE_{\obs_t \sim Q_t} \sbr{\abr{V(x_t; \tilde{\pi}^\star_t \circ \hat{\pi}_{t+1:H}, R) - V(x_t; \hat{\pi}_{t:H}, R)}}\\
& \leq \sum_{t=1}^h \sup_{\obs_t} \abr{\frac{Q_t^\star(\obs_t)}{Q_t(\obs_t)}} \cdot \errcsc.
\end{align*}
The first line appends $\hat{\pi}_{h+1:H}$ to the roll-out policy,
which as we argued does not affect the value function for any
policy. The second line simply introduces the distribution $Q_t$
that we used for learning $\hat{\pi}_t$. The third line is Holder's
inequality, and in the fourth line, we use the fact that
$\tilde{\pi}_t^\star$ is \emph{pointwise} better than $\hat{\pi}_t$,
so we can remove the absolute values. Then we simply use our guarantee
from~\pref{prop:csc-call}. 

We finish the proof by using the policy cover property (\pref{def:policy-cover}), namely that
\begin{align*}
\sup_{\obs_t} \abr{\frac{Q_t^\star(\obs_t)}{Q_t(\obs_t)}} = \sup_{\state_t} \abr{\frac{\PP_{\pi^\star_{1:t-1}}[\state_t]}{\frac{1}{|\Psi_{t}|}\sum_{\pi \in \Psi_{t}}\PP_\pi[\state_t]}} \leq \sup_{\state_t} \frac{\eta(\state_t)}{\frac{1}{N}\alpha\eta(\state_t)} = \frac{N}{\alpha}.
\end{align*}
Combining terms proves the theorem. 
\end{proof}

\section{Analysis of the $\oraclealg$ Algorithm}
\label{app:appendix-oracle-alg}

In this section we study the $\oraclealg$ algorithm, which we assume
is given a backward KI abstraction $\backwardabs: \observations
\rightarrow [N]$. We establish that with this abstraction, both
exploration and policy optimization are relatively
straightforward. Formally, we prove the following theorem:
\begin{theorem}[Main theorem for $\oraclealg$]
\label{thm:exp-oracle}
For any Block MDP, given a backward kinematic inseparability
abstraction $\backwardabs: \observations \rightarrow [N]$ such that
$\backwardabs \in \Phi_N$, and parameters $(\epsilon,\eta,\delta) \in
(0,1)^3$ with $\eta \leq \etamin$, $\oraclealg$ outputs policy covers
$\{\cover_h\}_{h=1}^H$ and a reward sensitive policy $\hat{\policy}$
such that the following guarantees hold, with probability at least
$1-\delta$:
\begin{enumerate}
\item For each $h \in [H]$, $\cover_h$ is a $\nicefrac{1}{2}$-policy cover for $\states_h$;
\item $V(\hat{\policy}) \geq \max_{\policy \in \nspolicies} V(\policy) - \epsilon$.
\end{enumerate}
The sample complexity is $\otil\rbr{ NH^2\npol + H\neval}
= \otil\rbr{N^2 H^3|\actions|\log(|\policies|/\delta)
  \rbr{NH/\etamin^2 + 1/\epsilon^2}}$, and the running time is
$\otil\rbr{NH^3\npol + H^2 \neval + NH^2\cdot \cpol(\npol)
  + H\cdot \cpol(\neval)}$, where $\npol$ and $\neval$ are defined
in~\pref{alg:oracle-exp}.
\end{theorem}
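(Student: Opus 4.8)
The plan is to establish the two guarantees separately: the policy-cover claim (item 1) by an induction over the time step $h$, and the reward-sensitive claim (item 2) by a single invocation of $\psdpalg$ on the external reward. The workhorse throughout is the $\psdpalg$ guarantee \pref{thm:psdp-spread}, which under \pref{assum:realizability} controls the value attained by $\psdpalg$ relative to the best non-stationary policy, provided it is fed $\alpha$-covers of size at most $N$. For the cover, my inductive hypothesis is that $\cover_1,\ldots,\cover_{h-1}$ are each $\nicefrac12$-covers of their layers with $|\cover_t|\le N$. The base case $h=1$ is immediate: actions do not affect the start distribution $\startdist$ over $\states_1$, so any singleton $\cover_1$ is a $1$-cover.

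For the inductive step I would fix a target state $\state\in\states_h$ and set $i=\backwardabs(\obs)$ for $\obs\in\supp(\emf(\state))$; this is well-defined because observations emitted from $\state$ are backward KI by \pref{lem:state-inseparable} and hence share an abstract label. Writing $\observations'_i=\{\obs':\step(\obs')=h,\ \backwardabs(\obs')=i\}$, we have $V(\policy;\rewardf_{i,h})=\PP_\policy(\observations'_i)$ and $\observations_{\state}\subseteq\observations'_i$. The crux is to transfer near-optimal visitation of the \emph{aggregated} set $\observations'_i$ back to the \emph{individual} state $\state$. Since $\observations'_i$ is a backward KI set, \pref{lem:bki-policy-ratio} supplies a measure with which $\PP_\policy(\observations_{\state})=c_{\state}\,\PP_\policy(\observations'_i)$ for \emph{every} policy $\policy$, where $c_{\state}\le 1$ is a policy-independent constant (the block sits inside the abstract state). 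Applying \pref{thm:psdp-spread} to $\policy_{i,h}$, run with horizon $h-1$, the internal reward $\rewardf_{i,h}$ (firing at time $h$), and the $\nicefrac12$-covers $\cover_{1:h-1}$, gives
\[
\PP_{\policy_{i,h}}(\observations'_i)\ \ge\ \max_{\policy}\PP_\policy(\observations'_i)-2N(h-1)\errcsc .
\]
Multiplying by $c_{\state}\le 1$ and using $\eta(\state)=c_{\state}\max_\policy\PP_\policy(\observations'_i)$ yields $\PP_{\policy_{i,h}}(\state)\ge\eta(\state)-2NH\errcsc$. As $\policy_{i,h}\in\cover_h$ and $\eta(\state)\ge\etamin\ge\eta$, it suffices to pick $\npol$ large enough that $2NH\errcsc\le\nicefrac{\eta}{2}$; unpacking $\errcsc=4\sqrt{(|\actions|/\npol)\ln(2|\policies|/\delta)}$ shows the stated $\npol$ achieves this, so $\max_{\policy\in\cover_h}\PP_\policy(\state)\ge\eta(\state)-\nicefrac{\eta}{2}\ge\nicefrac12\,\eta(\state)$. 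Since exactly one policy is added per $i\in[N]$, $|\cover_h|\le N$, closing the induction.

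For item 2 I would apply \pref{thm:psdp-spread} once more to $\hat{\policy}=\psdpalg(\cover_{1:H},\rewardf,H,\policies,\neval)$, now with the external reward, horizon $H$, and the $\nicefrac12$-covers just built, obtaining $V(\hat{\policy})\ge\max_{\policy\in\nspolicies}V(\policy)-2NH\errcsc'$ where $\errcsc'$ uses $\neval$; the stated $\neval$ forces this slack below $\epsilon$, giving claim (2). Both guarantees hold after a union bound over the $\Ocal(NH^2)$ contextual-bandit oracle calls across all $\psdpalg$ invocations, each failure probability absorbed into the $\delta$ inside the logarithms. The sample complexity then follows by counting trajectories: each layer-$h$ cover call draws $(h-1)\npol$ trajectories and there are $N$ of them per layer, so $\sum_h N(h-1)\npol=\Ocal(NH^2\npol)$, plus $H\neval$ for the reward-sensitive call; multiplying by the $\Ocal(H)$ per-trajectory length and adding the per-call oracle costs $\cpol(\cdot)$ produces the stated running time.

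The hard part will be the transfer step. The $\psdpalg$ guarantee only bounds the value of the \emph{internal} reward, i.e.\ visitation to the aggregated abstract state $\observations'_i$, whereas \pref{def:policy-cover} demands per-state coverage for every $\state\in\states_h$. Making this rigorous rests on the \emph{exact} ratio identity of \pref{lem:bki-policy-ratio} --- that backward KI forces $\PP_\policy(\observations_{\state})/\PP_\policy(\observations'_i)$ to equal a policy-independent $c_{\state}\le 1$ --- which is precisely what lets the additive $\psdpalg$ error pass from abstract to concrete states without amplification and then convert into the multiplicative $\nicefrac12$-cover form once $\eta(\state)\ge\eta$ is invoked.
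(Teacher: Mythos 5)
Your overall architecture matches the paper's proof: induction on $h$, \pref{thm:psdp-spread} applied to each internal reward $R_{i,h}$, \pref{lem:bki-policy-ratio} to transfer near-optimal visitation of the aggregated abstract state back to individual latent states, and one final \pref{thm:psdp-spread} call for the reward-sensitive claim; the union bound and the trajectory/oracle accounting also agree. The gap is in the transfer step, exactly where you flagged the difficulty. You set $i=\backwardabs(\obs)$ for $\obs\in\supp(\emf(\state))$ and assert this is well-defined because observations emitted from $\state$ are backward KI; this presumes the abstraction merges all mutually backward-KI observations into a single label, i.e.\ $\observations_\state\subseteq\observations'_i$, which is what makes your $c_\state\le 1$. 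But the only property of $\backwardabs$ the paper can actually supply (via \pref{prop:bki-in-class}, and crucially via the \emph{learned} decoder in \pref{thm:main-theorem}, for which this theorem is the template) is the one-directional implication: same label $\Rightarrow$ backward KI. Under that property a single state's emission block may be fragmented across up to $N$ labels. Your ratio $c_\state = \PP_\policy(\observations_\state)/\PP_\policy(\observations'_i)$ is still policy-independent whenever $\observations_\state\cap\observations'_i\neq\emptyset$ (transitivity makes the union a backward KI set), but it is no longer bounded by $1$: if label $i$ captures only a small-probability sliver of $\observations_\state$, then $c_\state$ is arbitrarily large, and since the additive $\psdpalg$ error gets multiplied by $c_\state$ when passing from $\PP(\observations'_i)$ to $\PP(\state)$, your lower bound degrades to $\eta(\state)-2Nh\,c_\state\,\errcsc$ with uncontrolled $c_\state$.

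The paper closes this by a pigeonhole argument: it chooses $j(\state)=\argmax_{i}\PP_{\policy_\state}(\observations^{(i)}_h\cap\observations_\state)$, which guarantees $\PP_{\policy_\state}(\observations^{(j(\state))}_h)\ge\frac{1}{N}\PP_{\policy_\state}(\state)$, hence $c_\state\le N$ and a final error of $2N^2h\errcsc$ --- this extra factor of $N$ is precisely the source of the $N^4$ (rather than $N^2$) in the prescribed $\npol$. If you instead insist on the bidirectional (``iff'') reading of ``backward KI abstraction,'' your argument is internally consistent and even yields a tighter constant, but the theorem then cannot be instantiated by \pref{prop:bki-in-class} and no longer serves as the prototype for the $\homingalg$ analysis, where the learned $\backabs$ certainly does not merge all backward-KI observations. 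You need the pigeonhole step (or an equivalent) to make the transfer go through under the hypothesis the paper actually uses.
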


The intuition for the proof is as follows: Consider the time step
$h$. We know from~\pref{lem:abstraction-policy} that for any $i \in
[N]$ the policy $\policy \in \Upsilon$ (that is, an unrestricted
policy) maximizing the internal reward $R_{i}(\obs,\action) \defeq
\one\{\backwardabs(\obs) = i \land \step(\obs) = h\}$ simultaneously
maximizes $\PP_\policy(\obs)$ for each $\obs \in \observations_h$ such
that $\backwardabs(\obs) = i$.  In other words, if index $i$
corresponds to a set of observations $\observations^{(i)}_h \subseteq
\observations_h$, then this policy is a homing policy for every
observation $\obs \in \observations^{(i)}_h$,
\emph{simultaneously}. As this reward function is in our class of
internal rewards (which we will verify), we have realizability and
completeness by~\pref{assum:realizability}, so~\pref{thm:psdp-spread}
reveals that we learn a policy that approximately maximizes this
reward function. Naturally, this policy is an approximate homing
policy for all observations $\obs \in \observations_h^{(i)}$.  By
repeating this argument for all indices $i \in [N]$ and working
inductively from $h=1, \ldots, H$, we build a policy cover for all
$\observations_h$. This also gives a policy cover for
$\states_h$. Once we have this cover, learning the reward sensitive
policy is a straightforward application of~\pref{thm:psdp-spread}.

\begin{proof}
We first establish the policy cover guarantee, for which we
inductively build up $\cover_1,\ldots,\cover_H$. As the starting
distribution is fixed, we may simply take $\cover_1 \defeq \emptyset$,
which establishes the base case.

Now fix $h$ and let us condition on the event that
$\cover_1,\ldots,\cover_{h-1}$ are $\nicefrac{1}{2}$-policy covers for
$\states_1,\ldots,\states_{h-1}$, respectively. Fix $i \in
[N]$. Define the reward function $R_{i,h}(\obs,\action) \defeq
\one\{\timestep(\obs) = h \wedge \backwardabs(\obs) = i\}$ and
$\observations^{(i)}_h \defeq \{ \obs \in \observations_h \mid
\backwardabs(\obs) = i\}$. We will assume $\observations^{(i)}_h \neq
\emptyset$ otherwise $R_{i, h}$ is 0 everywhere so we can ignore this
case. As $\backwardabs \in \Phi_N$, it follows that $R_{i,h}$ is in
our class of internal reward functions, so~\pref{assum:realizability}
holds, and we may apply~\pref{thm:psdp-spread}. As such, the call to
$\psdpalg$ in this loop of $\oraclealg$ finds a policy $\policy_{i,h}$
with the following guarantee: with probability $1-h\delta$
\begin{align*}
V(\policy_{i,h}; R_{i,h}) \geq \max_{\policy' \in \policies^h} V(\policy'; R_{i,h}) - 2Nh\errcsc,
\end{align*}
where at the end of the proof we will instantiate $\errcsc$ with our
choice of $n$ and our failure probability. In particular it should be
understood that $\errcsc$ is defined in terms of $\npolo$. It is
straightforward to see from the definition of $R_{i,h}$ that
$V(\policy'; R_{i,h}) = \PP_{\policy'}(\observations^{(i)}_h)$.

Let $\state \in \states_h$ be such that $\observations_\state \cap
\observations^{(i)}_h \neq \emptyset$ where recall
$\observations_\state$ is the set of observations emitted from
$\state$. (Such a state must exist as $\observations^{(i)}_h$ is
non-empty.) By transitivity of backward KI, we have that
$\observations_\state \cup \observations^{(i)}_h$ is a backward KI
set, and so~\pref{lem:bki-policy-ratio} yields that for any policy
$\policy$, we have
$\PP_{\policy}(\observations^{(i)}_h)/\PP_{\policy_\state}(\observations^{(i)}_h)
= \PP_{\policy}(\state)/\PP_{\policy_\state}(\state)$. Recall
$\policy_\state \defeq \argmax_{\policy} \PP_{\policy} (\state)$ is
the homing policy of $\state$ and $\eta(\state) =
\max_{\policy}\PP_{\policy}(\state)$. Using this with our guarantee
for $\psdpalg$ we have:
\begin{equation*}
\PP_{\policy_{i, h}}(\state) \ge \eta(\state) - 2Nh \errcsc
\frac{\PP_{\policy_\state}(\state)}{\PP_{\policy_\state}(\observations^{(i)}_h)}.
\end{equation*}
This holds for any $i \in [N]$ and $\state \in \states$ satisfying
$\observations_\state \cap \observations_h^{(i)} \neq \emptyset$. The
failure probability for any single call to $\psdpalg$ is $h \delta \le
\horizon \delta$ and so the by union bound the probability of failure
for $N$ calls is $N \horizon \delta$.

For any $\state \in \states_h$ we define $j(\state) \defeq \arg\max_{i
  \in [N]} \PP_{\policy_\state}(\observations^{(i)}_h \cap
\observations_\state)$. We can then bound
$\PP_{\policy_\state}(\observations^{(j(\state))}_h)$ as:
\begin{equation*}
\PP_{\policy_\state}(\observations^{(j(\state))}_h) \ge \PP_{\policy_\state}(\observations^{(j(\state))}_h \cap \observations_\state) \ge \frac{1}{N} \sum_{i=1}^N \PP_{\policy_\state}(\observations^{(i)}_h \cap \observations_\state) = \frac{1}{N} \PP_{\policy_\state}(\cup_{i \in [N]}\observations^{(i)}_h \cap \observations_\state) = \frac{1}{N} \PP_{\policy_\state}(\state),
\end{equation*}
where we use the definition of $j(s)$, the fact that the maximum of a
set of values is greater than the mean, and the fact that
$\observations_h^{(i)}$ form a partition of
$\observations_h$. Applying this bound, we obtain
\begin{equation*}
\forall \state \in \states_h, \,\, \exists j(\state) \in [N], \qquad \PP_{\policy_{j(\state), h}}(\state) \ge \eta(\state) - 2N^2h \errcsc,
\end{equation*}
with probability of at least $1 - N\horizon \delta$. It is easy to see
that if $2N^2h \errcsc$ is at most $\frac{\etamin}{2}$ then $\cover_h
\defeq \{\policy_{i,h}\}$ is a $\frac{1}{2}$-policy cover of
$\states_h$. Thus we require 
\begin{equation*}
\npolo(\delta) \geq \frac{256 N^4H^2 |\actions|\log(2 |\policies|/\delta)}{\etamin^2}.
\end{equation*}
We apply this argument inductively for every $h\in[\horizon]$. Each
step in this induction fails with probability of at most $N \horizon
\delta$ so by union bound the failure probability is at most $N
\horizon^2 \delta$.

The reward sensitive learning is similar, we simply
apply~\pref{thm:psdp-spread} and observe that we have
$\nicefrac{1}{2}$-policy covers $\Psi_1,\ldots,\Psi_H$, so the error
guarantee is $2NH\errcsc$ with failure probability at most $\horizon
\delta$.  To bound this error by $\epsilon$ we want to set
$\neval(\delta)$ as:
\begin{equation*}
\neval(\delta) \geq \frac{64 N^2H^2 |\actions| \log(2|\policies|/\delta)}{\epsilon^2}.
\end{equation*}

The total failure probability is $N \horizon^2 \delta + \horizon
\delta \le 2 N \horizon^2 \delta$. Therefore, we rebind $\delta
\mapsto \frac{\delta}{2N\horizon^2}$ and set our hyperparameters to
$\npol\left( \frac{\delta}{2N\horizon^2}\right)$ and $\neval\left(
\frac{\delta}{2N\horizon^2}\right)$.

We conclude the proof by calculating the overall sample and
computational complexity. The sample compleixty is simply
$\order\rbr{\npolo H^2 N + \neval H}$ which evaluates to
\begin{align*}
\otil\rbr{ N^2 H^3|\actions| \log(|\policies|/\delta)\rbr{\frac{N^3H}{\etamin^2} + \frac{1}{\epsilon^2}}}.
\end{align*}
The running time is dominated by collecting the trajectories and
invoking the contextual bandit oracle. For the former, we collect
$\order\rbr{NH^2\npol + H\neval}$ trajectories in total, but each
trajectory takes $O(H)$ time. For the latter, we make $NH^2$ calls to
the oracle with $\npol$ samples and $H$ calls to the oracle with
$\neval$ samples.
\end{proof}

\pref{thm:exp-oracle} assumes access to a backward KI abstraction in
our decoding class $\Phi_N$. But does $\Phi_N$ contain a backward KI
abstraction?  The next proposition shows that our realizability
assumption guarantees that we can construct a backward KI abstraction
function using members of $\Phi_N$ for $N \geq \nkid$.

\begin{proposition}
\label{prop:bki-in-class}
For any $h \in [\horizon]$ and $N \ge \nkid$, there exists a
$\phi^{\textup{(B)}}_h\in\Phi_N$ such that for any $\obs'_1, \obs'_2 \in
\observations_h$ if $\phi^{\textup{(B)}}_h(\obs'_1) =
\phi^{\textup{(B)}}_h(\obs'_2)$ then $\obs'_1$ and $\obs'_2$ are
backward kinematically inseparable.
\end{proposition}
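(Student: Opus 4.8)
The plan is to extract the abstraction directly from the realizability assumption, which was engineered precisely so that the bottleneck of the Bayes-optimal regressor refines the backward KI partition. First I would fix the time step $h$ and choose any full-support prior $\rho \in \Delta(\states_h)$, for instance the uniform distribution over $\states_h$ (valid since $\states_h$ is finite and nonempty). Since $N \ge \nkid$, \pref{assum:realizability} supplies a realizer $f_\rho \in \Fcal_N$, and by the definition of $\Fcal_N$ this realizer factors through the bottleneck as $f_\rho(x,a,x') = w(\phi^{(\textup{F})}(x), a, \phi^{(\textup{B})}(x'))$ for some $w \in \Wcal_N$ and $\phi^{(\textup{F})}, \phi^{(\textup{B})} \in \Phi_N$. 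I would then take $\phi^{(\textup{B})}_h := \phi^{(\textup{B})}$, which lies in $\Phi_N$ as required, and argue that it refines the backward KI partition on $\observations_h$.

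Next I would verify the refinement property. Suppose $x_1', x_2' \in \observations_h$ satisfy $\phi^{(\textup{B})}_h(x_1') = \phi^{(\textup{B})}_h(x_2')$. Because $f_\rho$ depends on its third argument only through $\phi^{(\textup{B})}$, this immediately yields $f_\rho(x, a, x_1') = f_\rho(x, a, x_2')$ for every $x \in \observations$ and $a \in \actions$. Using the explicit Bayes-optimal form of $f_\rho$ and the Block MDP factorization $T(x'\mid x,a) = \emf(x'\mid \decoder(x'))\, T(\decoder(x')\mid \decoder(x), a)$, I would cancel the common emission factor: writing $\rho(x') := \emf(x'\mid\decoder(x'))\,\rho(\decoder(x'))$ for the lift of $\rho$ to observations, the value $f_\rho(x,a,x')$ equals $\frac{r}{r+1}$ with $r = T(x'\mid x,a)/\rho(x')$, and is therefore a strictly increasing function of this ratio. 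Hence equality of $f_\rho$ is equivalent to $T(x_1'\mid x,a)/\rho(x_1') = T(x_2'\mid x,a)/\rho(x_2')$ for all $x \in \observations_{h-1}$. For $x \notin \observations_{h-1}$ both transition probabilities vanish, so the ratio identity in fact holds for all $x \in \observations$, $a \in \actions$.

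Finally I would plug this identity into \pref{def:bki} directly, mirroring the mechanism in the main text's sketch. For any full-support $u \in \Delta(\observations \times \actions)$, substituting $T(x_1'\mid x,a) = \tfrac{\rho(x_1')}{\rho(x_2')} T(x_2'\mid x,a)$ into the backward-dynamics expression $\PP_u(x,a\mid x_1')$ lets the constant factor $\rho(x_1')/\rho(x_2')$ cancel between numerator and denominator, giving $\PP_u(x,a\mid x_1') = \PP_u(x,a\mid x_2')$, which is exactly the statement that $x_1', x_2'$ are backward KI. (One could alternatively invoke the converse direction of \pref{lem:bki-ratio}, but since its hypothesis requires a reference measure with full support over \emph{all} of $\observations$ whereas $\rho$ is supported only on $\observations_h$, the direct substitution above is cleaner.)

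The main subtlety to watch is this translation between the state-level form of $f_\rho$ in \pref{assum:realizability} and the observation-level transition kernel appearing in \pref{def:bki}: one must confirm that the emission probabilities $\emf(x'\mid\decoder(x'))$ cancel cleanly and that a prior supported only on $\states_h$ still suffices to certify backward KI. Everything else is bookkeeping, since the conceptual heart — that the bottleneck $\phi^{(\textup{B})}$ of a realizable contrastive regressor can only merge backward kinematically inseparable observations — is already captured by the factorization structure of $\Fcal_N$.
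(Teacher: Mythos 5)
Your proposal is correct and follows essentially the same route as the paper's proof: invoke~\pref{assum:realizability} with a full-support prior $\rho \in \Delta(\states_h)$ to obtain $f_\rho \in \Fcal_N$, use the bottleneck factorization to deduce $f_\rho(x,a,x_1') = f_\rho(x,a,x_2')$ whenever $\phi^{(\textup{B})}_h$ merges $x_1',x_2'$, convert this to the ratio identity $T(x_1'\mid x,a)/\rho(x_1') = T(x_2'\mid x,a)/\rho(x_2')$ (extending trivially to $x \notin \observations_{h-1}$ by layering), and conclude backward KI. The only cosmetic difference is that you carry out the final substitution into~\pref{def:bki} directly, whereas the paper cites the converse direction of~\pref{lem:bki-ratio}; your observation about the support of $\rho$ is a fair point of care, but the lemma's argument only needs $\rho$ positive on the observations being compared, so both routes go through.
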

\begin{proof}
Fix $N \ge \nkid$ and $h \in [\horizon]$. Let $\rho \in
\Delta(\states_h)$ with $\supp(\rho) =
\states_h$. From~\pref{assum:realizability}, there exists $f_\rho \in
\Fcal_N$ such that for any $\obs \in \observations_{h-1}, \action \in
\actions, \obs' \in \observations_h$:
\begin{align*}
f_\rho(\obs, \action, \obs') = \frac{\transition(\decoder(\obs') \mid
  \decoder(\obs), \action)}{\transition(\decoder(\obs') \mid \decoder(\obs),
  \action) + \rho(\decoder(\obs')} = \frac{\transition(\obs' \mid \obs,
  \action)}{\transition(\obs' | \obs, \action) + \rho(\obs')}.
\end{align*}
From our construction of the function class $\Fcal_N$, we can
decompose $f_\rho$ into two state abstractions $\phi^{\textup{(F)}}_{h-1}, \phi^{\textup{(B)}}_h$, and a
lookup table $p$ such that $f_\rho(\obs, \action, \obs') =
p(\phi^{\textup{(F)}}_{h-1}(\obs), \action, \phi^{\textup{(B)}}_h(\obs'))$. Then, for any $\obs'_1, \obs'_2 \in
\observations_h$ with $\phi^{\textup{(B)}}_h(\obs'_1) = \phi^{\textup{(B)}}_h(\obs'_2)$, 
we have $f_\rho(\obs, \action, \obs'_1) = f_\rho(\obs, \action,
\obs'_2)$, which implies
\begin{equation*}
\frac{\transition(\obs'_1 \mid \obs, \action)}{\transition(\obs'_1 \mid \obs, \action) + \rho(\obs'_1)} = \frac{\transition(\obs'_2 \mid \obs, \action)}{\transition(\obs'_2 \mid \obs, \action) + \rho(\obs'_2)} \Rightarrow \frac{\transition(\obs'_1 \mid \obs, \action)}{\rho(\obs'_1)} = \frac{\transition(\obs'_2 \mid \obs, \action)}{\rho(\obs'_2)}.
\end{equation*}
If $\obs \not\in \observations_{h-1}$ then $ \frac{\transition(\obs'_1
  \mid \obs, \action)}{\rho(\obs'_1)} = \frac{\transition(\obs'_2 \mid
  \obs, \action)}{\rho(\obs'_2)} = 0$ trivially due to
layering. Therefore,~\pref{lem:bki-ratio} then reveals that $\obs'_1,
\obs'_2$ are backward KI, which proves the claim.
\end{proof}

\pref{prop:bki-in-class} enables us to define an abstraction
$\phi^{\textup{(B)}}: \observations \rightarrow [N]$ such that
$\phi^{\textup{(B)}}(\obs) = \phi^{\textup{(B)}}_{\step(\obs)}(\obs)$
for any $\obs \in \observations$ and where $\{\phi^{\textup{(B)}}_1,
\ldots, \phi^{\textup{(B)}}_\horizon\}$ are defined as
in~\pref{prop:bki-in-class} for a fixed $N \ge
\nkid$.~\pref{thm:exp-oracle} then holds using $\phi^{\textup{(B)}}$
as input, even though it may not be in $\Phi_N$, as long as each
element $\phi_h^{\textup{(B)}} \in \Phi_N$.
This can be verified by noting that the proof of~\pref{thm:exp-oracle}
uses only two properties of the input state abstraction: (i) it maps
backward maps backward KI observations from \emph{the same time step}
together and (ii) we have policy completeness with respect to the
internal reward functions defined by the abstraction functions. The
former is already accomplished by our construction of
$\phi^{\textup{(B)}}$. The latter can be verified by observing that
any internal reward function $R_{ih}$ defined using
$\phi^{\textup{(B)}}$ can be equivalently expressed as $R_{ih}(\obs,
\action) = \one\{\step(\obs) = h \land \phi^{\textup{(B)}}_h(\obs) =
i\}$ for all $\obs \in \observations, \action \in \actions$. Since
$\phi^{\textup{(B)}}_h \in \Phi_N$, we have policy completeness for
this reward function.

\section{Analysis for the $\homingalg$ algorithm}
\label{app:appendix-sample-complexity}

In this section we present the analysis for $\homingalg$. The proof is
inductive in nature, proceeding from time point $h=1$ to $h=\horizon$,
where for the $h^{\textrm{th}}$ step, we use the policy covers from
time points $h'=1,\ldots,h-1$ to construct the policy cover at time
$h$. Formally, the inductive claim is that for each $h$, given
$\alpha$-policy covers $\cover_1,\ldots,\cover_{h-1}$ over
$\states_1,\ldots,\states_{h-1}$, the $h^{\textrm{th}}$ iteration of
$\homingalg$ finds an $\alpha$-policy cover $\cover_h$ for
$\states_h$. We will verify the base case shortly, and we break down
the inductive argument into two main components: In the first part, we
analyze the contrastive estimation problem and introduce a coupling to
show that the supervised learning problem relates to backward
kinematic inseparability. In this second part, we use this coupling to
show that invoking $\psdpalg$ with the learned representation yields a
policy cover for time point $h$.

\paragraph{The base case.} 
The base case is that $\cover_1$ found by the algorithm is a policy
cover over $\states_1$. This is easy to see, since for any states
$\state \in \states_1$, we have $ \eta(\state)= \startdist(\state)$,
where recall that $\startdist$ is the starting stat distribution. We
can define $\cover_1$ to be any finite set of policies, which
immediately is a $1$-policy cover, but since we never actually execute
these policies, we simply set $\cover_1 = \emptyset$
in~\pref{alg:learn_homing_policy} (\pref{line:homer-initialization}).

\subsection{The supervised learning problem and a coupling}
In this subsection we analyze the supervised learning problem induced
by $\homingalg$, which is a form of contrastive estimation
(\pref{line:homer-learn-encoding-function}
in~\pref{alg:learn_homing_policy}). We reason about the Bayes optimal
predictor for this problem, obtain a finite-sample generalization
bound, and introduce a coupling to elucidate the connection to
backward kinematic inseparability. Fix a time point $h \in
\{2,\ldots,\horizon\}$ and inductively assume that we have $\alpha$
policy covers $\cover_{1},\ldots,\cover_{h-1}$ over
$\states_1,\ldots,\states_{h-1}$ respectively. For the rest of this
subsection, we suppress dependence on $h$ in observations, that is we
will always take $\obs \in \observations_{h-1}$, and $\obs' \in
\observations_h$.

The supervised learning problem at time $h$ is induced by a
distribution $D \in
\Delta(\observations_{h-1},\actions,\observations_h,\{0,1\})$, which
is defined as follows: Two tuples are obtained
$(\obs_1,\action_1,\obs_1'), (\obs_2,\action_2,\obs_2')$ are obtained
by sampling $\policy_1,\policy_2 \sim \unf(\cover_{h-1})$, and
executing the policies $\policy_1 \circ \unf(\actions)$ and $\policy_2
\circ \unf(\actions)$, respectively. Then with probability
$\nicefrac{1}{2}$ the sample from $D$ is $(\obs_1,\action_1,\obs_1,1)$
and with the remaining probability, the sample is
$(\obs_1,\action_1,\obs_2',0)$. Let $D(\obs,\action,\obs' \mid y)$ be
the conditional probability of the triple, conditional on the label
$y$. Let $\prior \in \Delta(\observations_h)$ denote the marginal
probability distribution over $\obs'$, that is $\prior(\obs') \defeq
\PP_{\policy \circ \unf: \policy \sim\unf(\cover_{h-1})}[\obs']$, and let
$\marginal \in \Delta(\observations_{h-1})$ be the marginal
distribution over $\obs$, that is $\marginal(\obs) \defeq \PP_{\policy
  \sim \unf(\cover_{h-1})}[\obs]$. These definitions are lifted to the
state spaces $\states_{h-1}$ and $\states_h$ in the natural way.

With these definition, we have
\begin{align*}
D(\obs,\action,\obs' \mid y=1) = \frac{\marginal(\obs)}{|\actions|}\cdot \transition(\obs' \mid \obs,\action),\qquad
D(\obs,\action,\obs' \mid y=0) = \frac{\marginal(\obs)}{|\actions|}\cdot\prior(\obs').
\end{align*}

The first lemma uses the fact that $\cover_{h-1}$ is an $\alpha$-policy
cover to lower bound the marginal probability $\prior(\state_h)$,
which ensure we have adequate coverage in our supervised learning problem.
\begin{lemma}
\label{lem:bound-rho-prior}
If $\cover_{h-1}$ is an $\alpha$-policy cover over $\states_{h-1}$, then for any $\state \in \states_h$, we have $\prior(\state)\geq \frac{\alpha\eta(\state)}{N|\actions|}$.
\end{lemma}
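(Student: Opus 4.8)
The plan is to unfold $\prior(\state)$ using the very sampling procedure that defines it, and then to push the inductive policy-cover guarantee back one step in time. Recall that $\prior(\state)$ is the probability that the state occupied at time $h$ equals $\state$ when we draw $\policy \sim \unf(\cover_{h-1})$, roll in with $\policy$ to time $h-1$, and then take a single uniformly random action. First I would condition on the state $\state'' \in \states_{h-1}$ occupied at time $h-1$ and on the uniform action, using the layered structure (so that $\PP_\policy[\state'']$ is exactly the time-$(h-1)$ visitation probability) and the disjointness assumption (to lift the observation-level marginal $\prior$ to states). This yields
\[
\prior(\state) = \frac{1}{|\actions|}\sum_{\state''\in\states_{h-1}}\sum_{\action\in\actions}\transition(\state\mid\state'',\action)\,\EE_{\policy\sim\unf(\cover_{h-1})}\sbr{\PP_\policy[\state'']}.
\]

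Next I would lower bound the inner average visitation probability using that $\cover_{h-1}$ is an $\alpha$-policy cover of $\states_{h-1}$. By \pref{def:policy-cover}, for each $\state''$ there is some $\policy\in\cover_{h-1}$ with $\PP_\policy[\state'']\ge\alpha\,\eta(\state'')$, so bounding the average by $|\cover_{h-1}|^{-1}$ times the maximum and using $|\cover_{h-1}|\le N$ gives
\[
\EE_{\policy\sim\unf(\cover_{h-1})}\sbr{\PP_\policy[\state'']} \ge \frac{1}{|\cover_{h-1}|}\max_{\policy\in\cover_{h-1}}\PP_\policy[\state''] \ge \frac{\alpha\,\eta(\state'')}{N}.
\]
Substituting this into the previous display produces
\[
\prior(\state) \ge \frac{\alpha}{N|\actions|}\sum_{\state''\in\states_{h-1}}\sum_{\action\in\actions}\transition(\state\mid\state'',\action)\,\eta(\state'').
\]

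The remaining, and main, step is the one-step reachability inequality $\eta(\state)\le\sum_{\state''\in\states_{h-1}}\sum_{\action\in\actions}\transition(\state\mid\state'',\action)\,\eta(\state'')$, which collapses the double sum back to $\eta(\state)$ and finishes the proof. I would establish it by taking the homing policy $\homingp_\state=\arg\max_{\policy\in\nspolicies}\PP_\policy[\state]$ (which we may take deterministic via \pref{lem:homing_deterministic}) and conditioning on the time-$(h-1)$ state: $\eta(\state)=\sum_{\state''}\PP_{\homingp_\state}[\state'']\cdot\PP_{\homingp_\state}[\state\mid\state'']$. The subtlety I expect to be the main obstacle is that $\homingp_\state$ chooses actions as a function of the \emph{observation}, not the state, so it need not induce a single action at $\state''$ and the one-step term $\PP_{\homingp_\state}[\state\mid\state'']$ is a policy-weighted average of transition probabilities. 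I resolve this by the crude but sufficient bound $\PP_{\homingp_\state}[\state\mid\state'']\le\max_{\action}\transition(\state\mid\state'',\action)\le\sum_{\action}\transition(\state\mid\state'',\action)$, together with $\PP_{\homingp_\state}[\state'']\le\eta(\state'')$ from the definition of $\eta$. Combining these gives the inequality, and chaining all three displays yields $\prior(\state)\ge\alpha\eta(\state)/(N|\actions|)$ as claimed.
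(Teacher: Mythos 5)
Your proposal is correct and follows essentially the same route as the paper: decompose $\prior(\state)$ over the time-$(h-1)$ state and the uniform action, lower bound the averaged roll-in probability by $\alpha\eta(\state'')/N$ via the policy-cover property, and close with the one-step bound $\eta(\state)\le\sum_{\state'',\action}\eta(\state'')\transition(\state\mid\state'',\action)$. The only cosmetic difference is that the paper derives that last inequality by pushing the supremum over policies inside the sum and bounding the policy-weighted action average by $\sum_\action\transition(\state\mid\state'',\action)$, whereas you instantiate the homing policy explicitly --- the same estimate.
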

\begin{proof}
For any $\state \in \states_h$, we first upper bound $\eta(\state)$ by
\begin{align*}
\eta(\state) &= \sup_{\policy \in \nspolicies} \PP_{\policy}(\state) =  \sup_{\policy \in \nspolicies} \sum_{\state_{h-1} \in \states_{h-1}} \PP_{\policy}[\state_{h-1}] \EE_{\obs \sim \emf(. \mid \state_{h-1})}\left[\sum_{\action \in \actions} \policy(\action \mid \obs) \transition(\state \mid \state_{h-1}, \action)\right] \\
&
\le \sum_{\state_{h-1} \in \states_{h-1}}  \sup_{\policy \in \nspolicies} \PP_{\policy}[\state_{h-1}] \sum_{\action \in \actions} \transition(\state \mid \state_{h-1}, \action) =  \sum_{\state_{h-1} \in \states_{h-1}}  \eta(\state_{h-1}) \sum_{\action \in \actions} \transition(\state \mid \state_{h-1}, \action).
\end{align*} 
We can also lower bound $\prior$ as
\begin{align*}
\prior(\state) &= \sum_{\substack{\state_{h-1}\in\states_{h-1} \\ \action \in \actions}} \frac{\PP_{\policy \sim \unf(\cover_{h-1})}[\state_{h-1}]}{|\actions|} \transition(\state \mid \state_{h-1},\action)
\ge  \frac{\alpha}{N |\actions|}\sum_{\substack{\state_{h-1}\in\states_{h-1} \\ \action \in \actions}} \eta(\state_{h-1}) \transition(\state \mid \state_{h-1}, \action) 
\ge \frac{\alpha \eta(\state)}{N|\actions|}.
\end{align*}
Here the first identity expands the definition, and in the first
inequality we use the fact that $\cover_{h-1}$ is an $\alpha$-policy
cover. The last inequality uses our upper bound on $\eta(\state)$.
\end{proof}

The next lemma characterizes the Bayes optimal predictor for square
loss minimization with respect to $D$. Recall that the Bayes optimal
classifier is defined as
\begin{align*}
f^\star \defeq \argmin_{f} \EE_{(\obs,\action,\obs',y) \sim D}\sbr{\rbr{f(\obs,\action,\obs') - y}^2}
\end{align*}
where the minimization is over \emph{all} measurable functions.
\begin{lemma}
\label{lem:p-form}
The Bayes optimal predictor for square loss minimization over $D$ is
\begin{align*}
f^\star(\obs,\action,\obs') \defeq \frac{\transition(\decoder(\obs') \mid \decoder(\obs),a)}{\transition(\decoder(\obs') \mid \decoder(\obs),a) + \prior(\decoder(\obs'))}
\end{align*}
Under~\pref{assum:realizability}, we have that $f^\star \in \Fcal_N$
for any $N \geq \nkid$.
\end{lemma}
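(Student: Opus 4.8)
The plan is to invoke the standard fact that the square-loss risk against a $\{0,1\}$-valued label is minimized by the conditional mean, so the Bayes optimal predictor is simply the posterior
\begin{align*}
f^\star(\obs,\action,\obs') = \EE_D\sbr{y \mid \obs,\action,\obs'} = \PP_D\sbr{y=1 \mid \obs,\action,\obs'}.
\end{align*}
First I would apply Bayes' rule, using that the label is drawn as $\textrm{Ber}(\nicefrac12)$ so $\PP[y=1]=\PP[y=0]=\nicefrac12$, which cancels the prior and leaves
\begin{align*}
f^\star(\obs,\action,\obs') = \frac{D(\obs,\action,\obs' \mid y=1)}{D(\obs,\action,\obs' \mid y=1) + D(\obs,\action,\obs' \mid y=0)}.
\end{align*}
Substituting the two conditionals stated just above the lemma, the common factor $\marginal(\obs)/|\actions|$ cancels between numerator and denominator, yielding the observation-level form $\transition(\obs'\mid\obs,\action)/(\transition(\obs'\mid\obs,\action)+\prior(\obs'))$.

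Next I would convert this to the decoded form claimed in the statement by appealing to the Block MDP structure (\pref{assumption:disjoint-emission}). By disjointness of emissions, both the transition and the marginal factor through the unique emitting state: $\transition(\obs'\mid\obs,\action) = \emf(\obs'\mid\decoder(\obs'))\,\transition(\decoder(\obs')\mid\decoder(\obs),\action)$ and $\prior(\obs') = \emf(\obs'\mid\decoder(\obs'))\,\prior(\decoder(\obs'))$, where $\prior$ is lifted to $\states_h$ in the natural way. The shared emission factor $\emf(\obs'\mid\decoder(\obs'))$ then cancels from numerator and denominator, giving exactly
\begin{align*}
f^\star(\obs,\action,\obs') = \frac{\transition(\decoder(\obs')\mid\decoder(\obs),\action)}{\transition(\decoder(\obs')\mid\decoder(\obs),\action)+\prior(\decoder(\obs'))},
\end{align*}
which is the desired expression.

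The more delicate part, and the step I expect to be the main obstacle, is the membership claim $f^\star \in \Fcal_N$, since \pref{assum:realizability} only supplies a realizing $f_\rho \in \Fcal_N$ for priors $\rho \in \prob(\states_h)$ with \emph{full} support. To discharge this hypothesis I would verify that the state-lifted marginal $\prior$ is fully supported on $\states_h$: by \pref{lem:bound-rho-prior}, since $\cover_{h-1}$ is (inductively) an $\alpha$-policy cover, we have $\prior(\state) \ge \alpha\eta(\state)/(N|\actions|)$ for every $\state \in \states_h$, and reachability ($\eta(\state)\ge\etamin>0$) forces this to be strictly positive. Hence $\supp(\prior)=\states_h$, and instantiating \pref{assum:realizability} with $\rho=\prior$ (using $N\ge\nkid$) produces $f_{\prior}\in\Fcal_N$ agreeing pointwise with the displayed $f^\star$, which completes the argument. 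The one point to handle carefully is that realizability is phrased for an arbitrary full-support prior over states, so the proof hinges on matching that prior to the induced marginal $\prior$ rather than to the raw observation-level distribution.
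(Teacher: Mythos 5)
Your proposal is correct and follows essentially the same route as the paper's proof: conditional mean, Bayes' rule with the uniform label prior, cancellation of the common factor $\marginal(\obs)/|\actions|$, and the Block MDP factorization to pass to the decoded form. Your extra step of verifying that the state-lifted marginal $\prior$ has full support (via \pref{lem:bound-rho-prior} and reachability) before instantiating \pref{assum:realizability} with $\rho = \prior$ is a detail the paper leaves implicit, but it is exactly the intended justification of the membership claim.
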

\begin{proof}
As we are using the square loss, the Bayes optimal predictor is the
conditional mean, so $f^\star(\obs,\action,\obs') = \EE_D[y \mid
  (\obs,\action,\obs')] = D(y=1 \mid \obs, \action,\obs')$. By Bayes
rule and the fact that the marginal probability for both labels is
$\nicefrac{1}{2}$, we have
\begin{align*}
D(y=1 \mid \obs,\action,\obs') &= \frac{D(\obs,\action,\obs' \mid y=1)}{D(\obs,\action,\obs' \mid y=1) + D(\obs,\action,\obs' \mid y=0)} = \frac{\transition(\obs' \mid \obs,\action)}{\transition(\obs'\mid \obs,\action) + \prior(\obs')}\\
& = \frac{\transition(\decoder(\obs') \mid \decoder(\obs),a)}{\transition(\decoder(\obs') \mid \decoder(\obs),a) + \prior(\decoder(\obs'))}. \tag*\qedhere
\end{align*}
\end{proof}

Now that we have characterized the Bayes optimal predictor, we turn to
the learning rule. We perform empirical risk minimization over $n$ iid
samples from $D$ to learn a predictor $\hat{f}\in\Fcal_N$ (We will
bind $n = \nreg$ toward the end of the proof). As $\Fcal_N$ has
pointwise metric entropy growth rate $\ln \Ncal(\Fcal_N,\varepsilon)
\leq c_0d_N\ln(1/\varepsilon)$, a standard square loss generalization
analysis (see~\pref{prop:sq-loss}) yields the following corollary,
which follows easily from~\pref{prop:sq-loss}.

\begin{corollary}
\label{corr:sq-loss}
For any $\delta \in (0,1)$ with probability at least $1-\delta$, the
empirical risk minimizer, $\hat{f}$ based $n$ iid samples from $D$
satisfies\footnote{As we remark in~\pref{app:supporting}, sharper
  generalization analyses are possible, with more refined notions of
  statistical complexity. Such results are entirely composable with
  our analyses.}
\begin{align*}
\EE_{D}\sbr{\rbr{\hat{f}(\obs,\action,\obs') - f^\star(\obs,\action,\obs')}^2} \leq \errreg \mbox{ with } \errreg \defeq \frac{16\rbr{\ln |\Phi_N| + N^2|\actions|\ln(n)+
  \ln(\nicefrac{2}{\delta})}}{n}.
\end{align*}
\end{corollary}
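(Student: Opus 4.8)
The plan is to obtain~\pref{corr:sq-loss} as a direct instance of the generic square-loss excess-risk bound for empirical risk minimization stated in~\pref{prop:sq-loss}. That proposition controls $\EE_D[(\hat{f} - f^\star)^2]$ for the ERM $\hat{f}$ once two ingredients are in place: (i) the class is \emph{realizable}, i.e.\ it contains the Bayes optimal predictor $f^\star$, and (ii) we have a bound on the pointwise metric entropy $\ln\Ncal(\Fcal_N,\varepsilon)$. Everything else — the predictors taking values in $[0,1]$ and the i.i.d.\ sampling of $D$ — is already established, so the work reduces to verifying (i) and (ii) and then instantiating the resolution at $\varepsilon \asymp 1/n$.

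For realizability I would first invoke~\pref{lem:bound-rho-prior}: since $\cover_{h-1}$ is an $\alpha$-policy cover and every state is reachable ($\eta(\state)\ge\etamin>0$), the bound $\prior(\state)\ge \alpha\eta(\state)/(N|\actions|)$ shows the induced marginal $\prior$ has full support on $\states_h$. Lifting $\prior$ to a distribution $\rho$ over $\states_h$ with $\supp(\rho)=\states_h$, \pref{assum:realizability} (using $N\ge\nkid$) guarantees the associated $f_\rho$ lies in $\Fcal_N$. Finally,~\pref{lem:p-form} identifies this $f_\rho$ with the Bayes optimal predictor $f^\star$ for the contrastive square-loss problem induced by $D$, so $f^\star\in\Fcal_N$, establishing (i) and, crucially, justifying the fast $1/n$ rate rather than a slow rate.

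For the metric entropy I would exploit the bottleneck structure $\Fcal_N=\{(x,a,x')\mapsto w(\phi^{(\textup{F})}(x),a,\phi^{(\textup{B})}(x'))\}$ and cover the two ingredients separately. The abstraction pair $(\phi^{(\textup{F})},\phi^{(\textup{B})})$ ranges over the finite set $\Phi_N\times\Phi_N$, contributing a discrete term $\le 2\ln|\Phi_N|$ that is independent of $\varepsilon$. Holding the abstractions fixed, the tabular head $w$ is a vector in $[0,1]^{N^2|\actions|}$, which admits an $\varepsilon$-cover of size $(1/\varepsilon)^{N^2|\actions|}$ in the sup norm; since the abstractions are $\{1,\ldots,N\}$-valued selectors, quantizing $w$ propagates pointwise to an $\varepsilon$-cover of $\Fcal_N$. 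A union over the abstraction pairs then yields $\ln\Ncal(\Fcal_N,\varepsilon)\le 2\ln|\Phi_N|+N^2|\actions|\ln(1/\varepsilon)$, matching the claimed growth rate. Substituting this into~\pref{prop:sq-loss} at $\varepsilon\asymp 1/n$ produces the $N^2|\actions|\ln(n)$ term alongside the $\ln|\Phi_N|$ and $\ln(2/\delta)$ terms, and collecting the numerical constants into the factor $16$ recovers $\errreg$. The only genuinely nonstandard step is this covering-number computation for the quantized, nonconvex class $\Fcal_N$; the decomposition into a finite union over abstraction choices plus a quantization of the tabular head is precisely what renders it routine.
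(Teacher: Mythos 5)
Your proposal is correct and follows essentially the same route as the paper: bound the pointwise covering number of $\Fcal_N$ by taking a union over the $|\Phi_N|^2$ abstraction pairs and an $\varepsilon$-discretization of the tabular head of size $(1/\varepsilon)^{N^2|\actions|}$, then plug $\ln\Ncal(\Fcal_N,\varepsilon)\le 2\ln|\Phi_N|+N^2|\actions|\ln(1/\varepsilon)$ into~\pref{prop:sq-loss} with $\varepsilon=1/n$. The realizability verification you spell out is already packaged in~\pref{lem:p-form}, so including it is harmless but not a departure from the paper's argument.
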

\begin{proof}
The proof follows from a bound on the pointwise covering number of the
class $\Fcal_N$. For any $\varepsilon > 0$ we first form a cover of
the class $\Wcal_N$ by discretizing the output space to $Z \defeq
\{\epsilon,\ldots,\lfloor\nicefrac{1}{\epsilon}\rfloor \epsilon\}$,
and letting $W_N$ be all functions from $[N]\times \actions \times [N]
\to Z$. Clearly we have $|W_N| \leq
(\nicefrac{1}{\epsilon})^{N^2|\actions|}$, and it is easy to see that
$W_N$ is a pointwise cover for $\Wcal_N$. Then we form $F_N = \{
(x,a,x') \mapsto w(\phi(x),a,\phi'(x')) : w \in W_N, \phi,\phi' \in
\Phi_N\}$, which is clearly a pointwise cover for $\Fcal_N$ and has
size $|\Phi_N|^2 |W_N|$. In other words, the pointwise log-covering
number is $N^2|\actions| \ln(1/\varepsilon) + 2\ln |\Phi_N|$, which we
plug into the bound in~\pref{prop:sq-loss}. Taking $\varepsilon = 1/n$
there the bound from~\pref{prop:sq-loss} is at most
\begin{align*}
\frac{6}{n} + \frac{16\ln |\Phi_N| + 8N^2|\actions|\ln(n)+
  8\ln(2/\delta)}{n} \leq \errreg. \tag*\qedhere
\end{align*}
\end{proof}

\paragraph{The coupling.}
\newcommand{\dcoup}{D_{\textup{coup}}}

For the next step of the proof, we introduce a useful coupling
distribution based on $D$. Let $\dcoup \in
\Delta(\observations_{h-1}\times\actions\times\observations_h\times\observations_h)$
have density $\dcoup(\obs,\action,\obs_1',\obs_2') =
D(\obs,\action)\prior(\obs_1')\prior(\obs_2')$. That is, we sample
$\obs,\action$ by choosing $\policy \sim\unf(\cover_{h-1})$, rolling in, and
then taking a uniform action $\action_{h-1} \sim
\unf(\actions)$. Then, we obtain $\obs_1',\obs_2'$ independently by
sampling from the marginal distribution $\prior$ induced by $D$.
 
It is also helpful to define the shorthand notation $V:
\observations_{h}\times\observations_h\times\observations_{h-1}\times\actions
\to \RR$ by
\begin{align*}
V(\obs_1',\obs_2',\obs,\action) \defeq \frac{\transition(\decoder(\obs_1') \mid \decoder(\obs),\action)}{\prior(\decoder(\obs_1'))} - \frac{\transition(\decoder(\obs_2') \mid \decoder(\obs),\action)}{\prior(\decoder(\obs_2'))}.
\end{align*}
This function is lifted to operate on states $\states_h$ in the
natural way. Note also that, as $\prior(\cdot) > 0$ everywhere, $V$ is
well-defined. Observe that $V$ is related to the notion of backward
kinematic inseparability. Finally, define
\begin{align*}
b_i \defeq \EE_{\obs' \sim \prior}\sbr{\one\{\backabs(\obs') = i\}},
\end{align*}
which is the prior probability over the learned abstract state $i$,
where $\backabs$ is the learned abstraction function for time $h$
implicit in the predictor $\hat{f}$. In the next lemma, we show that
$\backabs$ approximately learns a backward KI abstraction by relating
the excess risk of $\hat{f}$ to the performance of the decoder via the
$V$ function.

\begin{lemma}
\label{lem:coupling}
Let $\hat{f} =: (\hat{w},\forabs,\backabs)$ be the empirical risk
minimizer on $n$ iid samples from $D$, that is the output of
$\cpeoracle(\Fcal_N,\Dcal)$. Under the $1-\delta$ event
of~\pref{corr:sq-loss}, for each $i \in [N]$ we have
\begin{align*}
\EE_{\dcoup}\sbr{\one\{\backabs(\obs_1') = i = \backabs(\obs_2')\} \abr{V(\obs_1',\obs_2',\obs,\action)}} \leq 8 \sqrt{b_i \errreg}.
\end{align*}
\end{lemma}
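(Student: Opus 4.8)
The plan is to exploit the bottleneck structure of $\hat{f}$ to convert the excess-risk guarantee of~\pref{corr:sq-loss} into a statement about $V$, and then to use the special geometry of the coupling $\dcoup$ to keep the bound proportional to $\sqrt{b_i}$. Write $g \defeq \hat{f} - f^\star$, so that~\pref{corr:sq-loss} gives $\EE_D[g^2] \le \errreg$, and abbreviate $g_j \defeq g(\obs,\action,\obs_j')$. Since $\hat{f} = (\hat{w},\forabs,\backabs)$ factors as $\hat{f}(\obs,\action,\obs') = \hat{w}(\forabs(\obs),\action,\backabs(\obs'))$, whenever $\backabs(\obs_1') = i = \backabs(\obs_2')$ we have $\hat{f}(\obs,\action,\obs_1') = \hat{f}(\obs,\action,\obs_2')$, hence $f^\star(\obs,\action,\obs_1') - f^\star(\obs,\action,\obs_2') = g_2 - g_1$. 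This is the only place the learned abstraction enters: it forces the two Bayes-optimal values to agree up to the regression errors.

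Next I would rewrite everything at the observation level. Using the Block MDP factorizations $\transition(\obs'\mid\obs,\action) = \emf(\obs'\mid\decoder(\obs'))\transition(\decoder(\obs')\mid\decoder(\obs),\action)$ and $\prior(\obs') = \emf(\obs'\mid\decoder(\obs'))\prior(\decoder(\obs'))$, the emission factors cancel in every ratio, so that $f^\star(\obs,\action,\obs') = \transition(\obs'\mid\obs,\action)/(\transition(\obs'\mid\obs,\action)+\prior(\obs'))$ and $V = t_1/p_1 - t_2/p_2$ with the shorthand $t_j \defeq \transition(\obs_j'\mid\obs,\action)$, $p_j \defeq \prior(\obs_j')$. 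A short computation then yields the key identity
\begin{align*}
p_1 p_2\, V = t_1 p_2 - t_2 p_1 = \rbr{f^\star(\obs,\action,\obs_1') - f^\star(\obs,\action,\obs_2')}(t_1+p_1)(t_2+p_2),
\end{align*}
so that under the indicator event $p_1 p_2 \abr{V} = \abr{g_2 - g_1}(t_1+p_1)(t_2+p_2)$. The point of multiplying by $p_1 p_2$, which the coupling supplies for free since $\obs_1',\obs_2'\sim\prior$, is that it exactly cancels the denominators of $V$ and removes the $1/\prior$ blow-up that would otherwise make a direct comparison of $V$ to $g$ hopelessly lossy.

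With this identity I would expand the coupling expectation as $\sum_{\obs,\action} D(\obs,\action)\sum_{\obs_1',\obs_2': \backabs = i} p_1 p_2 \abr{V}$, bound $\abr{g_2 - g_1}\le\abr{g_1}+\abr{g_2}$, and treat the two symmetric terms identically. Factoring the $\obs_2'$-sum out and using $\sum_{\obs_2':\backabs=i}(t_2+p_2) = \transition(\backabs=i\mid\obs,\action)+b_i\le 2$ reduces each term to $2\sum_{\obs,\action}D(\obs,\action)\sum_{\obs_1':\backabs=i}\abr{g_1}(t_1+p_1)$, to which I apply Cauchy--Schwarz with weights $D(\obs,\action)(t_1+p_1)$. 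The first resulting factor is $\sqrt{2\errreg}$, because the marginal of $D$ over $(\obs,\action,\obs')$ is exactly $\tfrac12 D(\obs,\action)(t+p)$, so $\EE_D[g^2] = \tfrac12\sum_{\obs,\action,\obs'} D(\obs,\action)(t+p)g^2$.

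The crucial final step, and the one I expect to be the main obstacle to getting the right $\sqrt{b_i}$ dependence, is the second Cauchy--Schwarz factor $\sum_{\obs,\action}D(\obs,\action)\sum_{\obs_1':\backabs=i}(t_1+p_1)$. Naively bounding $t_1+p_1$ by a constant loses the $b_i$ factor entirely; instead I would evaluate the sum exactly. The $p$-part is $\sum_{\obs':\backabs=i}\prior(\obs') = b_i$ by definition, and the $t$-part equals $\sum_{\obs':\backabs=i}\sum_{\obs,\action}D(\obs,\action)\transition(\obs'\mid\obs,\action) = \sum_{\obs':\backabs=i}\prior(\obs') = b_i$, using that the real-transition marginal of $D$ over the next observation coincides with $\prior$. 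Hence this factor is $\sqrt{2b_i}$, each symmetric term is at most $2\cdot\sqrt{2\errreg}\cdot\sqrt{2b_i} = 4\sqrt{b_i\errreg}$, and summing the two gives the claimed $8\sqrt{b_i\errreg}$.
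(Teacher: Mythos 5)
Your proof is correct and follows essentially the same route as the paper's: the identity $p_1p_2\,V=(f^\star_1-f^\star_2)(t_1+p_1)(t_2+p_2)$, the use of the bottleneck to replace $f^\star_1-f^\star_2$ by $g_2-g_1$, symmetrization via the triangle inequality, and a final Cauchy--Schwarz that extracts $\sqrt{b_i\errreg}$ are all the same steps the paper takes. The only difference is presentational: the paper packages your change of weights as an explicit auxiliary coupling $\dcoup'$ with $\obs_1',\obs_2'\iidsim D(\cdot\mid\obs,\action)$, whereas you carry out the same bookkeeping directly on the sums.
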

\begin{proof}
For the proof, it is helpful to introduce a second coupled
distribution $\dcoup'$ in which $\obs,\action$ are sampled as before,
but now $\obs_1',\obs_2' \iidsim D(\cdot \mid \obs,\action)$, instead
of from the prior. Note that this condition probability is $D(\obs'
\mid \obs,\action) =\nicefrac{1}{2} \emf(\obs' \mid \decoder(\obs))\rbr{
  \transition(\decoder(\obs') \mid \decoder(\obs),\action)+
  \prior(\decoder(\obs'))}$. To translate from $\dcoup$ to $\dcoup'$
we expand the definition of $V$ and introduce $f^\star$. The main
observation here is that $V$ is normalized by $\prior(\cdot)$ but
$f^\star$ is normalized, essentially by $D(\cdot \mid \obs,\action)$.
\begin{align*}
V(\obs_1',\obs_2',\obs,\action) & = \frac{\prior(\obs_2')\transition(\obs_1' \mid \obs,\action) - \prior(\obs_1')\transition(\obs_2' \mid \obs,\action)}{\prior(\obs_1')\rho_h(\obs_2')}\\
& = \frac{4D(\obs_1' \mid \obs,\action) D(\obs_2' \mid \obs,\action)}{\prior(\obs_1')\prior(\obs_2')} \cdot \rbr{f^\star(\obs,\action,\obs_1') - f^\star(\obs,\action,\obs_2')}.
\end{align*}
The last step follows since, in the first term the emission
distributions cancel, while the cross terms cancel when we introduce
the least common multiple in the term $f^\star(\obs,\action,\obs_1') -
f^\star(\obs,\action,\obs_2')$. Specifically
\begin{align*}
f^\star(\obs,\action,\obs_1') - f^\star(\obs,\action,\obs_2') &= \frac{\transition(\decoder(\obs_1') \mid \decoder(\obs),\action)}{\transition(\decoder(\obs_1') \mid \decoder(\obs),\action) + \prior(\decoder(\obs_1'))} - \frac{\transition(\decoder(\obs_2') \mid \decoder(\obs),\action)}{\transition(\decoder(\obs_2') \mid \decoder(\obs),\action) + \prior(\decoder(\obs_2')}\\
& = \frac{\prior(\decoder(\obs_2'))\transition(\decoder(\obs_1') \mid \decoder(\obs),\action) - \prior(\decoder(\obs_1'))\transition(\decoder(\obs_2') \mid \decoder(\obs),\action)}{\rbr{\transition(\decoder(\obs_1') \mid \decoder(\obs),\action) + \prior(\decoder(\obs_1'))}{\transition(\decoder(\obs_2') \mid \decoder(\obs),\action) + \prior(\decoder(\obs_2'))}}.
\end{align*}
Nevertheless, this calculation lets us translate from $\dcoup$ to
$\dcoup'$ while moving from $V$ to $f^\star$. For shorthand, let
$\Ecal_i \defeq \one\{\backabs(\obs_1') = i = \backabs(\obs_2')\}$, so
the above derivation yields
\begin{align}
\EE_{\dcoup}\sbr{\Ecal_i \cdot \abr{V(\obs_1',\obs_2',\obs,\action)}} = 4 \EE_{\dcoup'}\sbr{\Ecal_i \cdot \abr{f^\star(\obs,\action,\obs_1') - f^\star(\obs,\action,\obs_2')}}.\label{eq:w-translation}
\end{align}
Now that we have introduce $f^\star$, we can introduce $\hat{f}$ and
relate to the excess risk
\begin{align*}
& \EE_{\dcoup'}\sbr{\Ecal_i \cdot \abr{f^\star(\obs,\action,\obs_1') - f^\star(\obs,\action,\obs_2')}} \\
& ~~~~ \leq \EE_{\dcoup'} \sbr{\Ecal_i \cdot \rbr{\abr{f^\star(\obs,\action,\obs_1') - \hat{f}(\obs,\action,\obs_1')} + \abr{f^\star(\obs,\action,\obs_2') - \hat{f}(\obs,\action,\obs_1')}}}\\
& ~~~~ = \EE_{\dcoup'} \sbr{\Ecal_i \cdot \rbr{\abr{f^\star(\obs,\action,\obs_1') - \hat{f}(\obs,\action,\obs_1')} + \abr{f^\star(\obs,\action,\obs_2') - \hat{f}(\obs,\action,\obs_2')}}}\\
& ~~~~ \leq 2 \EE_{D} \sbr{\one\{\backabs(\obs') = i\} \abr{f^\star(\obs,\action,\obs') - \hat{f}(\obs,\action,\obs')}} \leq 2 \sqrt{b_i \EE_D\sbr{\rbr{f^\star(\obs,\action,\obs') - \hat{f}(\obs,\action,\obs')}^2}} \\
& ~~~~ \leq 2\sqrt{b_i \errreg}.
\end{align*}
The first step is the triangle inequality. The key step is the
second one: the identity uses the fact that under event $\Ecal_i$, we
know that $\backabs(\obs_1') = \backabs(\obs_2')$, which, by
the bottleneck structure of $\hat{f}$, yields that
$\hat{f}(\obs,\action,\obs_1') = \hat{f}(\obs,\action,\obs_2')$. In
the third step we combine terms, drop the dependence on the other
observaiton, and use the fact that $\dcoup'$ shares marginal
distributions with $D$. Finally, we use the Cauchy-Schwarz inequality,
the fact that $\EE_{D} \one\{\backabs(\obs') = i\} = b_i$,
and~\pref{corr:sq-loss}. Combining with~\pref{eq:w-translation} proves
the lemma.
\end{proof}

\paragraph{An aside on realizability.}
Given the bottleneck structure of our function class $\Fcal_N$, it is
important to ask whether realizability is even feasible. In particular
for a bottleneck capacity of $N$, each $f \in \Fcal_N$ has a range of
at most $N^2|\actions|$ discrete values. If we choose $N$ to be too
small, we may not have enough degrees of freedom to express
$f^\star$. By inspection $f^\star$ has a range of at most
$|\states|^2|\actions|$, so certainly a bottleneck capacity of $N \geq
|\states|$ suffices. In the next proposition, we show that in fact $N
\geq \nkid$ suffices, which motivates the condition
in~\pref{assum:realizability}.

\begin{proposition}
Fix $h \in \{2,\ldots,\horizon\}$. If $\obs_1,\obs_1 \in \observations_{h-1}$
are kinematically inseparable observations, then for all $\obs' \in
\observations_h$ and $\action \in \actions$, we have
$f^\star(\obs_1,\action,\obs') =
f^\star(\obs_2,\action,\obs')$. Analogously, if $\obs_1',\obs_2' \in
\observations_h$ are kinematically inseparable, then for all $\obs \in
\observations_{h-1}$ and $\action \in \actions$, we have
$f^\star(\obs,\action,\obs_1') = f^\star(\obs,\action,\obs_2')$.
\end{proposition}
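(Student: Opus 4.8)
The plan is to work directly from the closed form of the Bayes optimal predictor established in~\pref{lem:p-form},
\[
f^\star(\obs,\action,\obs') = \frac{\transition(\decoder(\obs')\mid\decoder(\obs),\action)}{\transition(\decoder(\obs')\mid\decoder(\obs),\action) + \prior(\decoder(\obs'))},
\]
combined with the Block MDP factorization $\transition(\obs'\mid\obs,\action) = \emf(\obs'\mid\decoder(\obs'))\,\transition(\decoder(\obs')\mid\decoder(\obs),\action)$. I treat the two claims separately, since the first argument of $f^\star$ enters only through the forward dynamics and the second only through the backward dynamics.

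For the first claim I would use that $\obs_1,\obs_2$ are \emph{forward} KI. By~\pref{def:fki}, $\transition(\obs'\mid\obs_1,\action)=\transition(\obs'\mid\obs_2,\action)$ for every $\obs',\action$. Fixing any state $s'\in\states_h$ and picking an observation $\obs'$ with $\decoder(\obs')=s'$ (so $\emf(\obs'\mid s')>0$), the factorization lets me cancel the common emission factor and conclude $\transition(s'\mid\decoder(\obs_1),\action)=\transition(s'\mid\decoder(\obs_2),\action)$ for all $s'\in\states_h$ and $\action$. Since $f^\star(\obs_j,\action,\obs')$ depends on $\obs_j$ only through $\transition(\decoder(\obs')\mid\decoder(\obs_j),\action)$, equality of these transition probabilities gives $f^\star(\obs_1,\action,\obs')=f^\star(\obs_2,\action,\obs')$ immediately.

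For the second claim I would use that $\obs_1',\obs_2'$ are \emph{backward} KI. Writing $s_j'\defeq\decoder(\obs_j')$, I first extract from~\pref{lem:bki-ratio} (again cancelling emissions through the factorization) a constant $c$, independent of $(\obs,\action)$, with $\transition(s_1'\mid s,\action)=c\,\transition(s_2'\mid s,\action)$ for all $s\in\states_{h-1}$ and $\action$, and trivially for $s\notin\states_{h-1}$ by layering. The key step is then that the marginal $\prior$ inherits the same proportionality: because $\prior(s')=\sum_{s,\action}\tfrac{\marginal(s)}{|\actions|}\transition(s'\mid s,\action)$ is a nonnegative mixture of exactly these transition probabilities, summing the relation yields $\prior(s_1')=c\,\prior(s_2')$. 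Substituting $\transition(s_1'\mid s,\action)=c\,\transition(s_2'\mid s,\action)$ and $\prior(s_1')=c\,\prior(s_2')$ into the formula for $f^\star$, the common factor $c$ cancels in numerator and denominator, giving $f^\star(\obs,\action,\obs_1')=f^\star(\obs,\action,\obs_2')$.

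The two cancellation computations are routine; the one step that needs care is the identity $\prior(s_1')=c\,\prior(s_2')$ in the second part, i.e. that the proportionality constant coming from backward KI is the very same constant relating the two marginals. This holds precisely because $\prior$ is the pushforward of the roll-in distribution $\marginal$ through the dynamics, hence a convex combination of the columns $\transition(\cdot\mid s,\action)$ that already obey the proportionality. I would be careful to note that the sum defining $\prior(s')$ effectively ranges only over $(s,\action)$ with $s\in\states_{h-1}$, which is guaranteed by the layered structure and the fact that $\cover_{h-1}$ is supported on $\states_{h-1}$.
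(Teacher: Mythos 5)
Your proof is correct and follows essentially the same route as the paper's: the forward claim falls out of \pref{def:fki} applied to the formula from \pref{lem:p-form}, and the backward claim uses the proportionality from \pref{lem:bki-ratio} together with the fact that $\prior$, being a mixture of the transition kernels, inherits the same proportionality constant, which then cancels in $f^\star$. The only (cosmetic) difference is that you work at the state level after cancelling emissions while the paper works directly with observation-level quantities, and you explicitly justify the step $\prior(s_1')=c\,\prior(s_2')$ that the paper merely asserts.
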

\begin{proof}
We prove the forward direction first.  As $\obs_1, \obs_2$ are forward
KI, we have
\begin{align*}
f^\star(\obs_1, \action, \tilde{\obs})  = \frac{\transition(\tilde{\obs} \mid \obs_1, \action) }{\transition(\tilde{\obs} \mid \obs_1, \action)  + \prior(\tilde{\obs})} = \frac{\transition(\tilde{\obs} \mid \obs_2, \action) }{\transition(\tilde{\obs} \mid \obs_2, \action)  + \prior(\tilde{\obs})} = f^\star(\obs_2, \action, \tilde{\obs}).  
\end{align*}
For the backward direction, as $\obs_1', \obs_2'$ are backward KI,
from~\pref{lem:bki-ratio} there exists $u \in \Delta(\observations)$
with $\supp(u) = \observations$ such that $\frac{\transition(\obs_1'
  \mid \obs, \action)}{u(\obs_1')} = \frac{\transition(\obs_2' \mid
  \obs, \action)}{u(\obs_2')}$. Further, $\prior$ satisfies
$\prior(\obs_1') =
\frac{u(\obs_1')}{u(\obs_2')}\prior(\obs_2')$. Thus, we obtain
\begin{align*}
f^\star(\obs,  \action, \obs_1')  = \frac{\transition(\obs_1' \mid \obs, \action) }{\transition(\obs_1' \mid \obs, \action)  + \prior(\obs_1')} =  \frac{ \frac{u(\obs_1')}{u(\obs_2')}\transition(\obs_2' \mid \obs, \action) }{ \frac{u(\obs_1')}{u(\obs_2')}\transition(\obs_2' \mid \obs, \action)  +  \frac{u(\obs_1')}{u(\obs_2')}\prior(\obs_2')} 
=f^\star(\obs,  \action, \obs_2').\tag*\qedhere
\end{align*}
\end{proof}

\subsection{Building the policy cover}
\pref{lem:coupling} relates our learned decoder function $\backabs$ to
backward KI. For some intuition as to why, it is helpful to consider
the asymptotic regime, where $n \to\infty$, so that $\errreg \to
0$. In this regime,~\pref{lem:coupling} shows that whenever $\backabs$
maps two observations to the same abstract state, these observations
must have $V = 0$ for all $\obs,\action$. As our distribution has full
support, by~\pref{lem:bki-ratio}, these observations must be backward
KI.  Of course, this argument only applies to the asymptotic
regime. In this section, we establish a finite-sample analog, and we
show how using the internal reward functions induced by
$\backabs$ in $\psdpalg$ yields a policy cover for $\states_h$.

The first lemma is a comparison lemma, which lets us compare
visitation probabilities for two policies.
\begin{lemma}
\label{lem:policy-comparison}
Assume $\cover_{h-1}$ is an $\alpha$ policy cover for
$\states_{h-1}$. Then for any two policies $\policy_1,\policy_2$ and any
state $\state \in \states_h$, we have
\begin{align*}
\PP[\state \mid \policy_1] - \PP[\state \mid \policy_2] \leq \min_{i \in [N]}\frac{\prior(\state)}{b_i} \rbr{\PP[\backabs(\obs') = i \mid \policy_1] - \PP[\backabs(\obs') = i \mid \policy_2]} + \frac{16 N |\actions| \prior(\state)}{\alpha b_i^{3/2}} \sqrt{\errreg}.
\end{align*}
\end{lemma}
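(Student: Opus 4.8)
The plan is to reduce the claim to an exact algebraic identity relating the two visitation gaps, and then to control the residual using the coupling bound of~\pref{lem:coupling} together with the policy-cover property of $\cover_{h-1}$. Throughout, write $\xi_\policy(\state_{h-1},\action) \defeq \PP_\policy[\state_{h-1},\action_{h-1}=\action]$ for the time-$(h-1)$ state--action occupancy of $\policy$, and let $q_i(\state') \defeq \PP_{\obs'\sim\emf(\state')}[\backabs(\obs')=i]$ be the fraction of the emission block of $\state'$ that the learned decoder routes to abstract state $i$. The one-step dynamics give the linear representations $\PP_\policy[\state] = \sum_{\state_{h-1},\action}\xi_\policy(\state_{h-1},\action)\transition(\state\mid\state_{h-1},\action)$ and $\PP_\policy[\backabs=i] = \sum_{\state_{h-1},\action}\xi_\policy(\state_{h-1},\action)\sum_{\state'}q_i(\state')\transition(\state'\mid\state_{h-1},\action)$, while $b_i = \sum_{\state'}q_i(\state')\prior(\state')$.

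First I would establish the identity. Subtracting $\tfrac{\prior(\state)}{b_i}\PP_\policy[\backabs=i]$ from $\PP_\policy[\state]$ and collecting terms, the per-$(\state_{h-1},\action)$ coefficient becomes $\tfrac{\prior(\state)}{b_i}\sum_{\state'}q_i(\state')\prior(\state')\,V(\state',\state,\state_{h-1},\action)$, where $V(\state',\state,\state_{h-1},\action)\defeq \tfrac{\transition(\state'\mid\state_{h-1},\action)}{\prior(\state')}-\tfrac{\transition(\state\mid\state_{h-1},\action)}{\prior(\state)}$ is the backward-ratio difference (which depends only on decoded states, so it agrees with its observation-level analogue in~\pref{lem:coupling}). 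Taking the difference across $\policy_1,\policy_2$ yields
\begin{align*}
\PP_{\policy_1}[\state]-\PP_{\policy_2}[\state] = \tfrac{\prior(\state)}{b_i}\rbr{\PP_{\policy_1}[\backabs=i]-\PP_{\policy_2}[\backabs=i]} - E_i,
\end{align*}
with $E_i \defeq \tfrac{\prior(\state)}{b_i}\sum_{\state_{h-1},\action}\rbr{\xi_{\policy_1}-\xi_{\policy_2}}\sum_{\state'}q_i(\state')\prior(\state')\,V(\state',\state,\state_{h-1},\action)$. Since the lemma is one-sided, it suffices to show $|E_i|\le \tfrac{16N|\actions|\prior(\state)}{\alpha b_i^{3/2}}\sqrt{\errreg}$ for each $i$, after which taking the minimum over $i$ finishes the proof.

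The main work is bounding $|E_i|$, which I would do in three steps. (i) The policy-cover property gives $\xi_\policy(\state_{h-1},\action)\le\PP_\policy[\state_{h-1}]\le\eta(\state_{h-1})\le\tfrac{N}{\alpha}\marginal(\state_{h-1})$ (the last step exactly as in~\pref{lem:bound-rho-prior}), so $|\xi_{\policy_1}-\xi_{\policy_2}|\le \tfrac{2N|\actions|}{\alpha}\cdot\tfrac{\marginal(\state_{h-1})}{|\actions|}$. This replaces the occupancy weighting inside $E_i$ by the sampling weight $D(\state_{h-1},\action)=\marginal(\state_{h-1})/|\actions|$ underlying $\dcoup$, at the cost of a factor $\tfrac{2N|\actions|}{\alpha}$. (ii) Since $q_i(\state')\prior(\state')$ is the mass that $\backabs$ sends from $\state'$ into abstract state $i$, the inner sum is a $\dcoup$-average of $|V|$ over the block of abstract state $i$ with the target $\state$ pinned as one endpoint; I would realize $\state$ as one of the two coupled draws and invoke~\pref{lem:coupling}, which bounds the symmetric within-cluster average $\EE_{\dcoup}\sbr{\one\{\backabs(\obs_1')=i=\backabs(\obs_2')\}\,|V|}\le 8\sqrt{b_i\errreg}$. (iii) Combining these with the leading $\prior(\state)/b_i$ normalization, a Cauchy--Schwarz step that trades the raw cluster mass $\EE_D\one\{\backabs=i\}=b_i$ for the $\sqrt{b_i}$ of the coupling bound produces the stated $1/b_i^{3/2}$ scaling.

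The hard part will be step (ii). The coupling bound controls a \emph{symmetric} average of $|V|$ over pairs both drawn from within cluster $i$, whereas $E_i$ features $V(\state',\state,\cdot)$ with one endpoint pinned at the target $\state$; legitimately pinning $\state$ as a coupled endpoint and then absorbing the resulting normalization into the correct power of $b_i$ is the delicate bookkeeping. This is also where it matters that the relevant instantiation is the abstract state $i$ into which $\state$ is (effectively) decoded—there $\prior(\state)$ coincides with the pinned endpoint's mass and the within-cluster bound applies directly—while for the remaining $i$ the leading term $\tfrac{\prior(\state)}{b_i}(\PP_{\policy_1}[\backabs=i]-\PP_{\policy_2}[\backabs=i])$ carries the comparison; the outer $\min_{i}$ is precisely what lets the argument select the favorable cluster. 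Once $|E_i|\le B_i$ is secured, the displayed identity gives $\PP_{\policy_1}[\state]-\PP_{\policy_2}[\state]\le \tfrac{\prior(\state)}{b_i}(\PP_{\policy_1}[\backabs=i]-\PP_{\policy_2}[\backabs=i]) + B_i$ for that $i$, and minimizing over $i$ yields the lemma.
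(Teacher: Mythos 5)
Your decomposition is essentially the paper's, carried out at the level of latent states rather than observations. The identity
\begin{align*}
\PP_{\policy}[\state] \;=\; \frac{\prior(\state)}{b_i}\,\PP[\backabs(\obs')=i\mid\policy] \;-\; \frac{\prior(\state)}{b_i}\sum_{\state_{h-1},\action}\xi_\policy(\state_{h-1},\action)\sum_{\state'}q_i(\state')\prior(\state')\,V(\state',\state,\state_{h-1},\action)
\end{align*}
is correct (the cancellation $\sum_{\state'}q_i(\state')\prior(\state')=b_i$ is exactly the paper's ``key step''), and your replacement of $\xi_{\policy_1}-\xi_{\policy_2}$ by $\tfrac{2N|\actions|}{\alpha}\cdot\marginal(\state_{h-1})/|\actions|$ via the cover property matches the paper's bound on $\policy(\obs,\action)/D(\obs,\action)$ and correctly accounts for the constant $16=2\cdot 8$.

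The gap is your step (ii), which you have located but not closed. After steps (i) and (iii) what remains to be shown, for \emph{every} $i\in[N]$, is
\begin{align*}
\EE_{(\obs,\action)\sim D,\;\obs_1'\sim\prior}\sbr{\one\{\backabs(\obs_1')=i\}\,\abr{V(\obs_1',\state,\obs,\action)}} \;\leq\; 8\sqrt{\errreg/b_i},
\end{align*}
whereas~\pref{lem:coupling} controls $\EE\sbr{\one\{\backabs(\obs_1')=i=\backabs(\obs_2')\}\abr{V(\obs_1',\obs_2',\obs,\action)}}$ with \emph{both} endpoints drawn from $\prior$ and both constrained to cluster $i$. ``Realizing $\state$ as one of the coupled draws'' does not go through for a general $i$: the event $\{\decoder(\obs_2')=\state\}$ need not be contained in $\{\backabs(\obs_2')=i\}$, so the pinned pair is simply not covered by the coupling lemma's indicator; and the triangle identity $V(\obs_1',\state,\cdot)=V(\obs_1',\obs_2',\cdot)+V(\obs_2',\state,\cdot)$, averaged over $\obs_2'$ in cluster $i$, leaves a residual of exactly the type being bounded. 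Your proposed escape---that only the cluster ``into which $\state$ is decoded'' needs to satisfy the bound, with $\min_i$ selecting it---is inconsistent with the statement: an upper bound by $\min_{i\in[N]}(\cdot)$ asserts the inequality for every $i$, including clusters that receive none of $\state$'s emission mass, and since $\backabs$ need not be constant on $\supp(\emf(\state))$, ``the'' cluster of $\state$ is not even well defined. (You have in fact isolated the genuinely delicate point: the paper's own proof passes from $\abr{V(\obs_1',\state,\cdot)}$ to $\abr{V(\obs_1',\obs_2',\cdot)}$ in a single displayed step, and the downstream use in~\pref{lem:policy-cover-guarantee} only ever invokes the bound at the maximally correlated index $i(\state)$. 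But labelling this step ``delicate bookkeeping'' is not a proof, and the resolution you sketch would prove a weaker statement than the one claimed.)
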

\begin{proof}
The key step is to observe that by the definition of $V$
\begin{align*}
\forall \obs_2',\obs,\action: \EE_{\obs_1' \sim \prior}\sbr{ \one\{\backabs(\obs_1') = i\} V(\obs_1',\obs_2',\obs,\action)}  = \sum_{\obs_1'} \one\{\backabs(\obs_1') = i\} \transition(\obs_1' \mid \obs,\action) - \frac{b_i \transition(\obs_2' \mid \obs,\action)}{\prior(\obs_2')}.
\end{align*}
Using this identity, we may express the visitation probability for a policy as
\begin{align*}
& \PP[\state \mid \policy_1] = \EE_{(\obs,\action) \sim \policy_1} \sum_{\obs_2'} \one\{\decoder(\obs_2') = \state\} \transition(\obs_2' \mid \obs,\action)\\
& = \frac{1}{b_i} \EE_{(\obs,\action) \sim \policy_1, \obs_2'\sim \prior}\sum_{\obs_1'}\one\{\decoder(\obs_2') = \state \wedge \backabs(\obs_1') = i\}\rbr{ \transition(\obs_1' \mid \obs,\action) - \prior(\obs_1')V(\obs_1',\obs_2',\obs,\action)}\\
& = \frac{\prior(s)}{b_i} \PP[\backabs(\obs') = i \mid \policy] - \frac{1}{b_i}\EE_{\dcoup}\sbr{\frac{\policy(\obs,\action)}{D(\obs,\action)} \one\{\decoder(\obs_2') = \state \wedge \backabs(\obs_1') = i\}V(\obs_1',\obs_2',\obs,\action)}.
\end{align*}
Here we are using the shorthand $\policy(\obs,\action) = \PP[\obs \mid
  \policy] \policy(\action \mid \obs)$ for the policy occupancy
measure, with a similar notation the distribution $D$ induced by our
policy cover $\cover_{h-1}$.  Using the inductive hypothesis that
$\cover_{h-1}$ is a $\alpha$-policy cover
(essentially~\pref{lem:bound-rho-prior}), we have
\begin{align*}
\abr{\frac{\policy(\obs,\action)}{D(\obs,\action)}} = \abr{\frac{\PP[\obs \mid \policy] \cdot \policy(\action \mid \obs)}{\EE_{\policy' \sim \unf(\cover_{h-1})} \PP[\obs \mid \policy'] \cdot \nicefrac{1}{|\actions|}}} \leq |\actions| \abr{\frac{\PP[\decoder(\obs) \mid \policy]}{\EE_{\policy' \sim \unf(\cover_{h-1})}\PP[\decoder(\obs)\mid \policy']}} \leq \frac{|\actions| N}{\alpha}.
\end{align*}
Combining, the second term above is at most
\begin{align*}
\frac{N |\actions|}{\alpha b_i} \EE_{\dcoup} \sbr{ \one\{\decoder(\obs_2') = \state \wedge \backabs(\obs_1') = i\}\abr{V(\obs_1',\obs_2',\obs,\action)}}.
\end{align*}
Let us now work with just the expectation. Recall that we can lift the
definition of $V$ to operate on states $\state \in \states_h$ in lieu
of observations. Using this fact, we have that under the probability
$1-\delta$ event of~\pref{lem:coupling}
\begin{align*}
& \EE_{\dcoup} \sbr{ \one\{\decoder(\obs_2') = s \wedge \backabs(\obs_1') = i\}\abr{V(\obs_1',\obs_2',\obs,\action)}} = \prior(\state) \EE_{D} \sbr{\one\{\backabs(\obs_1') = i\} \abr{V(\obs_1',\state,\obs,\action)}}\\
& = \prior(\state) \EE_D \sbr{\one\{\backabs(\obs_1') = i\} \abr{V(\obs_1',\state,\obs,\action)} \frac{\EE_{\obs_2' \sim \prior}\one\{\backabs(\obs_2') = i\}}{b_i}}\\
& = \frac{\prior(\state)}{b_i} \EE_{\dcoup}\sbr{\one\{ \backabs(\obs_1') = i = \backabs(\obs_2')\} \abr{V(\obs_1',\obs_2',\obs,\action)}}\\
& \leq \frac{\prior(\state)}{b_i} \EE_{\dcoup}\sbr{\one\{ \backabs(\obs_1') = i = \backabs(\obs_2')\} \abr{V(\obs_1',\obs_2',\obs,\action)}}
 \leq \prior(\state) \cdot 8\sqrt{\errreg/b_i},
\end{align*}
Putting things together, and using the same bound for the second
policy, we have the following comparison inequality, which holds in
the $1-\delta$ event of~\pref{corr:sq-loss}
\begin{align*}
\PP[\state \mid \policy_1] - \PP[\state \mid \policy_2] &\leq \frac{\prior(\state)}{b_i}\rbr{\PP[\backabs(\obs') = i \mid \policy_1] - \PP[\backabs(\obs') = i \mid \policy_2]} + \frac{16 N |\actions| \prior(\state)}{\alpha b_i^{3/2}} \sqrt{\errreg}.
\end{align*}
As this calculation applies for each $i$, we obtain the result. 
\end{proof}

In the next lemma, we introduce our policy cover.
\begin{lemma}
\label{lem:policy-cover-guarantee}
Assume that $\cover_1,\ldots,\cover_{h-1}$ are $\alpha$ policy covers
for $\states_1,\ldots,\states_{h-1}$, each of size at most $N$. Let
$\cover_h \defeq \cbr{\hat{\policy}_{i,h} : i \in [N]}$ be the policy
cover learned at step $h$ of $\homingalg$. Then in the $\geq
1-(1+NH)\delta$ probability event that the $N$ calls to $\psdpalg$
succeed and~\pref{corr:sq-loss} holds, we have that for any state
$\state \in\states_h$, there exists an index $i \in [N]$ such that
\begin{align*}
\PP[\state \mid \hat{\policy}_{i,h}] \geq \eta(\state) - \frac{N^2 h \errcsc}{\alpha} - \frac{16N^3|\actions|^{3/2}}{\alpha^{3/2}}\sqrt{\errreg/\eta(\state)}
\end{align*}
\end{lemma}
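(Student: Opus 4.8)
The plan is to fix an arbitrary target state $\state \in \states_h$ and exhibit a single index $i$ for which the learned policy $\hat{\policy}_{i,h}$ nearly matches the homing policy's visitation of $\state$. The engine of the argument is the comparison inequality in \pref{lem:policy-comparison}: instantiating it with $\policy_1 = \homingp_\state$ (the homing policy, so that $\PP[\state \mid \homingp_\state] = \eta(\state)$) and $\policy_2 = \hat{\policy}_{i,h}$, and dropping the outer minimum in favor of a single convenient index, gives
\begin{align*}
\eta(\state) - \PP[\state \mid \hat{\policy}_{i,h}] \le \frac{\prior(\state)}{b_i}\rbr{\PP[\backabs(\obs') = i \mid \homingp_\state] - \PP[\backabs(\obs') = i \mid \hat{\policy}_{i,h}]} + \frac{16 N |\actions| \prior(\state)}{\alpha b_i^{3/2}}\sqrt{\errreg}.
\end{align*}
The two remaining tasks are to control the abstract-state visitation gap in the first term, and to choose $i$ so that the prefactors $\prior(\state)/b_i$ and $\prior(\state)/b_i^{3/2}$ are small.

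For the visitation gap I would invoke the $\psdpalg$ guarantee. Since $V(\policy; R_{i,h}) = \PP[\backabs(\obs') = i \mid \policy]$ and $\backabs \in \Phi_N$, the reward $R_{i,h}$ lies in $\Rcal$, so policy completeness from \pref{assum:realizability} applies. Hence on the success event of the $i$-th call, \pref{thm:psdp-spread} together with completeness yields $\PP[\backabs(\obs') = i \mid \hat{\policy}_{i,h}] \ge \PP[\backabs(\obs') = i \mid \homingp_\state] - Nh\errcsc/\alpha$, because completeness makes the in-class optimum for $R_{i,h}$ compete with the globally optimal maximizer, which in turn dominates the single policy $\homingp_\state$. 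This bounds the first bracket by $Nh\errcsc/\alpha$.

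For the index I would set $i = i^\star(\state) \defeq \argmax_{j \in [N]} \prior(\cbr{\obs' : \decoder(\obs') = \state,\ \backabs(\obs') = j})$, the abstract state carrying the largest share of $\state$'s observations under $\prior$. Since $\state$'s observation mass $\prior(\state)$ is split among at most $N$ abstract states, this choice gives $b_{i^\star} \ge \prior(\state)/N$, hence $\prior(\state)/b_{i^\star} \le N$ and $\prior(\state)/b_{i^\star}^{3/2} \le N^{3/2}/\sqrt{\prior(\state)}$. Substituting these bounds and the PSDP bound leaves $\eta(\state) - \PP[\state \mid \hat{\policy}_{i^\star,h}] \le N^2 h\errcsc/\alpha + 16 N^{5/2}|\actions|\sqrt{\errreg}/(\alpha\sqrt{\prior(\state)})$. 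The final step converts $\prior(\state)$ into $\eta(\state)$ via the lower bound $\prior(\state) \ge \alpha\eta(\state)/(N|\actions|)$ from \pref{lem:bound-rho-prior}, which turns $1/\sqrt{\prior(\state)}$ into $\sqrt{N|\actions|/(\alpha\eta(\state))}$ and collapses the second term to $16 N^3 |\actions|^{3/2}\alpha^{-3/2}\sqrt{\errreg/\eta(\state)}$, matching the claim. The probability accounting is routine: the $N$ calls to $\psdpalg$ at step $h$ each fail with probability at most $H\delta$ and \pref{corr:sq-loss} fails with probability $\delta$, so a union bound delivers the stated $1-(1+NH)\delta$ event.

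The main obstacle is not any single computation but the tension at the index-selection step: the comparison prefactor $\prior(\state)/b_i$ blows up for abstract states that barely overlap $\state$, whereas $\psdpalg$ only controls the \emph{abstract}-state gap, not the per-state gap, so the chosen index must simultaneously carry enough of $\state$'s mass (to keep $b_i \gtrsim \prior(\state)/N$) and be one $\psdpalg$ was actually trained to reach. Selecting the dominant abstract state $i^\star(\state)$ resolves both. It is worth emphasizing that this whole reduction is only legitimate because the coupling of \pref{lem:coupling}, already folded into \pref{lem:policy-comparison}, has tied $\backabs$ to backward kinematic inseparability; without that link, no choice of index would let the abstract visitation gap stand in for the state visitation gap.
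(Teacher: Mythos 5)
Your proposal is correct and follows essentially the same route as the paper's proof: apply \pref{lem:policy-comparison} with the learned policy on one side, bound the abstract-state visitation gap by $Nh\errcsc/\alpha$ via the \psdpalg{} guarantee, pick the dominant abstract state $i^\star(\state)$ so that $b_{i^\star}\geq\prior(\state)/N$, and convert $\prior(\state)$ to $\eta(\state)$ with \pref{lem:bound-rho-prior}. The only cosmetic difference is that you instantiate $\policy_1=\homingp_\state$ directly, whereas the paper keeps $\policy_1$ arbitrary and takes the supremum at the end; the constants and the union bound come out identically.
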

\begin{proof}
Let us condition on the success of~\pref{corr:sq-loss}, as well as the
success of the $N$ calls to $\psdpalg$. As the former fails with
probability at most $\delta$, and each call to $\psdpalg$ fails with
probability at most $H\delta$, the total failure probability here is
$(1+NH)\delta$.

In this event, by~\pref{thm:psdp-spread}, and the definition of the
internal reward function $R_i$, we know that
\begin{align*}
\PP[\backabs(\obs') = i | \hat{\policy}_{i,h}] \geq \max_{\policy \in \nspolicies} \PP[\backabs(\obs') = i | \policy] - \frac{Nh\errcsc}{\alpha}.
\end{align*}
Plugging this bound into~\pref{lem:policy-comparison}, we get
\begin{align*}
\PP[\state \mid \policy] \leq \PP[\state \mid \hat{\policy}_{i,h}] + \frac{Nh\prior(\state)\errcsc}{\alpha b_i} + \frac{16N|\actions|\prior(\state)}{\alpha b_i^{3/2}}\sqrt{\errreg}.
\end{align*}
This bound also holds for all $i \in [N]$. To optimize the bound, we
should choose the index $i$ that is maximally correlated with the
state $\state$. To do so, define $P_{\state,i} \defeq \EE_{\obs' \sim \prior}
\one\{\decoder(\obs') = \state, \backabs(\obs') = i\}$, and let us
choose $i(\state) = \max_i P_{\state,i}$. This index satisfies
\begin{align*}
b_{i(\state)} = \sum_{\state'} P_{\state',i(\state)} \geq P_{\state,i(\state)} = \max_{i} P_{\state,i} \geq \frac{1}{N}\sum_{i=1}^N P_{\state,i} = \frac{\prior(s)}{N}
\end{align*}
Plugging in this bound, we see that for every $\state$, there exists $i \in [N]$ such that
\begin{align*}
\eta(\state) = \max_{\policy \in \nspolicies} \PP[\state \mid\policy] \leq \PP[\state \mid \hat{\policy}_{i,h}] + \frac{N^2h\errcsc}{\alpha} + \frac{16N^{5/2}|\actions|}{\alpha}\sqrt{\errreg/\prior(\state)}
\end{align*}
We conclude the proof by introducing the lower bound on
$\prior(\state) \geq \frac{\alpha\eta(\state)}{N|\actions|}$
from~\pref{lem:bound-rho-prior} and re-arranging.
\end{proof}

\subsection{Wrapping up the proof}
\pref{lem:policy-cover-guarantee} is the core technical result, which
certifies that our learned policy cover at time $h$ yields good
coverage. We are basically done with the proof; all that remains is to
complete the induction, set all of the parameters, and take a union
bound. 

\paragraph{Union bound.} 
For each $h \in [\horizon]$ we must invoke~\pref{corr:sq-loss} once,
and we invoke~\pref{thm:psdp-spread} $N$ times. We also
invoke~\pref{thm:psdp-spread} once more to learn the reward sensitive
policy. Thus the total failure probability is $\horizon(\delta_1 +
N\horizon\delta_2) + \horizon \delta_3$ where $\delta_1$ appears in
$\errreg$, $\delta_2$ appears in $\errcsc$ for the internal reward
functions, and $\delta_3$ appears in $\errcsc$ for the external reward
functions. We therefore take $\delta_1 = \delta/(3\horizon)$ and
$\delta_2 = \frac{\delta}{3N\horizon^2}$ and $\delta_3 =
\frac{\delta}{3\horizon}$, which gives us the settings
\begin{align*}
\errcsc = 4\sqrt{\frac{|\actions|}{\npolo} \ln \rbr{\frac{4N\horizon^2 |\policies|}{\delta}}}, \qquad \errreg = \frac{16 \rbr{\ln(|\Phi_N|) + N^2|\actions|\ln(\nreg) + \ln(6H/\delta)}}{\nreg}, %
\end{align*}
for the inductive steps. With these choices, the total failure
probability for the algorithm is $\delta$.

\paragraph{The policy covers.}
Fix $h \in [\horizon]$ and inductively assume that
$\cover_{1},\ldots,\cover_{h-1}$ are $\nicefrac{1}{2}$-policy covers
for $\states_1,\ldots,\states_{h-1}$. Then
by~\pref{lem:policy-cover-guarantee}, for each $\state \in \states_h$
there exists $i \in [N]$ such that
\begin{align*}
\PP[\state \mid \hat{\policy}_{i,h}] \geq \eta(\state) - 2N^2 H \errcsc - 32\sqrt{2} N^{3}|\actions|^{3/2}\sqrt{\errreg/\eta(\state)}.
\end{align*}
We simply must set $\npolo$ and $\nreg$ so that the right hand side
here is at least $\eta(s)/2$. By inspection, sufficient conditions for
both parameters are:
\begin{align*}
\npolo \geq \frac{32^2 N^4 H^2 |\actions|}{\etamin^2} \ln \rbr{\frac{4N\horizon^2 |\policies|}{\delta}}, 
\qquad 
2 \underbrace{\nreg}_{\defeq v} \geq \underbrace{\frac{512^2 N^6 |\actions|^3}{\etamin^3}}_{=: a} \rbr{ \underbrace{N^2 |\actions|}_{=: c} \ln(\nreg) + \underbrace{\ln |\Phi_N| + \ln(6H/\delta)}_{=: b}}.
\end{align*}
To simplify the condition for $\nreg$ we use the following
transcendental inequality: For any $a > e$ and any $b$ if $v \geq
a\max\{c\ln(ac)+b,0\}$ then $2v \geq ac\ln(v) + ab$. To see why, observe
that
\begin{align*}
ac\ln(v) + ab = ac\ln(v/(ac)) + ac\ln(ac) + ab \leq v - ac + ac\ln(ac) + ab \leq 2v,
\end{align*}
where the first inequality is simply that $\ln(x) \leq x - 1$ for $x
> 0$, and the second inequality uses the lower bound on $v$. Using the
highlighted definitions, a sufficient condition for $\nreg$ is
\begin{align*}
\nreg \geq \frac{512^2 N^6 |\actions|^3}{\etamin^3}\rbr{N^2|\actions|\ln\rbr{\frac{512^2N^8|\actions|^4}{\etamin^3}} + \ln|\Phi_N| + \ln(6H/\delta)}.
\end{align*}
Note that the algorithm sets these quantities in terms of a parameter
$\eta$ instead of $\etamin$, which may not be known. As long as $\eta
\leq \etamin$ our settings of $\npolo$ and $\nreg$ certify that
$\cover_h$ is a $\nicefrac{1}{2}$-policy cover for
$\states_h$. Appealing to the induction, this establishes the policy
cover guarantee.

\paragraph{The reward sensitive step.}
Equipped with the policy covers, a single call to $\psdpalg$ with the
external reward $R$ and an application of~\pref{thm:psdp-spread}
yield the PAC guarantee. We have already accounted for the failure
probability, so we must simply set
$\neval$. Applying~\pref{thm:psdp-spread} with the definition of
$\delta_3 = \delta/(3\horizon)$, we get
\begin{align*}
\neval \geq \frac{64 N^2H^2|\actions|}{\epsilon^2}\ln\rbr{\frac{3\horizon|\policies|}{\delta}}.
\end{align*}

\paragraph{Sample complexity.}
As we solve $H$ supervised learning problem, make $NH$ calls to
$\psdpalg$ with parameter $\npolo$, and make 1 call to $\psdpalg$ with
parameter $\neval$, the sample complexity, measured in trajectories,
is
\begin{align*}
& H\cdot \nreg + N H^2 \npolo + H\neval = \\
& \otil\rbr{ \frac{N^8|\actions|^4 H}{\etamin^3}  + \frac{N^6|\actions|H}{\etamin^3}\ln(|\Phi_N|/\delta) %
 + \rbr{\frac{N^5H^4|\actions|}{\etamin^2} + \frac{N^2H^3|\actions|}{\epsilon^2}}\ln(|\policies|/\delta)}.
\end{align*}

\paragraph{Computational complexity.}
The running time is simply the time required to collect this many
trajectories, plus the time required for all of the calls to the
oracle. If $T$ is the number of trajectories, the running time is
\begin{align*}
\order\rbr{HT + H \creg(\nreg) + NH^2 \cpol(\npol) + H \cpol(\neval)}.
\end{align*}

\section{Supporting results}
\label{app:supporting}

The next lemma is the well-known performance difference lemma, which
has appeared in much prior
work~\cite{bagnell2004policy,kakade2003sample,ross2014reinforcement,dann2017unifying}. Our
version, which is adapted to episodic problems, is most closely
related to Lemma 4.3 of~\citep{ross2014reinforcement}. We provide a
short proof for completeness.
\begin{lemma}[Performance difference lemma]
\label{lem:perf-diff}
For any episodic decision process with any reward function $R$, and
any two non-stationary policies $\pi^{(1)}_{1:H}$ and
$\pi^{(2)}_{1:H}$, let $Q_h^{(1)} \in \Delta(\observations_h)$ be the
distribution at time $h$ induced by policy $\pi_{1:H}^{(1)}$.  Then we
have
\begin{align*}
V(\pi_{1:H}^{(1)}; R) - V(\pi_{1:H}^{(2)}) = \sum_{h=1}^H \EE_{x_h \sim Q_h^{(1)}}\sbr{V(x_h; \pi_h^{(1)}\circ \pi^{(2)}_{h+1:H}) - V(x_h; \pi^{(2)}_{h:H}) }.
\end{align*}
\end{lemma}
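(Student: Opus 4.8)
The plan is to prove this by a standard telescoping (hybrid-policy) argument, interpolating between $\pi^{(2)}_{1:H}$ and $\pi^{(1)}_{1:H}$ one time step at a time. For each $h \in \{0,1,\ldots,H\}$ I would define the hybrid non-stationary policy $\sigma^{(h)}$ that follows $\pi^{(1)}$ on steps $1,\ldots,h$ and follows $\pi^{(2)}$ on steps $h+1,\ldots,H$, so that $\sigma^{(0)} = \pi^{(2)}_{1:H}$ and $\sigma^{(H)} = \pi^{(1)}_{1:H}$. Then the target difference telescopes as
\begin{align*}
V(\pi^{(1)}_{1:H}; R) - V(\pi^{(2)}_{1:H}; R) = V(\sigma^{(H)};R) - V(\sigma^{(0)};R) = \sum_{h=1}^H \rbr{V(\sigma^{(h)};R) - V(\sigma^{(h-1)};R)}.
\end{align*}

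The core of the argument is to show that each summand equals the claimed expectation. First I would observe that $\sigma^{(h)}$ and $\sigma^{(h-1)}$ both follow $\pi^{(1)}$ on steps $1,\ldots,h-1$, hence they induce identical joint distributions over the trajectory prefix $(x_1,a_1,\ldots,x_{h-1},a_{h-1},x_h)$. In particular the marginal over $x_h$ is exactly $Q_h^{(1)}$ for both, and the expected reward accumulated before step $h$, namely $\EE[\sum_{h'=1}^{h-1} r_{h'}]$, is the same under both policies. I would then split each value into the pre-$h$ reward and the value-to-go from $x_h$, writing $V(\sigma;R) = \EE_{\sigma}[\sum_{h'=1}^{h-1}r_{h'}] + \EE_{x_h \sim Q_h^{\sigma}}[V(x_h; \sigma_{h:H})]$. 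Since the pre-$h$ terms cancel and the $x_h$-marginals coincide, this leaves
\begin{align*}
V(\sigma^{(h)};R) - V(\sigma^{(h-1)};R) = \EE_{x_h \sim Q_h^{(1)}}\sbr{V(x_h; (\sigma^{(h)})_{h:H}) - V(x_h; (\sigma^{(h-1)})_{h:H})}.
\end{align*}
Identifying the tails $(\sigma^{(h)})_{h:H} = \pi^{(1)}_h \circ \pi^{(2)}_{h+1:H}$ and $(\sigma^{(h-1)})_{h:H} = \pi^{(2)}_{h:H}$ gives exactly the summand in the statement, and summing over $h$ completes the proof.

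The step I expect to require the most care is justifying that the ``reward before step $h$'' contributions cancel exactly. This hinges on the observation that the value admits a clean decomposition at the boundary $h$, and that $\sigma^{(h)}$ and $\sigma^{(h-1)}$ are coupled so as to produce the same prefix distribution up to and including $x_h$; the subtlety is that $r_{h-1}$ depends on $x_h$, so I must include $x_h$ in the shared prefix rather than stopping at $(x_{h-1},a_{h-1})$. Everything else is bookkeeping with the layered structure of the environment, which guarantees that $Q_h^{\sigma} \in \Delta(\observations_h)$ depends only on the policy's behavior on steps $1,\ldots,h-1$ and is therefore common to $\pi^{(1)}_{1:H}$, $\sigma^{(h)}$, and $\sigma^{(h-1)}$.
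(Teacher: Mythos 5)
Your proposal is correct and is essentially the paper's proof: the paper also telescopes through the hybrid policies that follow $\pi^{(1)}$ for a prefix and $\pi^{(2)}$ thereafter, peeling off one step at a time recursively from the front, which when unrolled is exactly your sum $\sum_{h}\bigl(V(\sigma^{(h)};R)-V(\sigma^{(h-1)};R)\bigr)$. Your explicit attention to the fact that $r_{h-1}$ depends on $x_h$ (so the shared prefix must include $x_h$) is a correct and slightly more careful rendering of the same argument.
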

\begin{proof}
The proof is a standard telescoping argument. 
\begin{align*}
V(\pi_{1:H}^{(1)}; R) - V(\pi_{1:H}^{(2)}) & = V(\pi_{1:H}^{(1)}; R) - V(\pi_{1}^{(1)}\circ\pi_{2:H}^{(2)};R) + V(\pi_{1}^{(1)}\circ\pi_{2:H}^{(2)};R)- V(\pi_{1:H}^{(2)})\\
& = V(\pi_{1}^{(1)}\circ\pi_{2:H}^{(2)};R)- V(\pi_{1:H}^{(2)}) + \EE_{x_2\sim Q_2^{(1)}} \sbr{ V(\pi_{2:H}^{(1)};R) -V(\pi_{2:H}^{(2)}; R)}.
\end{align*}
The result follows by repeating this argument on the second term. 
\end{proof}

The next result is Bernstein's inequality. The proof can be found in a
number of textbooks~\cite[c.f.,][]{boucheron2013concentration}.
\begin{proposition}[Bernstein's inequality]
\label{prop:bernstein}
  If $U_1,\ldots,U_n$ are independent zero-mean random variables with
  $\abr{U_t} \leq R$ a.s., and $\frac{1}{n}\sum_{t=1}^n
  \Var(U_t) \leq \sigma^2$, then, for any $\delta \in (0,1)$, with
  probability at least $1-\delta$ we have
\begin{align*}
\frac{1}{n}\sum_{t=1}^n U_t \leq \sqrt{\frac{2\sigma^2\ln(1/\delta)}{n}} + \frac{2R\ln(1/\delta)}{3n}.
\end{align*}
\end{proposition}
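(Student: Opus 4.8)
The plan is to apply the Chernoff exponential-moment method. Writing $S_n \defeq \sum_{t=1}^n U_t$, for any $\lambda > 0$ Markov's inequality applied to $e^{\lambda S_n}$ gives $\PP[S_n \geq a] \leq e^{-\lambda a}\EE[e^{\lambda S_n}]$, and since the $U_t$ are independent this factorizes as $e^{-\lambda a}\prod_{t=1}^n \EE[e^{\lambda U_t}]$. The whole argument then reduces to controlling the moment generating function of a single bounded, mean-zero variable and optimizing the resulting bound over $\lambda$.

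The key step, which I expect to be the main obstacle, is the per-variable MGF estimate: for $U$ mean-zero with $\abr{U}\leq R$ and $0 < \lambda < 3/R$, one has $\EE[e^{\lambda U}] \leq \exp\rbr{\frac{\lambda^2 \Var(U)/2}{1 - \lambda R/3}}$. To prove this I would expand $\EE[e^{\lambda U}] = 1 + \sum_{k\geq 2}\frac{\lambda^k \EE[U^k]}{k!}$, where the $k=1$ term vanishes since $\EE[U]=0$, bound $\abr{\EE[U^k]} \leq R^{k-2}\EE[U^2] = R^{k-2}\Var(U)$ for $k\geq 2$, and sum the resulting geometric-type series using the elementary factorial bound $k! \geq 2\cdot 3^{k-2}$. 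This yields $\sum_{k\geq 2}\frac{\lambda^k R^{k-2}}{k!} \leq \frac{\lambda^2/2}{1-\lambda R/3}$, and applying $1+x\leq e^x$ gives the claim.

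Combining over $t$ with $\sum_t \Var(U_t) \leq n\sigma^2$ produces $\EE[e^{\lambda S_n}] \leq \exp\rbr{\frac{\lambda^2 n\sigma^2/2}{1-\lambda R/3}}$, so $\PP[S_n\geq a] \leq \exp\rbr{-\lambda a + \frac{\lambda^2 n\sigma^2/2}{1-\lambda R/3}}$. Choosing $\lambda = a/(n\sigma^2 + Ra/3)$, which is admissible since it lies in $(0,3/R)$, makes the exponent equal exactly $-\frac{a^2}{2(n\sigma^2 + Ra/3)}$, recovering the classical Bernstein tail $\PP[S_n \geq a] \leq \exp\rbr{-\frac{a^2}{2(n\sigma^2 + Ra/3)}}$.

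Finally I would invert this tail into the stated high-probability form. Setting the exponent equal to $\ln(1/\delta) =: L$ amounts to requiring $a^2 - \tfrac{2LR}{3}a - 2Ln\sigma^2 \geq 0$; the positive root of this quadratic is at most $\tfrac{2LR}{3} + \sqrt{2Ln\sigma^2}$ by the subadditivity $\sqrt{x+y}\leq \sqrt{x}+\sqrt{y}$. Hence taking $a = \sqrt{2Ln\sigma^2} + \tfrac{2LR}{3}$ suffices to guarantee $\PP[S_n \geq a]\leq \delta$, and dividing through by $n$ recovers exactly $\frac{1}{n}\sum_t U_t \leq \sqrt{\frac{2\sigma^2 \ln(1/\delta)}{n}} + \frac{2R\ln(1/\delta)}{3n}$ with probability at least $1-\delta$.
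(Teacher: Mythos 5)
Your proof is correct and complete: the per-variable MGF bound via the series expansion with $k! \geq 2\cdot 3^{k-2}$, the choice $\lambda = a/(n\sigma^2 + Ra/3)$ yielding the classical tail $\exp\left(-\frac{a^2}{2(n\sigma^2 + Ra/3)}\right)$, and the inversion via the quadratic's positive root all check out. The paper itself offers no proof of this proposition, deferring to standard textbooks; your argument is precisely the canonical Chernoff--Bernstein derivation found there, so there is nothing to compare against.
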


The next proposition concerns learning with square loss, using a
function class $\Gcal$ with parametric metric entropy growth rate. Let
$D$ be a distribution over $(x,y)$ pairs, where $x \in \Xcal$ is an
example space and $y \in [0,1]$. With a sample $\{(x_i,y_i)\}_{i=1}^n
\iidsim D$ and a function class $\Gcal: \Xcal\to [0,1]$, we may
perform empirical risk minimization to find
\begin{align}
\hat{g} \defeq \argmin_{g \in \Gcal} \hat{R}_n(g) \defeq \argmin_{g \in \Gcal} \frac{1}{n}\sum_{i=1}^n (g(x_i) - y_i)^2.\label{eq:erm}
\end{align}
The population risk and minimizer are defined as
\begin{align*}
g^\star \defeq \argmin_{g \in \Gcal} R(g) \defeq \argmin_{g \in \Gcal} \EE_{(x,y) \sim D} (g(x) - y)^2. 
\end{align*}
We assume \emph{realizability}, which means that the Bayes optimal
classifier $x \mapsto \EE_D[y \mid x]$ is in our class, and as this
minimizes the risk over all functions we know that $g^\star(x)$ is
precisely this classifier.

We assume that $\Gcal$ has ``parametric" pointwise metric entropy
growth rate, which means that the pointwise covering number at scale
$\varepsilon$, which we denote $\Ncal(\Gcal,\varepsilon)$ scales as
$\Ncal(\Gcal,\varepsilon) \leq c_0 d \ln(1/\varepsilon)$, for a
universal constant $c_0 > 0$. Recall that for a function class $\Gcal:
\Xcal \to \RR$ the pointwise covering number at scale $\varepsilon$ is
the size of the smallest set $V: \Xcal \to \RR$ such that
\begin{align*}
\forall g \in \Gcal, \exists v \in V: \sup_{x} \abr{g(x) - v(x)} \leq \varepsilon.
\end{align*}
With the above definitions, we can state the main guarantee for the
empirical risk minimizer. 

\begin{proposition}\label{prop:sq-loss}
Fix $\delta \in (0,1)$. Let $\hat{g}$ be the empirical risk minimizer
in~\pref{eq:erm} based on $n$ samples from a distribution $D$. If
$\Gcal$ is realizable for $D$ and has parametric entropy growth rate,
then with probability at least $1-\delta$ we have
\begin{align*}
\EE_{(x,y)\sim D}\sbr{\rbr{\hat{g}(x) - g^\star(x)}^2} \leq \errreg, \mbox{ with } \errreg \defeq \inf_{\varepsilon > 0} \cbr{6\varepsilon + \frac{8\ln(2\Ncal(\Gcal,\varepsilon)/\delta)}{n}}, %
\end{align*}
where $C>0$ is a universal constant.
\end{proposition}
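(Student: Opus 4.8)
The plan is to reduce the bound to a uniform deviation inequality over a finite pointwise cover of $\Gcal$, exploiting the special variance structure of the square loss under realizability to obtain the fast $1/n$ rate rather than the generic $1/\sqrt{n}$. First I would record the standard bias--variance identity: since realizability forces $g^\star(x) = \EE_D[y\mid x]$, the cross term vanishes and the excess risk coincides exactly with the target quantity,
\begin{align*}
R(g) - R(g^\star) = \EE_{(x,y)\sim D}\sbr{\rbr{g(x) - g^\star(x)}^2}.
\end{align*}
Thus it suffices to control the excess risk $R(\hat g) - R(g^\star)$.

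Next, for each $g$ define the centered loss difference $Z_g(x,y) \defeq (g(x)-y)^2 - (g^\star(x)-y)^2$, so that $\EE[Z_g] = R(g)-R(g^\star)$ is precisely what we want to bound. Writing $Z_g = (g-g^\star)(g+g^\star-2y)$ and using $y,g,g^\star \in [0,1]$ delivers the two crucial properties: the range bound $\abr{Z_g}\le 1$ and, more importantly, the self-bounding variance inequality $\Var(Z_g) \le \EE[Z_g^2] \le 4\,\EE[Z_g]$. This coupling --- variance controlled by the mean --- is the engine that produces the fast rate, and it is the one place where realizability and the curvature of the square loss are genuinely used. I would then discretize: let $V$ be a pointwise $\varepsilon$-cover of $\Gcal$ of size $\Ncal(\Gcal,\varepsilon)$, so that for $g$ and its cover element $v$ the bound $\sup_x\abr{g(x)-v(x)}\le\varepsilon$ gives $\abr{Z_g - Z_v}\le 2\varepsilon$ pointwise (hence both in expectation and on the sample, since $Z_g-Z_v=(g-y)^2-(v-y)^2$). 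Applying Bernstein's inequality (\pref{prop:bernstein}) to the centered variables $\cbr{\EE[Z_v]-Z_v}$ for each fixed $v\in V$ and union-bounding over the cover yields, with probability $1-\delta$,
\begin{align*}
\EE[Z_v] \le \frac{1}{n}\sum_{i=1}^n Z_v(x_i,y_i) + \sqrt{\frac{2\Var(Z_v)\ln(2\Ncal(\Gcal,\varepsilon)/\delta)}{n}} + \frac{C\ln(2\Ncal(\Gcal,\varepsilon)/\delta)}{n} \qquad \forall v \in V,
\end{align*}
for an absolute constant $C$ absorbing the bounded-range term.

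Finally I would chain these estimates through $\hat g$ and its cover element $\hat v$. Empirical optimality of the ERM gives $\tfrac1n\sum_i Z_{\hat g}(x_i,y_i)\le 0$, so the discretization bound forces $\tfrac1n\sum_i Z_{\hat v}\le 2\varepsilon$. Substituting $\Var(Z_{\hat v})\le 4\EE[Z_{\hat v}]$ into the Bernstein bound produces a self-bounding inequality of the shape $a \le C_1 + \sqrt{C_2\,a}$ with $a = \EE[Z_{\hat v}]$, $C_1 = 2\varepsilon + O\!\rbr{\tfrac{\ln(2\Ncal/\delta)}{n}}$, and $C_2 = \tfrac{8\ln(2\Ncal/\delta)}{n}$; resolving it via the elementary implication $a \le 2C_1 + C_2$ (obtained by treating it as a quadratic in $\sqrt a$) and then paying one more $2\varepsilon$ to pass from $\hat v$ back to $\hat g$ gives $\EE[Z_{\hat g}] \le 6\varepsilon + \tfrac{8\ln(2\Ncal(\Gcal,\varepsilon)/\delta)}{n}$ up to the stated constants. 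Taking the infimum over $\varepsilon>0$ recovers $\errreg$. The main technical obstacle is organizing the argument so the mean--variance coupling is actually exploited: one must apply Bernstein one-sidedly (dropping absolute values) so that the ERM inequality can be invoked, and then resolve the self-bounding inequality cleanly enough that the variance term contributes at order $1/n$ rather than $1/\sqrt n$.
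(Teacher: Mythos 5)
Your proposal is correct and follows essentially the same route as the paper's proof: the same centered loss difference $Z_g$ with the realizability-driven identity $\EE[Z_g]=\EE[(g-g^\star)^2]$ and variance bound $\Var(Z_g)\le 4\,\EE[Z_g]$, Bernstein plus a union bound over a pointwise $\varepsilon$-cover, and the ERM inequality combined with a self-bounding step (the paper resolves it via $2\sqrt{ab}\le a+b$ rather than solving the quadratic in $\sqrt{a}$, a cosmetic difference). The only discrepancies are in unimportant constants (e.g., the Lipschitz factor and range of $Z_g$), which are absorbed into the stated bound.
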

The result here is a standard square loss excess risk bound, and it is
perhaps the simplest such bound for well-specified infinite function
classes. Sharper guarantees based on using empirical covering numbers,
combinatorial parameters~\cite{alon1997scale}, or
localization~\citep{liang2015learning}, are possible, and completely
composable with the rest of our arguments. In other words, if a bound
similar to the one in~\pref{prop:sq-loss} is achievable under different
assumptions (e.g., different measure of statistical complexity), we
can incorporate it into the proof of~\pref{thm:main-theorem}. We
state, prove, and use this simple bound to keep the arguments self
contained.

\begin{proof}
Define
\begin{align*}
Z_i(g) = (g(x_i) - y_i)^2 - (g^\star(x_i)-y_i)^2.
\end{align*}
Using the realizability assumption that $\EE [y \mid x ] =
g^\star(x)$, it is easy to verify that
\begin{align*}
\EE[Z_i(g)] = \EE[g(x)^2 - g^\star(x)^2 - 2y(g(x)- g^\star(x))] = \EE[(g(x) - g^\star(x))^2].
\end{align*}
The variance is similarly controlled:
\begin{align*}
\Var[Z_i(g)] &\leq \EE[Z_i(g)^2] = \EE[(g(x) + g^\star(x)- 2y)^2 (g(x) - g^\star(x)^2]\\
& \leq 4 \EE[(g(x) - g^\star(x)^2] = 4 \EE [Z_i(g)],
\end{align*}
where we use that $y, g(\obs), g^\star(\obs)$ are in $[0,1]$.
Therefore, via Bernstein's inequality (\pref{prop:bernstein}), with
probability at least $1-\delta$ we have
\begin{align}
\abr{\frac{1}{n}\sum_i Z_i(g) - \EE[Z(g)]} \leq 2 \sqrt{ \frac{\EE[Z(g)]\ln(2/\delta)}{n}} + \frac{2\ln(2/\delta)}{n}.\label{eq:bernstein}
\end{align}
For the uniform convergence step, we show that $Z_i(g)$ is a Lipschitz
function in $g$:
\begin{align*}
\abr{Z_i(g)- Z_i(g')} &= \abr{(g(x_i) - g'(x_i))^2(g(x_i) + g'(x_i) - 2y_i)^2}  \leq 4\abr{g(x_i) - g'(x_i)},
\end{align*}
where we again use that $y_i, g(\obs_i)$ and $g'(\obs_i)$ are in $[0, 1]$.

Now let $V$ be a pointwise cover of $\Gcal$ at scale $\varepsilon$, so
that for any $g \in \Gcal$ there exits $v_g \in V$ such that:
\begin{align*}
\sup_{x} \abr{v_g(x) - g(x)} \leq \varepsilon.
\end{align*}
By our metric entropy assumptions, we have that $|V| \leq
\Ncal(\Gcal,\varepsilon) \leq \varepsilon^{-c_0 d}$. For any $g \in \Gcal$
we have
\begin{align*}
\frac{1}{n}\sum_{i=1}^n Z_i(g) &\leq \varepsilon + \frac{1}{n}\sum_{i=1}^{n} Z_i(v_g) \leq \varepsilon + \EE[Z(v_g)] + 2\sqrt{\frac{\EE[Z(v_g)]\ln(2\Ncal(\Gcal,\varepsilon)/\delta)}{n}} + \frac{2\ln(2\Ncal(\Gcal,\varepsilon)/\delta)}{n}\\
& \leq \varepsilon + 2\EE[Z(v_g)] + \frac{3\ln(2\Ncal(\Gcal,\varepsilon)/\delta)}{n}
 \leq 3\varepsilon + 2\EE[Z(v_g)] + \frac{3\ln(2\Ncal(\Gcal,\varepsilon)/\delta)}{n}.
\end{align*}
Here we are applying~\pref{eq:bernstein} uniformly over the pointwise
cover,  using the fact that $2\sqrt{ab} \leq a + b$, and using the
pointwise covering property. Similarly we can control the other tail
\begin{align*}
\EE[Z(g)] & \leq \varepsilon + \EE[Z(v_g)] \leq \varepsilon + \frac{1}{n}\sum_{i=1}^n Z_i(v_g) + 2\sqrt{\frac{\EE[Z(v_g)]\ln(2\Ncal(\Gcal,\varepsilon)/\delta)}{n}} + \frac{2\ln(2\Ncal(\Gcal,\varepsilon)/\delta)}{n}\\
& \leq \frac{5}{2}\varepsilon + \frac{1}{n}\sum_{i=1}^nZ_i(g) + \frac{1}{2}\EE[Z(g)] + \frac{4\ln(2\Ncal(\Gcal,\varepsilon)/\delta)}{n}
\end{align*}
Re-arranging and putting the two bounds together, the following holds simultaneously for all $g \in \Gcal$, with probability at least $1-\delta$
\begin{align*}
\frac{1}{2}\EE[Z(g)] - 3\varepsilon - \frac{4\ln(2\Ncal(\Gcal,\varepsilon)/\delta)}{n} \leq \frac{1}{n}\sum_{i=1}^nZ_i(g) \leq 2\EE[Z(g)] + 3\varepsilon + \frac{4\ln(2\Ncal(\Gcal,\varepsilon)/\delta)}{n}.\tag*\qedhere
\end{align*}
\end{proof}

\section{Experimental Setup and Optimization Details}
\label{app:appendix-experiment}

\paragraph{Learning Details for $\homingalg$.} We describe the details of the oracle and hyperparameter below:

\noindent\textit{Oracle Implementation:} We implement the $\cpeoracle$ subroutine by performing supervised binary classification instead of regression. Formally, we train the model $p_w(. \mid \obs', \action, \obs)$ on a training data $\{(\obs_i, \action_i, \obs'_i, y_i)\}_{i=1}^n$ as shown below:
\begin{equation*}
\max_{w} \frac{1}{n}\sum_{i=1}^n \ln p_w\left(y_i \mid \obs_i, \action_i, \obs'_i\right)
\end{equation*}
We use Adam optimization with mini batches. We separate a validation set from the training data. We train the model for a maximum number of epochs and use the model with the best validation performance. We first train the models without bottleneck and use it for initialization. The two training procedures are identical barring the difference in the models.

We learn policies for the $\cboracle$ subroutine by training a model to predict the immediate reward using mean squared loss error. This is equivalent to one-step Q-learning. Formally, we train a model $Q_\theta: \observations \times \actions \rightarrow \mathbb{R}$ on training data $\{(\obs_i, \action_i, r_i)\}_{i=1}^m$ as shown below:
\begin{equation*}
\max_{\theta} \frac{1}{m} \sum_{i=1}^m \left( Q_\theta(\obs_i, \action_i) - r_i \right)^2.
\end{equation*}
The policy corresponding to $Q_\theta$ deterministically takes the action $\arg\max_{\action \in \actions} Q_\theta(\obs, \action)$. We use Adam optimization with mini batches and terminate after a fixed number of epochs. We do not use a validation dataset for $\cboracle$ and use the policy model learned after the final epoch.

We use the two empirical optimizations described in~\pref{sec:complexity-analysis} which provide significant computational and statistical advantage.  
Hyperparameter values for the diabolical combination lock problem can be found in~\pref{tab:homer-hyperparam}.  We use the PyTorch library (version 1.1.0) for implementing $\homingalg$.\footnote{https://pytorch.org/} We use the standard mechanism provided by PyTorch for initializing the parameters.

\begin{table}[h!]
  \caption{$\homingalg$ Hyperparameters}
  \label{tab:homer-hyperparam}
  \centering
  \begin{tabular}{ll}
    \toprule
    Hyperparameter & Value \\
    \hline
    Learning Rate & $0.001$ (for both $\cpeoracle$ and $\cboracle$) \\
    Batch size & $32$ (for both $\cpeoracle$ and $\cboracle$) \\
    $\nreg$     & We sample $10,000$ observed transitions from which \\
    & we generate additional $10,000$ imposter transitions. \\
    $\npol$     & $20,000$  \\
    $N$ (capacity of a backward state abstraction model)  & 2 \\
    $M$ (capacity of a forward state abstraction model)  & 3 \\
    Maximum epochs for $\cpeoracle$ & 200  \\
    Maximum epochs for $\cboracle$ & 50 \\
    Validation data size ($\cpeoracle$) & 20\% of the size of training data for $\cpeoracle$. \\
    Hidden layer size for $\Fcal_N$ & 56 \\
    Gumbel-Softmax temperature & 1 \\
    \bottomrule
  \end{tabular}
\end{table}

\paragraph{Learning Details for $\du$.} We use the code made publicly available by the authors.\footnote{https://github.com/Microsoft/StateDecoding}  $\du$ uses a model for predicting the previous state and action and performs $k$-means clustering on the predicted probabilities. We used a linear model which is representationally sufficient for the diabolical combination lock problem. We used the hyperparameter setting  recommended by the authors who evaluated on a combination lock problem similar to ours. One key exception was the the data collection hyperparameter ($n$) used by their state decoding algorithm. We tried a value of $n$ in $\{200, 1000, 2000\}$. We list the hyperparameter choice in~\pref{tab:du-hyperparam}.

\begin{table}[h!]
  \caption{$\du$ Hyperparameters}
  \label{tab:du-hyperparam}
  \centering
  \begin{tabular}{ll}
    \toprule
    Hyperparameter & Value \\
    \hline
    Learning Rate & $0.03$  \\
    $n$ & 200 \\
    Number of clusters for $k$-means &  3 \\
    \bottomrule
  \end{tabular}
\end{table}

\paragraph{Learning Details for $\aac, \ppo, \aacrnd$ and $\ppornd$.} We train each baseline for a maximum of $10$ million episodes. All baseline models use fully-connected, 2-layer MLPs with 64 hidden units and ReLU non-linearities. For each baseline, we used the RMSProp optimizer \cite{RMSProp} and tuned learning rates over $\{0.01, 0.001, 0.0001\}$. For the methods which use the RND bonus, the random networks were 2-layer MLPs with 100 hidden units and ReLU non-linearities. We found that tuning the intrinsic reward coefficient $\lambda_I$ was important to obtain good performance, and performed an initial search over $\lambda_I \in \{1, 10, 100, 500\}$ for $H \in \{6, 12, 25\}$ for $\aacrnd$ and found that $\lambda_I = 100$ worked best across different horizon lengths. We then used this value for all subsequent experiments. We also experimented with higher entropy bonuses for $\aac$ for $H=6$, but this did not yield any improvement so we kept the default value of $0.01$ for subsequent experiments. We used the $\ppo$ and $\aac$ implementations provided in~\cite{deeprl} and kept other hyperparameters fixed at their default values. We list the hyperparameter values for $\aac$ and $\aacrnd$ in~\pref{tab:aac-hyperparam} and  for $\ppo$ and $\ppornd$ in~\pref{tab:ppo-hyperparam}.

\begin{table}[h!]
  \caption{$\aac$ and $\aacrnd$ Hyperparameters}
  \label{tab:aac-hyperparam}
  \centering
  \begin{tabular}{ll}
    \toprule
    Hyperparameter & Value \\
    \hline
    Learning Rate & $0.001$ \\
    Rollout length     & $H$ \\
    $\gamma$     & $0.99$  \\
    $\tau_\mathrm{GAE}$ & 0.95 \\
    Gradient clipping & 0.5 \\
    Entropy Bonus & 0.01 \\
    Extrinsic Reward coefficient $\lambda_I$     & $1.0$ \\
    Intrinsic Reward coefficient $\lambda_E$ (for $\aacrnd$)     & $100$  \\
    \bottomrule
  \end{tabular}
\end{table}

\begin{table}[h!]
  \caption{$\ppo$ and $\ppornd$ Hyperparameters}
  \label{tab:ppo-hyperparam}
  \centering
  \begin{tabular}{ll}
    \toprule
    Hyperparameter & Value \\
    \hline
    Learning Rate & $0.001$ \\
    Rollout length     & $H$ \\
    $\gamma$     & $0.99$  \\
    $\tau_\mathrm{GAE}$ & 0.95 \\
    Gradient clipping & 5 \\
    Entropy Bonus & 0.01 \\
    Optimization Epochs & 10 \\
    Minibatch size & 160 \\
    Ratio clip & 0.2 \\
    Extrinsic Reward coefficient $\lambda_I$     & $1.0$ \\
    Intrinsic Reward coefficient $\lambda_E$ (for $\ppornd$)     & $100$  \\
    \bottomrule
  \end{tabular}
\end{table}

\end{document}